\def\eqref#1{equation~\ref{#1}}
\def\1{\bm{1}}
\def\vb{{\bm{b}}}
\def\vg{{\bm{g}}}
\def\vk{{\bm{k}}}
\def\vq{{\bm{q}}}
\def\vv{{\bm{v}}}
\def\vw{{\bm{w}}}
\def\vx{{\bm{x}}}
\def\vy{{\bm{y}}}
\def\vz{{\bm{z}}}
\def\mA{{\bm{A}}}
\def\mC{{\bm{C}}}
\def\mG{{\bm{G}}}
\def\mI{{\bm{I}}}
\def\mK{{\bm{K}}}
\def\mW{{\bm{W}}}
\def\mX{{\bm{X}}}
\def\mY{{\bm{Y}}}
\DeclareMathAlphabet{\mathsfit}{\encodingdefault}{\sfdefault}{m}{sl}
\SetMathAlphabet{\mathsfit}{bold}{\encodingdefault}{\sfdefault}{bx}{n}
\newcommand{\layer}{L}
\newcommand{\fixed}{\mathcal{I}}
\newcommand{\stage}{k}
\newcommand{\funcsqrt}{h_{sqrt}}
\newcommand{\RR}{\mathbb{R}}
\newcommand{\dims}{d}
\newcommand{\hid}{m}
\newcommand{\bernoulli}{\zeta}
\newcommand{\seqlen}{N}
\newcommand{\relu}{\sigma_{relu}}
\newcommand{\vgamma}{\mathbf{\gamma}}
\newcommand{\mlpfun}{f^{mlp}}
\newcommand{\attnfun}{f^{attn}}
\newcommand{\lnfun}{f^{LN}}
\newcommand{\mlplnfun}{\lnfun_{mlp}}
\newcommand{\attnlnfun}{\lnfun_{attn}}
\newcommand{\outlnattn}{\vy^{attnln}}
\newcommand{\outattn}{\vy^{attn}}
\newcommand{\outattnblock}{\vy^{attnblock}}
\newcommand{\outlnmlp}{\vy^{mlpln}}
\newcommand{\outmlp}{\vy^{mlp}}
\newcommand{\distrib}{D}
\newcommand{\embed}{\Phi}
\newcommand{\vocab}{V}
\newcommand{\netloss}{\mathcal{L}}
\newcommand{\normal}{\mathcal{N}}
\newcommand{\head}{H}
\newcommand{\expect}{\mathbb{E}}
\newcommand{\norm}[1]{\left\|#1\right\|}
\newcommand{\abs}[1]{\left|#1\right|}
\newcommand{\layerdrop}{subnetwork}
\newcommand{\method}{\textsc{RaPTr}}
\newcommand{\liploss}{\mu_{\netloss}}
\newcommand{\project}{\Pi}
\newcommand{\sphere}{\mathbb{S}}
\newcommand{\bertbase}{\textsc{BERT-Base}}
\newcommand{\bertlarge}{\textsc{BERT-Large}}
\newcommand{\progressivelayerdrop}{\text{PLD}}
\theoremstyle{plain}
\newtheorem{theorem}{Theorem}[section]
\newtheorem{lemma}[theorem]{Lemma}
\newtheorem{corollary}[theorem]{Corollary}
\theoremstyle{definition}
\newtheorem{definition}[theorem]{Definition}
\newtheorem{assumption}[theorem]{Assumption}
\theoremstyle{remark}
\definecolor{byzantine}{rgb}{0.74, 0.2, 0.64}
\newcommand{\rebuttal}[1]{#1}
\newcommand{\apnote}[1]{{\color{red}[]}}%
\newcommand{\ns}[1]{{\color{orange}[]}}%
\newcommand{\sm}[1]{{\color{pink}[]}}%
\newcommand{\snote}[1]{{\color{blue}[]}}%
\newcommand{\srnote}[1]{{\color{blue}[]}}%
\newcommand{\sknote}[1]{{\color{magenta}[]}}%
\author{Abhishek Panigrahi$^{*,  \dagger, \beta}$, Nikunj Saunshi\footnote{Equal Contribution}  
 $^{,\alpha}$, Kaifeng Lyu\footnote{Work done during Google internship}  $^{,\beta}$ , Sobhan Miryoosefi$^{\alpha}$, \\ Sashank Reddi$^{\alpha}$, Satyen Kale$^{\alpha}$, Sanjiv Kumar$^{\alpha}$ \\
$^{\beta}$Department of Computer Science, Princeton University , \\ $^{\alpha}$ Google Research, New York \\
\texttt{ap34@princeton.edu}, \texttt{nsaunshi@google.com}
}
\title{Efficient Stagewise Pretraining via Progressive Subnetworks}
\begin{document}

\maketitle
\begin{abstract}
Recent developments in large language models have sparked interest in efficient pretraining methods. Stagewise training approaches to improve efficiency, like gradual stacking and layer dropping \citep{reddi2023efficient, zhang2020accelerating}, have recently garnered attention. 
The prevailing view suggests that stagewise \emph{dropping} strategies, such as layer dropping, are ineffective, especially when compared to stacking-based approaches. This paper challenges this notion by demonstrating that, with proper design, dropping strategies can be competitive, if not better, than stacking methods. 
Specifically, we develop a principled stagewise training framework, \emph{progressive {\layerdrop} training}, which only trains subnetworks within the model and progressively increases the size of subnetworks during training, until it trains the full network. We propose an instantiation of this framework --- \textbf{Ra}ndom \textbf{P}art \textbf{Tr}aining ({\method}) --- that selects and trains only a random subnetwork (e.g. depth-wise, width-wise) of the network at each step, progressively {\em increasing} the size in stages. We show that this approach not only generalizes prior works like layer dropping but also fixes their key issues. Furthermore, we establish a theoretical basis for such approaches and provide justification for (a) {\em increasing} complexity of subnetworks in stages, conceptually diverging from prior works on layer dropping, and (b) {\em stability} in loss across stage transitions in presence of key modern architecture components like residual connections and layer norms. Through comprehensive experiments, we demonstrate that {\method} can significantly speed up training of standard benchmarks like BERT and UL2, up to 33\% compared to standard training and, surprisingly, also shows better downstream performance on UL2, improving QA tasks and SuperGLUE by 1.5\%; thereby, providing evidence of better inductive bias.

\end{abstract}

\vspace{-0.25em}
\section{Introduction}\label{sec:introduction}
\vspace{-0.25em}

\looseness-1Large network based language models (e.g. Transformers) have revolutionized the field of NLP. Intriguingly, these language models have demonstrated remarkable \emph{emergent abilities} that only begin to manifest at large scales~\citep{wei2022emergent,schaeffer2023are}. However, training such large models is usually very slow and resource intensive~\citep{brown2020language,touvron2023llama,chowdhery2022palm}.
This has sparked interest in efficient training of large models, necessitating the development of new algorithms and paradigms for efficient pretraining. Traditionally, this was accomplished by designing better optimization algorithms (e.g., \citep{chen2023symbolic,gupta2018shampoo,shazeer2018adafactor,liu2023sophia}) that require fewer training steps to reduce the loss.
Recently, other paradigms based on \emph{stagewise training}, especially on depth, have garnered interest. Two such contrasting paradigms are: (a) layer stacking and (b) layer dropping.

Stacking based approaches have been studied extensively since \citet{chen2021bert2bert} applied it for BERT. Progressive~\citep{gong2019efficient} and gradual stacking~\citep{reddi2023efficient} are layer stacking methods that train large models in stages, starting with a small network and gradually growing the network size by stacking a subset of layers onto itself from previous stages. 
Although effective in reducing FLOPs and training time, their performance is sensitive to stacking schedules and require careful tuning. 
Furthermore, since the model grows gradually, it is not possible to assess the full model performance during earlier stages (i.e., it is not an anytime algorithm).
Additionally, using small model with way fewer parameters for part of the training can possibly hamper model quality (e.g. long term memory \citep{geva-etal-2021-transformer}), especially for single epoch training where each data sample is seen once during training. In our experiments(\textsection{\ref{sec:ul2_1b}}), we observe  that the pretraining loss and downstream performance may even be worse than baseline (full-model) training.

\looseness-1Layer dropping, on the other hand, maintains the model's identity but drops layers at random during training. This area of study remains largely unexplored. The closest work for training efficiency is progressive layer dropping (PLD) \citep{zhang2020accelerating}, a \emph{heuristic approach} where FLOPs are saved by increasingly {\em dropping more layers} as training proceeds, thus, decreasing its overall capacity over time. Such a strategy to improve training efficiency typically comes at the expense of quality, and is generally worse than stacking  approaches~\citep{kaddour2023no}.
Through theoretical analysis and experiments, we first show that there is a fundamental problem with existing dropping strategies like PLD since dropping more layers later during training can be detrimental to \emph{learning complex features}. 
Other instantiations~\citep{fan2019reducing,zhang2019fixup,liu2023dropout} do not achieve training efficiency, as they were proposed as a regularization strategy or for improving inference efficiency. 
This naturally prompts the central question of the paper:
\begin{quote}
    \emph{Is it possible to design principled and robust stagewise 
    dropping based techniques that are competitive or better than stacking techniques?} 
\end{quote}
\looseness-1 In this paper, we propose a novel generalization of stagewise dropping approaches called {\em progressive subnetwork training} that \textit{fundamentally addresses the above concerns}, and provides an affirmative answer to this question. The key components of this framework are:
\begin{enumerate}
    \item[(P1)] maintain a common base model of interest and train {\em subnetworks of varying complexities} within the model. 
    \item[(P2)] progressively {\em increase} the complexity of subnetworks in each stage to explicitly impose this simple-to-complex behavior. 
\end{enumerate}
 For (P1), one can use very general subnetworks in this framework (e.g. subset of layers or layers with smaller width). This strictly generalizes layer dropping techniques where just layers are dropped to derive efficiency. Furthermore, (P2) is motivated by the theoretically and empirically studied phenomenon that gradient-based training learns functions of increasing complexity over time~\citep{kalimeris2019sgd,abbe2022merged} and starkly diverges from earlier layer dropping ideas where more layers are dropped as training proceeds. As we shall see soon, even a simple instantiation  of this idea can already be competitive or better than stacking approaches that have been extensively studied. %

Given the above framework, a natural question arises: how do we select the subnetworks? As a concrete instantiation, we choose subnetworks to be  \emph{random} parts of the network, and in each stage we perform forward and backward passes over them. The size of the subnetworks is gradually increased over time with a fixed schedule.
This approach, which we call \textbf{Ra}ndom \textbf{P}art \textbf{Tr}aining (\method), reduces the total number of FLOPs and wall-clock time in the initial stages, but unlike stacking, every parameter of the full model contributes to the training loss at all stages, and it allows us to track the full model throughout training.
In this paper, we demonstrate that this simple, yet powerful, strategy not only has a solid theoretical foundation, but also shows strong experimental results for BERT and UL2 pretraining, both in terms of training efficiency and improving model quality. We summarize the main contributions of the paper below:

\begin{itemize}[leftmargin=*]
    \item We introduce a novel stagewise training called progressive subnetworks that generalizes prior dropping strategies in \textsection{\ref{sec:grad_layerdrop}}. 
Specifically we explore Random Part training ({\method}) that trains a part comprised of a random subnetwork, with the size of the subnetwork progressively increasing in stages. We specifically  study variants of {\method} where the random subnetwork selection is restricted to single model axis e.g. depth (\cref{sec:raptr}) or intermediate MLP width (\cref{sec:beyond-depth}). We leave the combination of multiple axes for future investigation. 

    \item \looseness-1Through analysis on polynomial data in \textsection{\ref{sec:analysis_poly}}, we demonstrate a fundamental problem with prior dropping techniques like progressive layer dropping -- dropping more layers towards the end can hurt the models' ability to capture complex correlations in the data. Furthermore in this polynomial setting, we show, both theoretically and empirically,  that  \method{} can effectively learn higher order components of the underlying ground-truth function much better than progressive layer dropping. This provides a strong justification for progressively increasing complexity of subnetworks in stages. The polynomial setting can potentially aid future research in efficient training.
    
    \item  \looseness-1We conduct extensive experiments on BERT \citep{devlin2018bert} and UL2 language models  \citep{tay2022ul2} to compare a depth instantiation of \method{} with dropping techniques (PLD), and gradual stacking and its improved variants.
    On BERT-Base (\textsection{\ref{sec:bert}}), {\method} demonstrates competitive performance to state-of-the-art stacking strategies at similar FLOPs, while being better than baseline\footnote{We refer to full-model training as baseline training in our paper} training with $1.33\times$ fewer FLOPs. 
    For UL2-$1.6B$ language model (\textsection{\ref{sec:ul2_1b}}), {\method} matches the pretraining perplexity of baseline training by requiring $1.2\times$ fewer FLOPs.
    Furthermore, despite having the same perplexity, \method{} has much better performance on a suite of $12$  downstream benchmarks, evaluated in $1$-shot and $5$-shot in-context settings. This suggests a desirable inductive bias of \method{} that improves its generalization abilities beyond perplexity.
    
    \item We discuss a novel implementation strategy for subnetwork training translates FLOPs improvements to wall-clock speedups. This is particularly valuable for distributed training where naive conditional dropping implementations either fail or do not speedup training (\cref{sec:efficient_raptr_impl}).
    
    \item \looseness-1 We establish the \emph{first} theoretical basis for stagewise training based on dropping of layers that studies the behavior at stage transitions. In particular, we characterize stability conditions under which {\method} yields smooth transition in loss across stage transitions (\textsection{\ref{sec:theory}}). We connect this to the idea of loss preservation \citep{chen2021bert2bert} and show that \method{} has an even {\em stronger property of loss improvement} in each stage. Through an illustrative setting with linear networks, we theoretically demonstrate the importance of modern architecture components, like layernorms and residual connections, for such stability.

\end{itemize}

\section{Progressive {\layerdrop} Training} \label{sec:grad_layerdrop}

{\bf Notation.} We use $[n]$ denotes the set $\{1, 2, \dots, n\}$. $a_{1:k}$ is used to denote a sequence of $k$ scalars $a_1, \dots, a_k \in \RR$, and $\vx_{1:k}$ to denote a sequence of $k$ vectors $\vx_1, \dots, \vx_k \in \RR^{\dims}$. We alternately also use $\mX \in \RR^{n \times \dims}$ to denote a matrix containing a sequence  $\vx_{1:n}$ as rows. 

We consider the setting where the goal is to
learn an $\layer$-layer sequence-to-sequence neural network. (e.g. standard Transformer network).

\begin{definition}[$\layer$-layer sequence-to-sequence model] \label{def:functiondef}
    Let $f_{1}, \cdots, f_{\layer}: \RR^{\seqlen \times \dims} \to \RR^{\seqlen \times \dims}$ denote a set of $\layer$ sequence-to-sequence functions 
    Then, $F : \RR^{\seqlen \times \dims} \times \RR^{\layer} \to \RR^{\seqlen \times \dims}$  with a sequence input $\mX \in \RR^{\seqlen \times \dims}$ and $\layer$ scalars $\alpha_{1:\layer}$, outputs a sequence $\mY$ after a recursive computation with intermediate sequences $\mY^{(1)}, \cdots, \mY^{(\layer-1)}, \mY^{(\layer)}$, given by
    \begin{align*}
        \mY^{(\ell)} = \mY^{(\ell-1)} + \alpha_j f_{j} ( \mY^{(\ell-1)} ) \text{ for all } 1 \le \ell \le \layer.
    \end{align*}  
    Here $\mY^{(0)}$ denotes $\mX$ for simplicity and output $\mY$ is equal to $\mY^{(\layer)}$.
\end{definition}
Standard model training is represented by the backbone function $F$, where $\alpha_i=1$ for $i \in [L]$. For simplicity, given an input sequence $\mX$, we will use $F(\mX)$ as the output and $\mY^{(1)}, \mY^{(2)}, \cdots, \mY^{(\layer)}$ as the intermediate layer outputs of $F$ under standard model training, unless specified otherwise. The output of the model is passed through a function $\head: \RR^{\dims} \to \RR^{\vocab}$ that projects the model output into $V$-dimensional logits, where $V$ is the vocabulary size.
For an example $\mX$, cross-entropy loss based on true labels is denoted as $\netloss(\head\circ F(\mX))$ (or $\netloss(F(\mX))$ when $\head$ is clear from context). Note that this notation hides the true labels for the ease of exposition. The final loss is then defined as $\netloss(F) = \expect_{\mX\sim\mathcal{D}}[ \netloss(F(\mX))]$ for the input distribution $\mathcal{D}$. 

Given such a setting, \emph{progressive subnetwork training}, a stagewise training framework, consists of two crucial components:
\begin{enumerate}
    \item {\bf Subnetwork selection \& rescaling:} At each iteration, we select a (possibly random) subnetwork of the $\layer$-layer neural network. The forward and backward passes are \rebuttal{computed} based on this subnetwork network along with appropriate scaling (e.g. Section~\ref{sec:raptr})).
    \item {\bf Progressive subnetworks:} In stages, we progressively increase the size of the subnetworks starting from small subnetworks and end with the complete network in the final stage.
\end{enumerate}

We note that this framework is quite general. To make it more concrete, in the following section, we provide a simple but powerful instantiation of the framework based on \emph{random paths}.

\subsection{Random Path Training: {\method}} 
\label{sec:raptr}

One simple approach to select a subnetwork is by skipping a few layers and choosing a \emph{path}. More concretely, given a network $F$ to be trained, we pick a random subset of layers for the forward pass and bypassing the rest of the layers using the residual connections.
Let $p$ denotes the probability of randomly selecting a layer {\layerdrop} and $\fixed$ denotes a subset of layers that are always included, i.e. they are never bypassed. We define the following before describing $\method$ in more detail.

\begin{definition}[$(p, \fixed)$-subnetwork]
\label{def:subnetwork}
Let $\bernoulli_{1:\layer}$ be Bernoulli samples, with $\bernoulli_i = 1$ for all $i \in \fixed$, while $\bernoulli_i \sim B(p)$ for all $i \not\in \fixed$.
The set of all layers for which $\zeta_i = 1$ represents a random {\layerdrop} while layers with $\zeta_i = 0$ are bypassed. The output of the selected {\layerdrop} on a sequence input $\mX$ is equivalent to the output of the model given by $F (\mX, \bernoulli_{1:\layer})$.

\end{definition}

\begin{minipage}{0.45\textwidth}
\begin{center}
    \includegraphics[width=\textwidth]{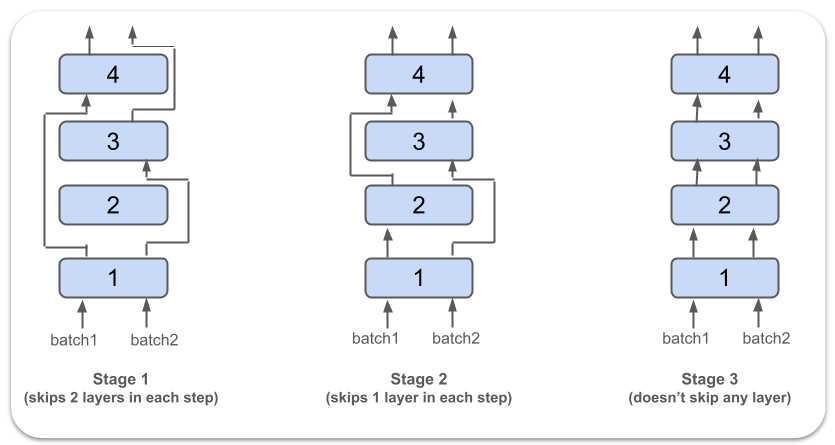}
    \captionof{figure}{Pictorial description of stagewise \method{} where the number of layers being skipped progressively decreases over stages.}
    \label{fig:raptr_desc}
\end{center}
\end{minipage}\hfill
\begin{minipage}{0.45\textwidth}

\scalebox{.9}{
\begin{algorithm}[H]
\small
\caption{$\stage$-stage {\method}}\label{alg:layerdrop}

\begin{algorithmic} 
\REQUIRE Blocks $f_{\theta_1}, \cdots, f_{\theta_L}$, Schedule $ T_{1:\stage} \times (p_{1:\stage}, \fixed_{1:\stage})$, total steps $T$, data distribution $\distrib$. 
\STATE Initialize $\theta_1, \cdots, \theta_L$, $T_{\stage+1}=T$.
\FOR{$s=1 \to \stage$}
\FOR{$t=T_s \to T_{s+1}$}
\STATE Sample batch $(\mX, \mY) \sim \distrib$.
\STATE Sample $\bernoulli_i \sim B(p_s)$ for $i \not\in \fixed_s$, set $\bernoulli_i = 1$ for $i \in \fixed_s$.
\STATE Set $\mY^{(0)} \leftarrow \mX$
\FOR{$j = 1 \to \layer$}
\STATE $\mY^{(j)} \leftarrow \mY^{(j-1)}$
\STATE \textbf{if} ~ $\bernoulli_{j}$: $\mY^{(j)} \leftarrow \mY^{(j)} + f_{\theta_j}( \mY^{(j-1)} )$
\ENDFOR
\STATE Compute loss with $\mY^{(L)}$ and take a optimizer step.
\ENDFOR 
\ENDFOR
\end{algorithmic}
\end{algorithm}
}
\end{minipage}

The pseudo-code for $\method$ is provided in Algorithm~\ref{alg:layerdrop}.
On a high level, the total training is split into $k$ stages. Each stage $s$ uses $(p_s, \fixed_s)$-{\layerdrop} between steps $T_{s}$ and $T_{s+1}$. We denote this schedule by $T_{1:\stage} \times (p_{1:\stage} ,\fixed_{1:\stage})$.  Importantly, we progressively increase the random {\layerdrop} selection pattern across stages i.e., the expected length of the selected random subnetworks is increased progressively towards full model training. This can be achieved by either increasing the probability of including each layer in random {\layerdrop} or fixing more layers in $\fixed_{\cdot}$ or both. More formally, we use schedules $T_{1:\stage} \times (p_{1:\stage}, \fixed_{1:\stage}) $  that satisfies: (a) $p_s \le p_{\Tilde{s}}$, and (b) $\fixed_s \subseteq \fixed_{\Tilde{s}}$ for all $1 \le s \le \Tilde{s} \le \stage$.

Note that each training step in {\method} involves computing forward and backward passes only on random subnetwork. In expectation, training a $(p, \fixed)$ random {\layerdrop} will require $\frac{\abs{\fixed} + (\layer - \abs{\fixed})p}{\layer}$ FLOPs relative to standard training that uses all $\layer$ layers. By using $p_s \ll 1$ for most part of training, $\method$ can significantly increase the training efficiency.

\section{Illustration: Problems with earlier subnetwork training instantiations}
\label{sec:analysis_poly}

\begin{figure*}[t]%
\centering
    \begin{subfigure}{0.23\textwidth}
    \centering    \includegraphics[width=\textwidth]{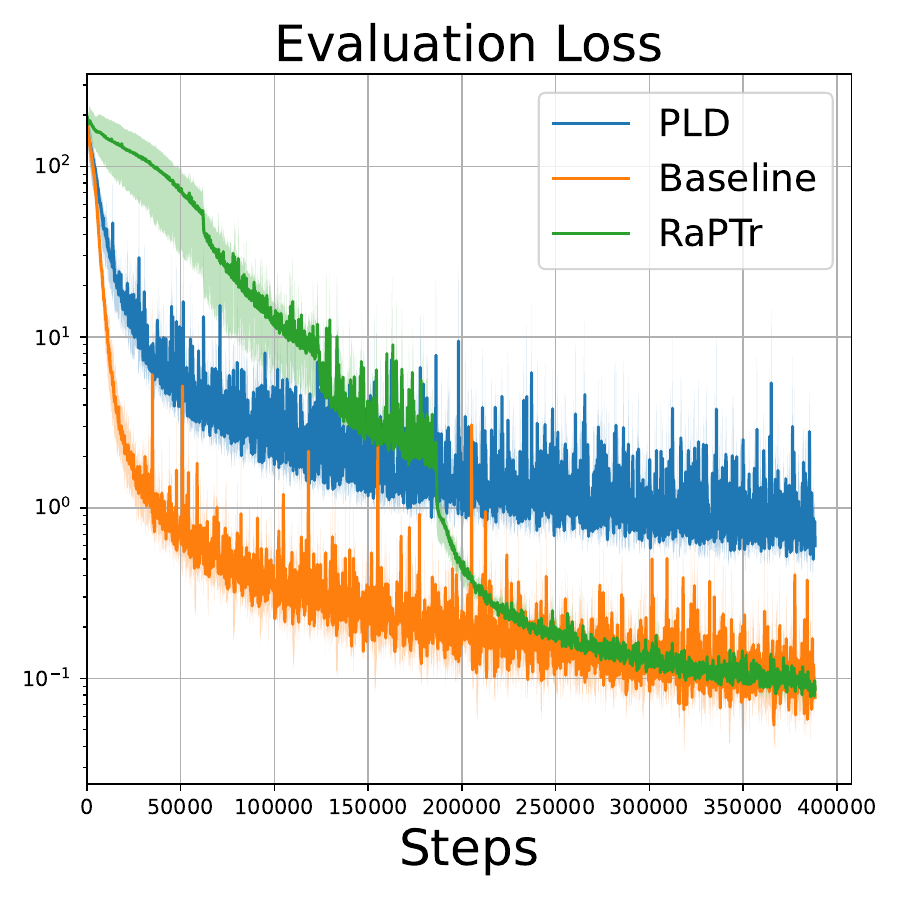}
    \label{fig:adam_comp_acc}
    \end{subfigure}\hfill
    \centering
    \begin{subfigure}{0.69\textwidth}
    \centering    \includegraphics[width=\textwidth]{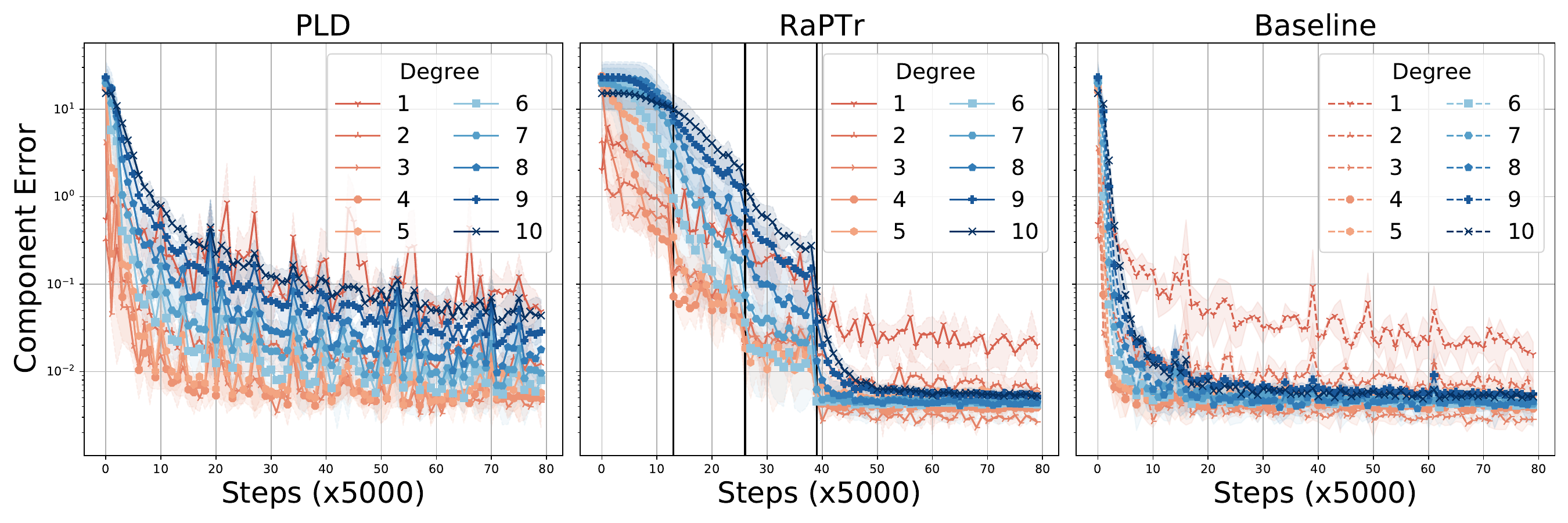}
    \label{fig:adam_comp}
    \end{subfigure}\hfill
    \caption{\looseness-1Evaluation loss (left) and component error (3 on the right) on basis polynomials of different degrees for a $20$ layer residual network trained with different methods. Labels are generated from a composition of polynomials of degrees $1$ to $10$ (Eq.\ref{eq:true_label}). The schedules for \method{} and \progressivelayerdrop{} are selected to have 20\% fewer FLOPs compared to baseline (phase transitions for \method{} have been marked with dark vertical lines). (more details in \textsection{\ref{sec:toy_setting}}). 
    Observations: (a) \method{} reaches same evaluation loss of baseline, while \progressivelayerdrop{} performs much worse. (b) \method{} learns lower order terms faster and picks up higher order terms in the later stages. \progressivelayerdrop{} is worse at capturing higher degree terms owing to its reduced expressivity towards the end. }
    \label{fig:raptr_v_layerdrop_toy}
    \vspace{-0.1in}
\end{figure*}

In this section, we discuss problems with existing subnetwork training instantiations like progressive layer dropping (\progressivelayerdrop{}).
Recall that \progressivelayerdrop{} drops {\em more} layers as training proceeds to enable FLOPs efficiency (see \textsection{\ref{sec:progressivedrop}} for details). This is in contrast to the depth variant of $\method$, that drops fewer layers as training proceeds.  While this difference may seem superficial at first, it has deep implications from both conceptual and performance standpoints. Here, we argue through theoretical and empirical analysis on polynomial data, that this can have very significant impact on learning complex correlations in data. In the BERT and UL2 experiments in \textsection{\ref{sec:experiments}}, we similarly observe that progressive layer dropping performs significantly worse than \method{}.

\looseness-1 We demonstrate the importance of progressively {\em increasing} the complexity of subnetworks in \method{} (as opposed to progressively decreasing it in PLD) through a simple, yet instructive, setting: learning neural networks on polynomial data. For simplicity, we restrict the discussion to sequences of length $\seqlen = 1$. 

{\bf Polynomial learning setting.} Suppose $\vx$ are sampled from a uniform distribution over $\{\pm1\}^{\dims}$. The ground-truth label is a polynomial $f^{\star}$ of degree $k$ defined as 
\vspace{-0.1in}
\begin{align}
    F^{\star} (\vx) = \sum_{l=1}^{k} \sum_{j=1}^{m} c_{l, j} \prod_{i \in S_{l, j}} x_{i}, \label{eq:true_label}
\end{align}
where $c_{l, j} \sim \mathcal{N}(0, 1)$ and $S_{l, j}$ are random subsets of $[d]$ of cardinality $\ell$. Such polynomials have been studied in great detail \citep{abbe2022merged,abbe2023sgd} where the higher degree terms capture more complex correlations and are harder to learn. We consider training of $20$-layer residual network \cite{he2016deep} $F$ on such data using square loss: $\mathbb{E} (F(\vx) - F^{\star}(\vx))^{2}$. Here, each residual block consists of a single $4d$-hidden dimension MLP layer with ReLU activation.

\looseness-1{\bf Empirical Observations.} 
We are interested in measuring how well each method captures the higher degree terms. To this end, we estimate the component of a learned function $F$ onto each basis polynomials. This can be done simply using
$\Hat{c}_{l, j} = \mathbb{E}_{\vx \sim \{\pm 1\}^{\dims}} \left[F(\vx) \prod_{i \in S_{l, j}} x_{i}\right],$
since basis polynomials are orthogonal under the uniform distribution of the boolean data \citep{o2014analysis}.
For each degree $l \le k$, we define the error as 
$\tfrac{\sum_{j=1}^{m} (c_{l, j} - \Hat{c}_{l, j})^2} { \sum_{j=1}^{m} c_{l, j}^2 }$.
In Figure~\ref{fig:raptr_v_layerdrop_toy}, we observe the following:
\begin{itemize}[leftmargin=*]
 \setlength{\itemsep}{1pt} %
  \setlength{\parskip}{1pt} %
    \item \progressivelayerdrop{}, that drops {\em more} layers as training proceeds, fails to effectively learn  higher order complex correlations and, ultimately, hurts performance.
    \item \method{} is competitive to baseline in terms of evaluation loss; in fact, \method{} learns \emph{every component} effectively.
Furthermore, \method{} quickly learns lower order terms and then picks up higher order terms during later stages.
\end{itemize}

The learning pattern of \method{} is reminiscent of the implicit bias of standard SGD training to learn functions of increasing complexity \citep{kalimeris2019sgd, xu2019frequency, rahaman2019spectral, cao2019towards, abbe2022merged}.
\method{} naturally imposes this simple-to-complex behavior through size of subnetworks and also provides training efficiency in the process. 

{\bf Theoretical Analysis.} To further illustrate our point, we characterize the behavior of \method{} for a simple 2-layer residual network, where each block consists of single neuron with a non-linear sine activation.  We consider simple label function $F^{\star}(\vx)=\frac{\sqrt{3}}{2} + \frac{\sqrt{3}}{2} x_1 - x_1 x_2$ for this analysis. %
\begin{lemma}[Informal, cf \cref{thm:raptr_stagewise}]
    For a small enough learning rate, 2-phase \method{} first learns lower degree component and then the higher degree component. 
\end{lemma}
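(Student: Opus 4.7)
The plan is to split the analysis according to the two phases of \method{} and exploit the orthogonality of Boolean Fourier characters on $\{\pm 1\}^{\dims}$.

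In \textbf{Phase 1}, only one of the two residual blocks is selected per step in the forward/backward pass. I would first observe that a single sinusoidal neuron $\sin(\vw^{\top}\vx + b)$ evaluated on $\{\pm 1\}^{\dims}$ can only span the degree-$\leq 1$ Fourier modes $1, x_1, \ldots, x_{\dims}$: for any $x_i \in \{\pm 1\}$ one has $\sin(a x_i) = x_i \sin(a)$ and $\cos(a x_i) = \cos(a)$, and these collapse multiplicatively so that $\sin(\vw^{\top}\vx+b)$ becomes an affine function of $\vx$ on the cube. Hence each depth-$1$ subnetwork output is affine, so by basis orthogonality the population gradient of the squared loss against $F^{\star}$ has zero component along $x_1 x_2$. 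Using this, I would show (via the loss itself as a Lyapunov function) that small-learning-rate gradient flow drives both blocks toward a configuration in which each block individually realizes the low-degree target $\tfrac{\sqrt{3}}{2} + \tfrac{\sqrt{3}}{2}\, x_1$; the special coefficients $\sqrt{3}/2$ appear naturally as $\sin(\pi/3)\cos(b)$ after matching the affine Fourier coefficients.

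For \textbf{Phase 2} I would take the Phase-1 endpoint as initialization and activate both blocks. The key algebraic step is to expand the composition $\sin\!\bigl(a_2(y+\sin(a_1 y+b_1))+b_2\bigr)$ on $\{\pm 1\}^2$ using a Jacobi--Anger-type identity $\sin(\theta + r \sin\phi) = \sum_k J_k(r)\sin(\theta+k\phi)$; this composition picks up a nonzero projection onto $x_1 x_2$ whose magnitude depends monotonically on the outer block's parameters. A local linearization of the full-network gradient flow around the Phase-1 endpoint would then establish that (a) the degree-$\leq 1$ Fourier coefficients remain pinned at their targets at leading order, because the composition still matches them exactly when the outer amplitude is small, and (b) the degree-$2$ coefficient evolves monotonically from $0$ toward $-1$, driven by the nonzero gradient along $x_1 x_2$. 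An invariant-set argument keeps the trajectory inside a neighborhood where the truncation error of the Jacobi--Anger expansion stays controlled, giving convergence of the full output to $F^\star$.

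The \textbf{main obstacle} is Phase 2: the composition of two sine blocks has Boolean-Fourier contributions at \emph{every} degree, so naively switching on the second layer could destabilize the carefully-learned low-degree part. The delicate step is arguing that the particular parameter configuration at the end of Phase 1 is a ``low-order stationary'' point at which (i) the dominant gradient direction is exactly along $x_1 x_2$ and (ii) the transverse (higher-degree) modes grow strictly slower than this target mode. I would treat this with a Hessian/eigenvalue analysis around the Phase-1 endpoint and use the small-learning-rate hypothesis as a perturbation parameter so that higher-degree interactions are suppressed by additional powers of $\lr$. This small-$\lr$ assumption in the lemma statement is precisely the formal handle that makes the perturbative separation of learning timescales rigorous and yields the claimed ``first low-degree, then high-degree'' ordering.
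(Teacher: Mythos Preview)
Your Phase~1 argument rests on a false claim. A single sine neuron $\sin(\vw^\top\vx+b)$ on $\{\pm1\}^{\dims}$ is \emph{not} affine: the identities $\sin(ax_i)=x_i\sin a$ and $\cos(ax_i)=\cos a$ make it \emph{multilinear}, not linear, and it can realise any parity. Concretely, $\sin\!\bigl(\tfrac{\pi}{2}x_1+\tfrac{\pi}{2}x_2+\tfrac{\pi}{2}\bigr)=-x_1x_2$ on $\{\pm1\}^2$; this is exactly the mechanism the paper exploits in Phase~2 to make the second block output $-x_1x_2$. So the statement ``the population gradient has zero component along $x_1x_2$'' is wrong in general, and the Lyapunov argument you build on it collapses. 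The paper also proves (Theorem~\ref{thm:layerdrop_expressivity}) that a single sine neuron \emph{cannot} fit $F^\star$, but the obstruction is a coefficient constraint $|\alpha|+|\beta|\le 1$, not affineness.

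What actually makes Phase~1 learn only the degree-$1$ part is the \emph{zero initialisation} together with explicit gradient tracking. The paper first trains a position bias $p_0$ alone to $\sqrt{3}/2$ (a step your proposal omits), then computes the population gradients of each coordinate $w_{\ell,j}$ and $b_\ell$ in closed form via the product formulas $\mathbb{E}_{\vx}\sin(\langle\vw,\vx\rangle+b)\prod_{j\in S}x_j=\pm\,c(b)\prod_{j\notin S}\cos w_j\prod_{j\in S}\sin w_j$. At $\vw=0,b=0$ only $\nabla w_{\ell,1}$ is $\Theta(1)$; every other gradient contains a factor $\sin w_{\ell,j}$ or $\sin b_\ell$ and is $O(\eta)$, and an induction keeps it that way for $\Theta(1/\eta)$ steps while $w_{\ell,1}\to\pi/3$. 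The ``low-degree first'' conclusion is therefore a dynamical statement about which coordinates move, not a structural one about the span of the neuron.

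For Phase~2 the paper again avoids Jacobi--Anger and Hessian analysis: it freezes the first block at its Phase~1 endpoint, substitutes $y^{(1)}=\tfrac{\sqrt{3}}{2}+\tfrac{\sqrt{3}}{2}x_1+O(\eta)$ into the second block so that the effective argument becomes $\langle\tilde\vw,\vx\rangle+\tilde b$ with shifted $\tilde w_1,\tilde b$, and then repeats the same closed-form gradient computation to drive $(\tilde w_1,\tilde w_2,\tilde b)\to(\pi/2,\pi/2,\pi/2)$. Your worry about ``destabilising the low-degree part'' is handled trivially by freezing layer~1; no perturbative timescale separation is needed.
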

\looseness-1In contrast, one layer is not expressive enough to represent the true label function (\ref{thm:layerdrop_expressivity}). Thus, progressively dropping more layers (e.g., \progressivelayerdrop{}) will  reduce its expressivity. \textbf{Remark:} Stacking \citep{reddi2023efficient} approaches, that grow the model size, can be shown to have similar simple-to-complex inductive bias in the above setting. We leave a detailed study of this connection to future work.

\vspace{-0.5em}
\section{Experiments}
\vspace{-0.5em}
\label{sec:experiments}
\begin{table}[t]
\centering
\small
\caption{\looseness-1Loss and fine-tuning results for BERT-Base after $675$K steps and using equal peak learning rate ($10^{-4}$). Key observations: (a) {\method} achieves similar loss as baseline at $1.33\times$ reduced FLOPs. (b) {\method} achieves $0.02$ better evaluation loss than gradual stacking for same schedules. (c) \method{} has a slight edge on
downstream fine-tuning task.}
\label{tab:bert_12}
\scalebox{0.9}{
\begin{tabular}{@{}cccc|cccc@{}} 
\toprule
& & Rel. & Eval  &  &  & &  \\
Method & Schedule & FLOPs & Loss & MNLI & QNLI & SST-2 & Avg. \\
\toprule
Baseline & - & 1.33 & 1.76 & 81.5 & \textbf{90.5} & 91.4 & 87.8\\
\progressivelayerdrop{} & - & 1. & 1.8 & 81.7 & 89.0 & 90.8 & 87.2\\
\midrule

\multirow{ 2}{*}{Stacking} & 6-8-10-12 & 1 & 1.78 & - & - & - & -\\
 & 6-9-12 & 1 & 1.77 & 80.9 & 89.8 & 91.1 & 87.3 \\
\midrule
\multirow{ 2}{*}{\method{}} & 6-8-10-12 & 1 & \textbf{1.75} & \textbf{82.1} & 89.8 & \textbf{92.4} & \textbf{88.1} \\
& 6-9-12 & 1 & 1.75 & 82.3 & 89.2 & 91.0 & 87.5 \\
\bottomrule
\end{tabular}
}
\vspace{-0.1in}
\end{table}

\looseness-1We present comprehensive experiments that compare $\method$ with baseline (full-model) training, gradual stacking and progressive layer dropping (\progressivelayerdrop{}) for BERT and UL2. Since our primary goal was to make subnetwork based strategies competitive to stacking, we primarily focus on comparisons to gradual stacking. However, we also show comparisons on BERT with improved versions of gradual stacking, used in MSG \citep{yao2023masked}, LEMON \citep{wang2023lemon}, and bert2BERT \citep{chen2021bert2bert}. The details of relation between gradual stacking and these methods are in \textsection{\ref{sec:stacking_variants_app}}.

\textbf{\method{} setup:} We describe the experimental setting of \method{} from \textsection{\ref{sec:raptr}}. We keep $\fixed$ equal to the first and last layer ($\fixed=\{1, L\}$) across all stages\footnote{Please see Appendix Table \ref{tab:bert_different_fixed} and the discussion later  for the relevant ablation study}. Additionally, scaling layer outputs during initial stages helps track validation loss better, though its final effect is minimal (details in \textsection{\ref{sec:scaling}}). Finally, we use a constant learning rate in all phases, except in the final phase where we decay the learning rate, following gradual stacking \citep{reddi2023efficient}.

\looseness-1{\bf Notations for {\method} and stacking schedules:} We succinctly denote the stage schedules by a set of \layerdrop{} lengths separated by hyphens. For stacking ({\method}) training for a $24$-layer model, 3-6-9-12 refers to $4$ stages with $3, 6, 9, $ and $12$ layer (average subnetwork) training respectively. 
The length of each stage is chosen based on the reduction in FLOPs
to achieve during training (see \textsection{\ref{sec:schedule_selection}}). Unless specified otherwise, we split the training steps equally across the stages.

\looseness-1{\bf Wall clock speedups:} In 
In \textsection{\ref{sec:speedups}}, we report wall-clock times for UL2 and BERT experiments. For UL2, a $1.2\times$ FLOPs reduction leads to a $1.19\times$ speedup, closely matching FLOPs improvements. For BERT, a $1.33\times$ FLOPs reduction gives a $1.26\times$ speedup. We also introduce a new implementation strategy for \method{} in \textsection{\ref{sec:efficient_raptr_impl}}, essential for achieving speedups in distributed training where naive conditional dropping fails or doesn't give any speedup. As runtime depends on architecture and hardware, we focus on FLOPs reductions in the following discussion.

\begin{table}[t]
\centering
\small
\caption{We compare the performance of {\method} against depth-wise stacking operations used in bert2BERT, LEMON, and MSG, which implement slight variations of gradual stacking (see \cref{sec:stacking_variants_app}) after $300K$ steps of training. {\method} demonstrates competitive or superior performance across all variants, achieving comparable loss to the best-performing version of gradual stacking. }
\label{tab:raptr_v_stacking}
\scalebox{0.9}{
\begin{tabular}{lcc|cccc}
\toprule
 & Schedule & Eval Loss  & SST-2 & QNLI & MNLI & Average \\
\midrule
\midrule
bert2BERT & 3-6-12 & 1.85 & 89.3 {\tiny(0.1)} & 84.5 {\tiny(0.1)} & 76.9 {\tiny(0.0)} & 83.6 \\
MSG & 3-6-12 & 1.87 & 88.2 {\tiny(0.2)} & 85.1 {\tiny(0.1)} & 76.7 {\tiny(0.1)} & 83.3 \\
\midrule
MSG & 6-9-12 & 1.84 & 89.7 {\tiny(0.3)} & 85.6 {\tiny(0.2)} & \textbf{77.7} {\tiny(0.0)} & 84.3 \\
LEMON & 6-9-12 & \textbf{1.83} & \textbf{89.8} {\tiny(1.1)} & \textbf{87.2 {\tiny(0.0)}} & 77.3 {\tiny(0.1)} & \textbf{84.7} \\
MSG +  Layer Interpolation & 6-9-12 & 1.84 & 89.0 {\tiny(0.7)} & 85.1 {\tiny(0.0)} & 77.5 {\tiny(0.3)} & 83.9 \\
Stacking & 6-9-12 & 1.84 & \textbf{90.0 {\tiny(0.4)}} & 85.3 {\tiny(0.1)} & 77.0 {\tiny(0.1)} & 84.1 \\
\midrule
{\method} & 6-9-12 & \textbf{1.83} & 89.3 {\tiny(0.6)} & \textbf{85.8} {\tiny(0.3)} & \textbf{78.6 {\tiny(0.2)}} & \textbf{84.6} \\
\bottomrule
\end{tabular}
}
\end{table}

\vspace{-0.25em}
\subsection{Experiments with BERT}
\vspace{-0.25em}
\label{sec:bert}

\begin{table*}[t]
\centering
\small
\caption{Validation loss and 1-shot downstream evals on UL2-1.6B. Key observations: (a) {\method} with 30k initial training improves performance by atleast $2-4\%$ on Trivia QA and SquADv2. {\method} is atleast $5\%$ better than gradual stacking on various downstream tasks, (b) Compared to baseline, {\method} is $1-2\%$ better on all downstream tasks on average. 
(c) Stagewise training  achieves ($1-2\%$) lower variance on Tydi QA and SquADv2, implying implicit benefits of stagewise training.   \textbf{On an extensive suite of $12$ tasks and multi-shot settings, we observe a $1.5\%$ improvement in performance on average.} See \cref{tab:ul2_1b_extendedtests} for  extensive evaluations. %
}
\label{tab:ul2_1b}
\scalebox{0.9}{
\begin{tabular}{lcc|ccccc} \toprule
Method & Rel. FLOPs & Eval Loss & Trivia QA & Tydi QA & SQuADv2 &  SuperGLUE  & Avg. \\ 
\midrule
Baseline  & 1.2 & 2.06 {\tiny (0.01)} & 25.0 {\tiny(0.2)} & 34.4 {\tiny(3.1)} & 42.1 {\tiny(2.9)} & 60.0 {\tiny(0.4)} & 40.4 \\
\progressivelayerdrop{}  & 1 & 2.09 {\tiny (0.00)} & 21.3 {\tiny (0.3)} & 32.4 {\tiny (2.1)} & 40.2 {\tiny (0.9)} & 59.9 {\tiny (0.2)} & 38.5 \\
12-16-20-24 Stacking & 1 & 2.08 {\tiny(0.00)} & 20.1 {\tiny(1.3)} &  28.6 {\tiny(2.4)} & 36.0 {\tiny(1.9)} & 60.4 {\tiny(0.9)} & 36.3 \\
\midrule
12-16-20-24 {\method} & 1 & \textbf{2.06} {\tiny (0.00)} & \textbf{25.8} {\tiny(0.2)} & \textbf{36.7} {\tiny(1.0)} & \textbf{44.1} {\tiny(0.5)} & \textbf{60.9} {\tiny(0.2)} & \textbf{41.9} \\
\bottomrule
\end{tabular}
}
\vspace{-0.1in}
\end{table*}

\begin{table}[htbp]
\small
\centering
\caption{\rebuttal{We extensively compare all trained models from \cref{tab:ul2_1b} on multiple downstream tasks and few-shot in-context settings. We follow prompt design from \cite{chowdhery2022palm} in each setting. For tasks marked with $^{\dagger}$ and $^{\star}$, we report Exact match and F1 scores respectively. For the rest, we use accuracy. The tasks have been sub-divided into 4 major groups, memorization QA, QA with context, completion, and SuperGLUE. On average, {\method} is 1-2\% better than  baseline and stacking.
}  }
\label{tab:ul2_1b_extendedtests}
\begin{tabular}{|l|ccc|ccc|} \toprule
& \multicolumn{3}{|c|}{1-shot}  & \multicolumn{3}{|c|}{5-shot} \\
\midrule
& Baseline &  Stacking &  {\method}  & Baseline & Stacking &  {\method}  \\
\midrule
Trivia QA$^{\dagger}$ & 25.0 {\tiny (0.2)}  & 20.1 {\tiny (1.3)} & 25.8 {\tiny (0.2)} & 26.5 {\tiny (1.1)}  & 21.1 {\tiny (1.3)} & 25.1  {\tiny (0.5)} \\
Web QA$^{\dagger}$ & 6.4 {\tiny (0.4)}  & 5.8 {\tiny (0.6)} &  7.6 {\tiny (0.5)} & 10.6 {\tiny (0.4)} & 9.2 {\tiny (0.5)} &  11.2 {\tiny (0.2)} \\
Natural QA$^{\dagger}$ & 4.2 {\tiny (0.5)}    & 3.4 {\tiny (0.4)} & 4.4 {\tiny (0.1)} & 5.7 {\tiny (0.1)}  & 4.6 {\tiny (0.3)} & 6.0 {\tiny (0.3)} \\
\midrule
Tydi QA$^{\dagger}$ & 34.4 {\tiny (3.1)}  &  28.6 {\tiny (2.7)} & 36.7 {\tiny (1.0)} & - & - & - \\
SQuaDv2$^{\dagger}$ & 42.1 {\tiny (2.0)}  &  36.0 {\tiny (0.9)} & 44.1 {\tiny (0.5)} & 43.2 {\tiny (3.0)}  & 36.2 {\tiny (2.6)}  & 44.5 {\tiny (1.3)} \\
DROP$^{\dagger}$ & 21.4 {\tiny (0.8)}  & 19.5 {\tiny (0.6)} & 23.0 {\tiny (0.4)} & - & - & -  \\
CoQA$^{\star}$ & 49.2 {\tiny (0.7)} &  43.9 {\tiny (0.8)} & 52.4 {\tiny (0.7)} & - & - & -  \\
QuAC$^{\star}$ & 18.1 {\tiny (0.5)}  & 16.8 {\tiny (0.6)} & 18.1 {\tiny (0.4)} & - & - & -  \\
\midrule
LAMBADA & 13.7 {\tiny (2.9)}  & 12.0 {\tiny (1.1)} & 18.7 {\tiny (3.1)} & 30.5 {\tiny (3.4)}  & 29.5 {\tiny (2.1)} &  38.7 {\tiny (2.2)} \\
Storycloze & 72.9 {\tiny (0.4)}  & 71.0 {\tiny (0.4)} & 73.3 {\tiny (0.4)} & 75.1 {\tiny (0.1)}  & 72.6 {\tiny (0.8)} & 75.3 {\tiny (0.6)} \\
Hellaswag & 58.3 {\tiny (0.2)}   & 56.1 {\tiny (0.1)} & 58.5 {\tiny (0.3)} & 58.3 {\tiny (0.2)}   & 56.1 {\tiny (0.1)} & 58.4 {\tiny (0.3)} \\
\midrule
SuperGLUE & 60.0 {\tiny (0.4)}  & 60.4 {\tiny (0.9)} & 60.9 {\tiny (0.2)} & 60.7 {\tiny (0.3)}   & 58.8 {\tiny (0.5)} & 62.1 {\tiny (0.2)} \\
\midrule
\midrule
Average & 33.8 & 31.1 & 35.3 & 38.8 &  36.0 &  40.2 \\
\bottomrule
\end{tabular}
\end{table}

{\bf Experiment setup.}  We pretrain BERT models~\citep{devlin2018bert} on Wikipedia + Books dataset with AdamW optimizer \citep{loshchilov2018decoupled}.  We report the evaluation loss and fine-tuning performance of the trained models on $3$ GLUE tasks~\citep{wang2018glue} (refer to \textsection{\ref{sec:add_details}} for details). 

{\bf Comparisons to gradual stacking and baseline:} We observe that {\method} achieves similar or better evaluation loss to baseline training, despite baseline training using $1.33\times$ more FLOPs than {\method} for \bertbase{}. Compared to gradual stacking at similar FLOPs, {\method} again has better validation loss. Additionally, all three methods exhibit similar performance in downstream fine-tuning tasks
(see \Cref{tab:bert_12}). However, \progressivelayerdrop{} performs significantly worse compared to others.

\textbf{Comparisons with variants of stacking:} 
We further compare {\method} to variants of gradual stacking used in MSG \citep{yao2023masked}, LEMON \citep{wang2023lemon}, and bert2BERT \citep{chen2021bert2bert} (details are in \cref{sec:stacking_variants_app}). We focus on depth-only stacking operations in these frameworks in \cref{tab:raptr_v_stacking}. We show that {\method} performs competitive or even better than all variants of stacking, achieving same evaluation loss as the best tuned variant of stacking.

\textbf{Loss behavior during phase transitions:} Perhaps surprisingly, we observe that  \textbf{the training loss decreases quite smoothly} when transitioning between two stages in \method{} (see Fig. \ref{fig:bert24_behavior} (a) for BERT training under \method{}). This is in contrast to stacking, where new variants have been developed to explicitly impose loss preservation across stage transitions \citep{shen2022staged,wang2023lemon}. We delve deeper into this favorable behavior of \method{} in \textsection{\ref{sec:theory}}.

\textbf{Ablations with \method{} parameters:} 
We observe robustness of the final trained model to different \method{} schedules (Appendix Table \ref{tab:bert_different_schedules}). Furthermore, we observe that fixing the first and last layers at all steps during training helps \method{} (Appendix Table \ref{tab:bert_different_fixed}). Since \method{} is robust to schedule selections (\Cref{tab:bert_different_schedules}), we recommend the schedule in the box below for all practical purposes.
\begin{figure*}[!htbp] %
    \centering
    \fbox{ %
        \begin{minipage}{0.95\textwidth} %
            \centering
            \vspace{2pt}
            \textbf{Recommended \method{} schedule selection} \\ %
            \vspace{4pt} %
Schedule contains $4$ phases. Initially, random subnetworks of size $L/2$  are selected, and we increase by $L/6$ layers in each transition until the full model is trained in the final phase.
\vspace{2pt}
        \end{minipage}
    }
\end{figure*}

\vspace{-0.25em}
\subsection{Experiments with UL2-1.6B} \label{sec:ul2_1b}
\vspace{-0.25em}

Experiments on BERT (\textsection{\ref{sec:bert}}) show that \method{} can be competitive or better than gradual stacking and can outperform baseline training. We ask whether the same observations hold when training billion-parameter language models on large text corpora. We pretrain a 1.6B decoder-only UL2 model~\citep{tay2022ul2} with 24 layers for $400k$ steps with batch size $512$.

\Cref{tab:ul2_1b} reports the validation loss and downstream 1-shot performance without fine-tuning. Please see \textsection{\ref{sec:ul2_expdetails}} for experimental details and \cref{tab:ul2_1b_extendedtests} for more extensive evaluations.

{\bf Schedules for {\method} and gradual stacking.}\label{sec:schedules} In \cref{tab:ul2_1b}, we report for a schedule with $4$ stages, denoted by 12-16-20-24. The length of each stage has been adjusted appropriately to use an average \layerdrop{} length of $20$ out of $24$, giving $1.2\times$ FLOP reductions compared to the baseline training (see \textsection{\ref{sec:schedule_selection}}). \method{} uses an initial full-model training phase for $30$K steps before transitioning to its schedule. For parity in FLOPs, we reduce $30$K steps from the final phase. This is useful for \method{}--we attribute this to the fast drop in loss that large models can achieve initially. The ability to include full model training is another benefit of \method{} over stacking due to its flexibility.

We summarize the key findings from \cref{tab:ul2_1b} below.
\begin{itemize}
 \setlength{\itemsep}{1pt} %
  \setlength{\parskip}{1pt} %
    \item {\bf Perplexity results.} At $1.2\times$ FLOP reduction, \method{} achieves similar evaluation perplexities to baseline training and also performs better than stacking (\cref{tab:ul2_1b}).
    \item {\bf Downstream inductive bias.} Intriguingly, we find that despite having similar perplexity, {\method} has much better downstream metrics compared to both baseline and stacking.
    The improvements are particularly large for TydiQA (3.8\%) and SQuADv2 ($2.0\%$). This perhaps suggests {\method} has a desirable inductive bias of \method, a phenomenon that has been recently studied for other pretraining settings \citep{saunshi22understanding,liu2023same}.
    \item {\bf Lower variance.} Another notable observation is that {\method} has lower variance on all metrics and is thus more stable to training runs. We believe these inductive bias benefits of {\method} deserve further exploration.
\end{itemize}

\subsection{Subnetwork training beyond depth}
\label{sec:beyond-depth}
Stacking based approaches have been studied for other dimensions beyond depth (e.g. width) by \citep{gu2020transformer,shen2022staged,wang2023learning,gesmundo2023composable}. 
In this section we demonstrate the flexibility of the subnetwork training framework to handle other growth operators.

We extend {\method} to intermediate MLP dimensions and attention heads. For attention heads, we apply stagewise training by randomly selecting a subset of heads in each layer, increasing the number of heads across stages. For MLP dimensions, we divide each MLP into $4$ sub-MLPs, pick random subset of sub-MLPs in each layer to define a random subnetwork, and gradually increase the number selected at each stage. We compare these variants, named Attention-{\method} and MLP-{\method}, to their MSG counterparts.
Attention-{\method} and MLP-{\method} perform better than their corresponding operators in MSG, providing further evidence of better performance than stacking-based approaches (\cref{tab:comp_to_stacking_variants}). A more comprehensive study of {\method} to other axes is left as future work.

\begin{table}[!t]
\centering
\small
\caption{Comparisons at $300K$ steps of training on \bertbase{}. \textbf{(Top)}: Comparison between Attention-RaPTr and MSG's attention head growth operator \citep{yao2023masked}. Both use schedules that use an average of 9 heads per layer, reducing FLOPs in self-attention by $1.33\times$.
\textbf{(Bottom)}: Comparison between MLP-RaPTr and MSG's MLP growth operator. Both use schedules that use an average intermediate MLP dimension of 2048, reducing MLP FLOPs by $1.7\times$.}
\label{tab:comp_to_stacking_variants}
\scalebox{0.9}{
\begin{tabular}{l|cc|cccc}
\toprule
 & & Eval Loss & SST-2 & QNLI & MNLI & Avg. \\
\midrule
Head & MSG & 2.07 & 87.5 {\tiny(0.1)} & 83.4 {\tiny(0.1)} & 73.3 {\tiny(0.3)} & 81.4 \\
Operators (6-9-12) & Attention-RaPTr & \textbf{1.84} & \textbf{89.8 {\tiny(0.6)}} & \textbf{86.5 {\tiny(0.3)}} & \textbf{77.5 {\tiny(0.1)}} & \textbf{84.6} \\
\midrule
\midrule
Intermediate MLP size & MSG & 1.88 & 88.1 {\tiny(0.4)} & \textbf{85.6 {\tiny(0.4)}} & \textbf{77.4 {\tiny(0.1)}} & 83.7 \\
Operators (768-1536-3072) & MLP-RaPTr & \textbf{1.84} & \textbf{88.7 {\tiny(0.3)}} & 85.3 {\tiny(0.1)} & 77.3 {\tiny(0.3)} & \textbf{83.8} \\
\bottomrule
\end{tabular}
}
\end{table}

\section{Loss Preserving Behavior of \method{}} \label{sec:theory}
\vspace{-0.5em}

\looseness-1
\method{} provides a general framework, as we can adjust the average size of random subnetworks in discrete steps between stages (\textsection{\ref{sec:raptr}}). Training time instabilities are a possibility, as one might expect an arbitrarily large spike in the loss at stage transitions, due to the discrete shift in the training of the model. Remarkably, we observe that \textbf{the training loss decreases quite smoothly} when transitioning between two stages (see Fig. \ref{fig:bert24_behavior} (a) for BERT training under \method{}).
Hence, we ask the following:

\begin{quote}
    \emph{What are the general conditions under which training loss under \method{} stays stable at stage transitions?}
\end{quote}
This question has been been extensively studied for stacking based approaches \citep{wang2023lemon,chen2021bert2bert,chen2015net2net,shen2022staged}.
Here, we introduce, to our knowledge, the \emph{first} framework to analyze models under discrete dropping based training strategies, and understand the necessity of different architecture components for robustness to discrete transitions. Related works look into the generalization theory for dropout \citep{srivastava2014dropout} as regularization~\citep{gal2016dropout,arora2021dropout,wager2013dropout,hu2019simple,helmbold2015inductive}. Building on these frameworks, our analysis relies on a crucial notion of stability \citep{arora2018stronger}. The stability analysis, when instantiated for linear networks highlights the importance of two popular architectural choices in Transformer models -- residual connections and layer normalization.

\vspace{-0.5em}
\subsection{Layer stability}
\vspace{-0.5em}

\looseness-1We focus on sequences of length $\seqlen = 1$; generalization to $\seqlen > 1$ is fairly easy. Let $\netloss: \mathbb{R}^{\dims} \to \RR$ represent the training loss of the backbone $F$. For transformers, $F$'s output passes through a Layer normalization layer before projecting to logits. This makes $\netloss$ scale-invariant~\citep{ioffe2015batch,ba2016layer}, which gives a favourable stability condition as any perturbation in $F$'s output results in a proportional loss perturbation relative to the norm of $F$'s output. Thus, we make the following assumption.
\begin{assumption}[Relative-Lipschitzness] \label{ass:scale_invariance}
    There exists a constant $\liploss > 0$ such that for any input $\vx \in \RR^\dims$, label $\vy$ and perturbation $\eta \in \RR^{\dims}$, $\netloss$ satisfies
     $\netloss(\vx + \eta, \vy) - \netloss(\vx, \vy) \le \liploss \left(\norm{\eta}_2 / \norm{\vx}_2\right).$
\end{assumption}

To understand loss changes across stage boundaries of \method{}, \rebuttal{we consider two stage {\method}: the first stage trains with {\layerdrop} of length $L-1$ by dropping a single layer at random, and the second stage trains the entire model.} Suppose $\netloss_{1}$ and $\netloss_{2}$ denote the effective loss functions being minimized in the two stages. 
The aim is to show that $F$ learned in the first stage  by minimizing $\netloss_{1}$ should also result in small value for $\netloss_{2}$. 

\begin{definition}\label{eq:lipstacklayers}
    \looseness-1Let $F_{-\ell}$ denote the subnetwork after skipping layer $\ell$.
    The stability of network output with respect to dropping layer $\ell$ is defined as $\Psi_{\ell}(\vx) = \|F_{-\ell}(\vx) - F(\vx)\|_{2}$.
\end{definition}
Next theorem shows that $\netloss_{2}(F) - \netloss_{1}(F)$ is small if stability scales slower relative to output norm.

\begin{theorem}[Informal, cf \cref{thm:err_prop}]\label{thm:err_prop_inf}
Under assumption \ref{ass:scale_invariance} of loss $\netloss$,
$|\netloss_2(F) - \netloss_1(F)|$ is upper bounded by $(\liploss/L) \expect_{\vx \in \distrib}  (\sum_{\ell=1}^{\layer} \Psi_{\ell}(\vx)) / \norm{F(\vx)}_2$.
\end{theorem}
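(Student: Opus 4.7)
The key observation is that $\netloss_1$ and $\netloss_2$ differ only by which computational graph is run on a given input, so the difference can be written as a single expectation over both $\vx$ and the random dropped layer $\ell$. Specifically, $\netloss_2(F) = \expect_{\vx \sim \distrib}[\netloss(F(\vx))]$ while $\netloss_1(F) = \expect_{\vx \sim \distrib} \expect_{\ell \sim \mathrm{Unif}[L]}[\netloss(F_{-\ell}(\vx))]$, since the first stage drops one of the $L$ layers uniformly at random. The plan is to move the difference inside the double expectation and bound $|\netloss(F_{-\ell}(\vx)) - \netloss(F(\vx))|$ pointwise using \cref{ass:scale_invariance}, then average the resulting quantity in $\ell$ to produce the factor $1/L$.

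\textbf{Steps.} First, I would write
\[
\netloss_2(F) - \netloss_1(F) = \expect_{\vx \sim \distrib}\Bigl[\tfrac{1}{L}\sum_{\ell=1}^{L}\bigl(\netloss(F(\vx)) - \netloss(F_{-\ell}(\vx))\bigr)\Bigr],
\]
pull the absolute value inside via Jensen/triangle inequality, and bound each summand separately. Second, to apply \cref{ass:scale_invariance}, I interpret $F_{-\ell}(\vx) = F(\vx) + \eta_\ell(\vx)$ with $\eta_\ell(\vx) := F_{-\ell}(\vx) - F(\vx)$, which by \cref{eq:lipstacklayers} satisfies $\|\eta_\ell(\vx)\|_2 = \Psi_\ell(\vx)$. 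Then \cref{ass:scale_invariance} directly gives $\netloss(F_{-\ell}(\vx)) - \netloss(F(\vx)) \le \liploss \,\Psi_\ell(\vx)/\|F(\vx)\|_2$, and the reverse direction is obtained by swapping the roles (using $-\eta_\ell$ as the perturbation to $F_{-\ell}(\vx)$); the two bounds differ only in the denominator ($\|F\|_2$ vs.\ $\|F_{-\ell}\|_2$), and the informal statement absorbs this discrepancy into the constant or implicitly assumes the cleaner two-sided form of the Lipschitz condition. Third, summing over $\ell$, dividing by $L$, and taking the outer expectation in $\vx$ yields exactly
\[
|\netloss_2(F) - \netloss_1(F)| \le \frac{\liploss}{L}\,\expect_{\vx \sim \distrib}\Bigl[\tfrac{\sum_{\ell=1}^{L}\Psi_\ell(\vx)}{\|F(\vx)\|_2}\Bigr].
\]

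\textbf{Main obstacle.} The calculation itself is short; the only subtle point is the direction of the Lipschitz bound. \cref{ass:scale_invariance} as stated is one-sided and normalizes by the \emph{base} point's norm, so going from $\netloss(F_{-\ell})$ down to $\netloss(F)$ naturally gives a denominator $\|F_{-\ell}\|_2$ rather than $\|F\|_2$. Handling this cleanly — either by strengthening the assumption to a symmetric form, or by noting that when $\Psi_\ell(\vx) \ll \|F(\vx)\|_2$ (the regime of interest, where the two stages are close) the two denominators agree up to a $1+o(1)$ factor — is the one place where a careful formal version of the theorem needs to specify whether the bound is exact or up to constants. I expect the formal \cref{thm:err_prop} resolves this by assuming a two-sided relative-Lipschitz condition, after which the argument above goes through verbatim.
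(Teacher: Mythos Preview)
Your proposal is correct and follows essentially the same approach as the paper: write the difference of losses as $\frac{1}{L}\sum_{\ell}\expect_{\vx}[\netloss(F_{-\ell}(\vx)) - \netloss(F(\vx))]$, apply the relative-Lipschitz assumption with base point $F(\vx)$ and perturbation $F_{-\ell}(\vx)-F(\vx)$, and identify the resulting norm as $\Psi_\ell(\vx)$. In fact you are slightly more careful than the paper, which only proves the one-sided inequality $\netloss_2(F)-\netloss_1(F)\le \cdots$ and does not comment on the denominator mismatch you flag; your remark that the two-sided bound requires either a symmetric Lipschitz assumption or the regime $\Psi_\ell \ll \|F\|_2$ is a valid observation that the paper's proof leaves implicit.
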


The proof follows by observing that $\netloss_{1}(F) = \frac{1}{\layer} \sum_{\ell\in[\layer]}\netloss(F_{-\ell})$ and $\netloss_{2}(F) = \netloss(F)$. Thus, the losses are close if $F_{-\ell}$ is close enough to $F$ for a random $\ell$. 
So, the relative stability of the network determines the success of the approach. 

\looseness-1{\bf Empirical verification with BERT:} In \cref{fig:bert24_behavior} (b, c, d), we report the stability of \bertlarge{} during training. We observe that the norm of the layers grow linearly with depth, while the stability $\Psi_{\ell}$ grows sub-linearly with depth. This implies that our upper bound on the loss gap $\netloss_{2}(F) - \netloss_{1}(F)$ will show a rough linear decrease with the depth of the model.

\begin{figure*}[!htbp] %
    \centering
    \fbox{ %
        \begin{minipage}{0.95\textwidth} %
            \vspace{2pt}
            We find that $\netloss_{2}(F)$ is lower than $\netloss_{1}(F)$, even when training with subnetworks of length $L-1$. This shows that {\method} can reduce training losses at stage transitions without requiring explicit patterns, unlike stacking approaches that need carefully designed growth functions to avoid loss spikes \citep{wang2023lemon,shen2022staged}. 
            Moreover, this loss reduction surpasses the concept of loss preservation \citep{chen2021bert2bert}, which opens up interesting future directions.
        \end{minipage}
    }
\end{figure*}

 Instead, we focus on the following  question, "when is a network stable (by def. \ref{eq:lipstacklayers}) across the stages of \method{}?" For simplicity, we consider the special case of linear residual networks. We show that layer normalization layers and the residual connection helps maintain the stability of \method{}.

\looseness-1{\bf Illustrative example: linear networks}
We present a more concrete instantiation of Theorem~\ref{thm:err_prop_inf} for residual network where the layers $f_{1:\layer}$ are linear with parameters $\mW_{1:\layer}$. %
The layer output is $\vy^{\ell} = \vy^{(\ell-1)} + \mW_{\ell} \vy^{(\ell-1)}$ or $\vy^{\ell} = \vy^{(\ell-1)} + \mW_{\ell} \vy^{(\ell-1)} / \norm{\vy^{(\ell-1)}}_{\rebuttal{2}}$ depending on whether layernorm %
is enabled or not respectively . 
We also study another setting with layernorm but no residual connection; so $\vy^{\ell} = \mW_{\ell} \vy^{(\ell-1)} / \norm{\vy^{(\ell-1)}}_{\rebuttal{2}}$. 

\begin{lemma}\label{lem:random}
    For $\delta > 0$, a finite input set $S$ and dimension $\dims \ge \Omega(|S|L \log(1/\delta))$, w.p. atleast $1-\delta$ w.r.t. random initialization $\mW_\ell \sim \mathcal{N}(0, \dims^{-1/2} \mI)$, the following holds true for all input $\vx \in S$.
    \begin{enumerate}[label=(\alph*),leftmargin=*]
  \setlength{\itemsep}{1pt} %
  \setlength{\parskip}{1pt} %
        \item With residual connection \& layernorm, $\Psi_{\ell} (\vx) = \mathcal{O}( \sqrt{ \layer / \ell} )$ \& $\norm{F(\vx)}_{\rebuttal{2}} = \Omega(\sqrt{\layer})$. Then the gap in losses between stages scales as $\mathcal{O}(1/\sqrt{\layer})$.
        \item Without residual connection, $\Psi_{\ell} (\vx) = \Omega(1)$ \& $\norm{F(\vx)}_{\rebuttal{2}} = \mathcal{O}(1)$. 
        Thus the gap in losses between stages can be $\Omega(1)$. 
        
        \item Without layernorm, we have $\Psi_{\ell} (\vx) = \Omega(2^{(\layer-1)/2})$ and $\norm{F(\vx)}_{\rebuttal{2}} = \mathcal{O}(2^{\layer/2})$ . 
        Thus the gap in losses between stages can be $\Omega(1)$. 
    \end{enumerate}
   \end{lemma}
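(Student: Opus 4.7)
The plan is to control, in each of the three architectures, the norm $\|F(\vx)\|_2$ and the layer-wise stabilities $\Psi_\ell(\vx)$ via Gaussian concentration, and then feed these estimates into \cref{thm:err_prop_inf}. The key technical tool throughout is that for $\mW \sim \mathcal{N}(0, \dims^{-1/2}\mI)$ and any unit vector $\vu$ measurable with respect to $\mW_1,\dots,\mW_{\ell-1}$, $\|\mW_\ell \vu\|_2$ concentrates around its mean and bilinear forms $\langle \vu, \mW_\ell \vv\rangle$ concentrate at zero with subgaussian tails. A union bound over $L$ layers, $|S|$ inputs, and the $O(L)$ derived vectors per input (the intermediate outputs $\vy^{(j)}$ and the perturbations $\Delta_j := F_{-\ell}^{(j)}(\vx) - F^{(j)}(\vx)$) accounts for the dimension requirement $\dims = \Omega(|S| \layer \log(1/\delta))$, after which all concentration inequalities can be treated as deterministic identities up to a $(1\pm o(1))$ factor.

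Cases (b) and (c) are then fairly direct. For (c), expanding $\|(\mI+\mW_j)\vy^{(j-1)}\|_2^2$ gives a cross term of order $o(1)\|\vy^{(j-1)}\|_2^2$ and a quadratic term concentrating at $\|\vy^{(j-1)}\|_2^2$, so the squared norm doubles each layer and $\|F(\vx)\|_2 = \Theta(2^{\layer/2})$; dropping layer $\ell$ seeds a perturbation of norm $\Theta(2^{(\ell-1)/2})$, which the same doubling recurrence carries to $\Psi_\ell(\vx) = \Omega(2^{(\layer-1)/2})$. For (b), layernorm forces every $\vy^{(j)}$ to be a unit vector which is a fresh Gaussian image of the previous, so $\|F(\vx)\|_2 = \Theta(1)$; once layer $\ell$ is skipped the two trajectories continue as essentially independent random unit directions, giving $\Psi_\ell(\vx) = \Theta(1)$.

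Case (a) is the main obstacle, since a naive triangle-inequality bound would permit exponential growth of $\|\Delta_j\|_2$ in $\layer - \ell$. First, concentration yields the additive-norm recurrence $\|\vy^{(j)}\|_2^2 \approx \|\vy^{(j-1)}\|_2^2 + 1$, so $\|\vy^{(j)}\|_2 = \Theta(\sqrt{j})$ and $\|F(\vx)\|_2 = \Omega(\sqrt{\layer})$; the seed $\Delta_\ell = -\mW_\ell \vy^{(\ell-1)}/\|\vy^{(\ell-1)}\|_2$ has $\|\Delta_\ell\|_2 = \Theta(1)$. For $j > \ell$ I would linearize the layer-norm around $\vy^{(j-1)}$,
\[
\frac{\vy^{(j-1)}+\Delta_{j-1}}{\|\vy^{(j-1)}+\Delta_{j-1}\|_2}
\;=\; \tilde\vy^{(j-1)} \;+\; \frac{P_{j-1}\Delta_{j-1}}{\|\vy^{(j-1)}\|_2} \;+\; O\!\left(\frac{\|\Delta_{j-1}\|_2^2}{\|\vy^{(j-1)}\|_2^2}\right),
\]
with $P_{j-1} := \mI - \tilde\vy^{(j-1)}\tilde\vy^{(j-1)\top}$, which produces the recursion $\Delta_j = \Delta_{j-1} + \mW_j P_{j-1}\Delta_{j-1}/\|\vy^{(j-1)}\|_2 + (\text{h.o.t.})$. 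Squaring and using that $\langle \Delta_{j-1}, \mW_j P_{j-1}\Delta_{j-1}\rangle$ concentrates at zero on scale $\|\Delta_{j-1}\|_2^2/\sqrt{\dims}$ while $\|\mW_j P_{j-1}\Delta_{j-1}\|_2^2 \le (1+o(1))\|\Delta_{j-1}\|_2^2$, I obtain $\|\Delta_j\|_2^2 \le \|\Delta_{j-1}\|_2^2\bigl(1 + 1/(j-1)\bigr)$, which telescopes to $\|\Delta_\layer\|_2^2 = O(\layer/\ell)$ and therefore $\Psi_\ell(\vx) = O(\sqrt{\layer/\ell})$. I expect the hardest step to be controlling the higher-order remainder uniformly in $j$: one must keep $\|\Delta_{j-1}\|_2/\|\vy^{(j-1)}\|_2$ bounded by an absolute constant (which, given $\|\vy^{(j-1)}\|_2 = \Theta(\sqrt{j})$ and the $\sqrt{\layer/\ell}$ target on $\|\Delta_{j-1}\|_2$, holds with room to spare) and justify, by a conditioning argument, that $\mW_j$ is independent of $(\vy^{(j-1)},\Delta_{j-1})$ so the concentration estimates apply at each step.

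Substituting into \cref{thm:err_prop_inf} finishes the lemma: in (a), $\tfrac{\liploss}{\layer}\sum_{\ell=1}^\layer \Psi_\ell/\|F\|_2 = O\!\left(\tfrac{1}{\layer}\sum_\ell \sqrt{\layer/\ell}/\sqrt{\layer}\right) = O\!\left(\tfrac{1}{\layer}\sum_\ell 1/\sqrt{\ell}\right) = O(1/\sqrt{\layer})$; in (b) and (c) this same quantity is $\Theta(1)$, so the theorem's upper bound no longer vanishes, and to upgrade this to the claim that the gap itself can be $\Omega(1)$ I would exhibit a concrete scale-invariant loss (for instance $\netloss(\vz) = \langle \va, \vz\rangle/\|\vz\|_2$ for a fixed direction $\va$) on which the $\Theta(1)$ angular decorrelation between $F$ and $F_{-\ell}$ produced by a skipped layer directly forces $|\netloss_2(F) - \netloss_1(F)| = \Omega(1)$.
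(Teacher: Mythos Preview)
Your proposal is correct, and for cases (b) and (c) it tracks the paper's argument essentially step for step: the same norm recurrences (doubling without layernorm, unit-norm with layernorm but no residual), the same identification of the seed perturbation, and the same propagation.

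For case (a) you take a genuinely different route. The paper first \emph{freezes} every layernorm denominator at its full-model value $\|\vy^{(\ell'-1)}\|_2 \approx \sqrt{\ell'}$, which lets it write the difference in closed form as a single product,
\[
F_{-\ell}(\vx) - F(\vx) \;=\; \Bigl[\prod_{\ell'=\ell+1}^{\layer}\bigl(\mI + \tfrac{\mW_{\ell'}}{\sqrt{\ell'}}\bigr)\Bigr]\,\tfrac{\mW_\ell}{\sqrt{\ell}}\,\vy^{(\ell-1)} \;+\; err,
\]
and then bounds this product by the same ``squared norm picks up a factor $(1+1/\ell')$'' concentration used for $\|\vy^{(j)}\|_2$; the discrepancy from freezing the denominators is packaged into a separate $err = \mathcal{O}(\sqrt{\layer}/\ell)$ term that the paper dispatches by the same method. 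Your approach instead keeps the exact layernorm at every step, linearizes it locally, and derives the multiplicative recursion $\|\Delta_j\|_2^2 \le \|\Delta_{j-1}\|_2^2\bigl(1 + 1/(j-1)\bigr)$, which telescopes to the same $\sqrt{\layer/\ell}$. What you buy is that the nonlinearity of the normalization is handled explicitly rather than absorbed into an $err$ term; what the paper buys is that the leading constant in the recursion is exactly $1/\ell'$ with no bootstrapping needed, whereas your higher-order remainder injects an extra $(1 + O(\|\Delta_{j-1}\|/\|\vy^{(j-1)}\|))$ into the multiplicative factor. That factor is $1 + O(1/\sqrt{\ell})$ once the induction closes, so for $\ell$ bounded away from zero you recover $\sqrt{\layer/\ell}$, and for the finitely many small $\ell$ the crude bound $\Psi_\ell \le \|F_{-\ell}\| + \|F\| = O(\sqrt{\layer})$ already suffices for the final $O(1/\sqrt{\layer})$ conclusion; it is worth saying this explicitly when you write it up.

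One further point in your favor: the paper's proof of (b) and (c) stops at the $\Psi_\ell$ and $\|F\|$ estimates, which only shows that the upper bound of \cref{thm:err_prop_inf} fails to vanish. Your plan to exhibit a concrete scale-invariant loss realizing $|\netloss_2 - \netloss_1| = \Omega(1)$ is what the ``can be $\Omega(1)$'' phrasing actually demands, and is a strict improvement on the paper's treatment.
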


In the Appendix, we show similar results for perfectly aligned layers (Lemma~\ref{lem:identical}). We can consider even more general scenarios, where the layers parameters are expressed as $\tau$-combinations of a Gaussian and a shared matrix (Appendix Fig. \ref{fig:linear_behavior}). 
We run simulations and observe that for each $\tau$, the loss gap between a $\layer-1$ random {\layerdrop} and the full model scales as $\mathcal{O}(\layer^{-0.4})$.

\begin{figure*}[!htbp]%
    \centering
    \begin{subfigure}[b]{0.24\linewidth}
    \centering
    \includegraphics[width=\textwidth]{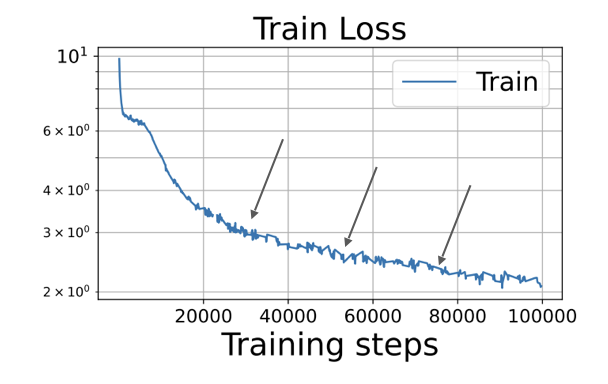}
    \label{fig:bert_train}
    \end{subfigure}\hfill
    \begin{subfigure}[b]{0.24\linewidth}
    \centering
    \includegraphics[width=\textwidth]{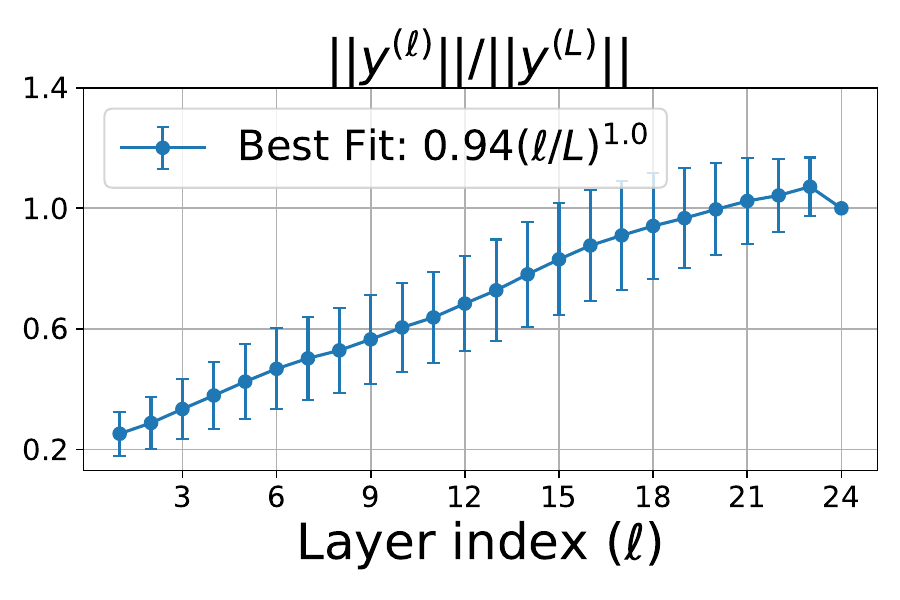}
    \label{fig:24bert_n}
    \end{subfigure}\hfill
    \begin{subfigure}[b]{0.24\linewidth}
    \centering
    \includegraphics[width=\textwidth]{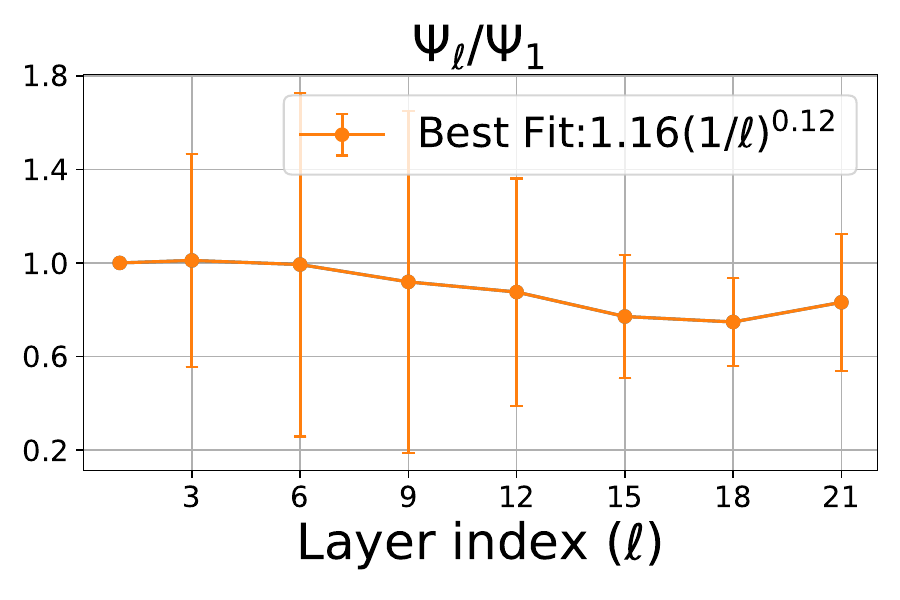}
    \label{fig:24bert_p}
    \end{subfigure}\hfill
    \begin{subfigure}[b]{0.24\linewidth}
    \centering
    \includegraphics[width=\textwidth]{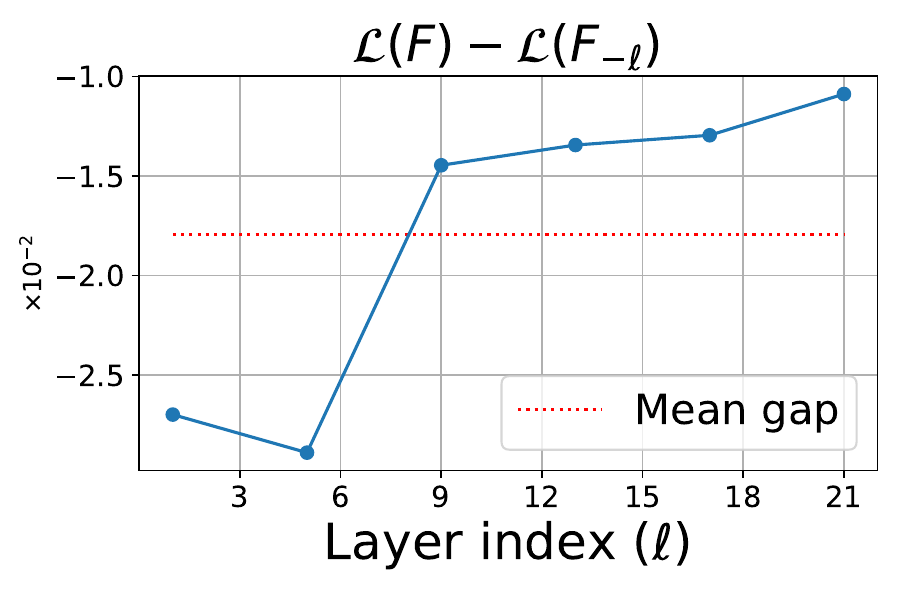}
    \label{fig:24bert_l}
    \end{subfigure}
    \caption{\looseness-1 
    (a): Training trajectory of BERT under $4$-stage 6-8-10-12 \method{}, with the stage transitions (denoted by arrows);
    (b), (c), (d): Stability study on \bertlarge{} trained for $50k$ steps by {\method} with subnetworks of length $L-1$. Behavior of (b) norms of intermediate activations $\vy^{(\ell)}$, (c) $\Psi_{\ell} / \Psi_{1}$ (\cref{eq:lipstacklayers}), and (d) Loss gap between different random {\layerdrop} $F_{-\ell}$ and model $F$, given by $\mathcal{L}(F) - \frac{1}{\layer} \sum_{\ell=1}^{\layer} \mathcal{L}(F_{-\ell})$. Key observations: (b) Norms of the intermediate activations grow linearly with $\ell$, (c) $\Psi_{\ell}$ \rebuttal{changes} slowly with $\ell$ as $(\frac{\layer}{\ell})^{0.12}$, 
    suggesting a worse-case bound of $\mathcal{O}(\layer^{-0.88})$  on $\mathcal{L}(F) - \frac{1}{\layer} \sum_{\ell=1}^{\layer} \mathcal{L}(F_{-\ell})$ based on \Cref{thm:err_prop_inf} (d) Interestingly, $\mathcal{L}(F) \le \frac{1}{\layer} \sum_{\ell=1}^{\layer} \mathcal{L}(F_{-\ell})$, even when model is trained with $L-1$ random subnetworks.}
    \label{fig:bert24_behavior}

    \vspace{-0.1in}
\end{figure*}

\vspace{-0.5em}
\section{Related works}
\label{sec:related}
\vspace{-0.5em}
Here, we focus on the most relevant works. Refer to \cref{sec:add_related_works_App} for more related works.

\looseness-1{\bf Stochastic depth.}  
\method{} is similar to stochastic depth, which drops layers with a {\em fixed probability} during training to reduce the cost of training deep networks \citep{huang2016deep}. A key distinction is that the probability of dropping layers is fixed during training and can thus be viewed as a regularization \citep{pham2019very,steiner2021train,tolstikhin2021mlp,liu2023dropout}. Fixed stochastic depth has also been used for inference efficiency \citep{fan2019reducing}. \citep{devvrit2023matformer,valipour2023sortednet} explore nested training for dynamic inference with multiple subnetworks.

\looseness-1{\bf Subnetwork training.} %
Training random paths and subnetworks has been used in other contexts, such as parameter-efficient fine-tuning \citep{houlsby2019parameter,pfeiffer-etal-2021-adapterfusion,hu2022lora,liu2022few} to reduce memory footprint, distributed and federated training \citep{dun2022resist}, and incremental learning \citep{jathushan2019random} to avoid forgetting in continual learning. However, these approaches are not aimed at reducing FLOPs during pretraining.

 \vspace{-0.5em}
\section{Limitations, Conclusion and Future Work}\label{sec:conclusion}
\vspace{-0.5em}

\looseness-1We propose a stagewise training framework of \emph{progressive subnetworks} for efficient pretraining, and evaluate a natural instantiation of this framework ({\method}) based on training random paths/subsets of layers.
Overall {\method} yields better quality language models than baseline training, while showing a reduction of $1.2-1.33 \times$ the total FLOPs.
At the same speedup, it is also better than prior layer dropping and competitive to stacking based approaches. 

The current analysis provides insights into loss stability at stage boundaries but does not explain the observed decrease in loss during transitions. Furthermore, schedule selection is not well understood due to expensive compute necessity and deserves more exploration. Our analysis does not explain the role of initial full-model warmup for UL2 and the desirable inductive biases of {\method} style training, such as better downstream performance and smaller variance. A deeper understanding of these phenomena could lead to the development of more efficient training algorithms.

\newpage

\bibliography{arxiv}
\bibliographystyle{iclr2025_conference}

\newpage
\appendix
\appendix
\tableofcontents

\section{Broader Impact}\label{sec:broader_impact} 

Training LLMs demands extensive computational resources and infrastructure. Our paper proposes an algorithm that aims to accelerate their pre-training. Accelerating the training of large language models (LLMs) can significantly reduce their environmental impact by lowering energy consumption and minimizing carbon footprints. Optimizations in algorithms and infrastructure will lead to more efficient use of computational resources and will promote sustainability of the field of AI.

Furthermore, we observe benefits of structured pre-training (simple-to-complex) of these models on downstream tasks. Hopefully, these ideas can lead to better pre-training strategies for LLMs.

\begin{figure*}[t]%
    
    \centering
    \begin{subfigure}{0.3\textwidth}
    \centering
    \includegraphics[width=\textwidth]{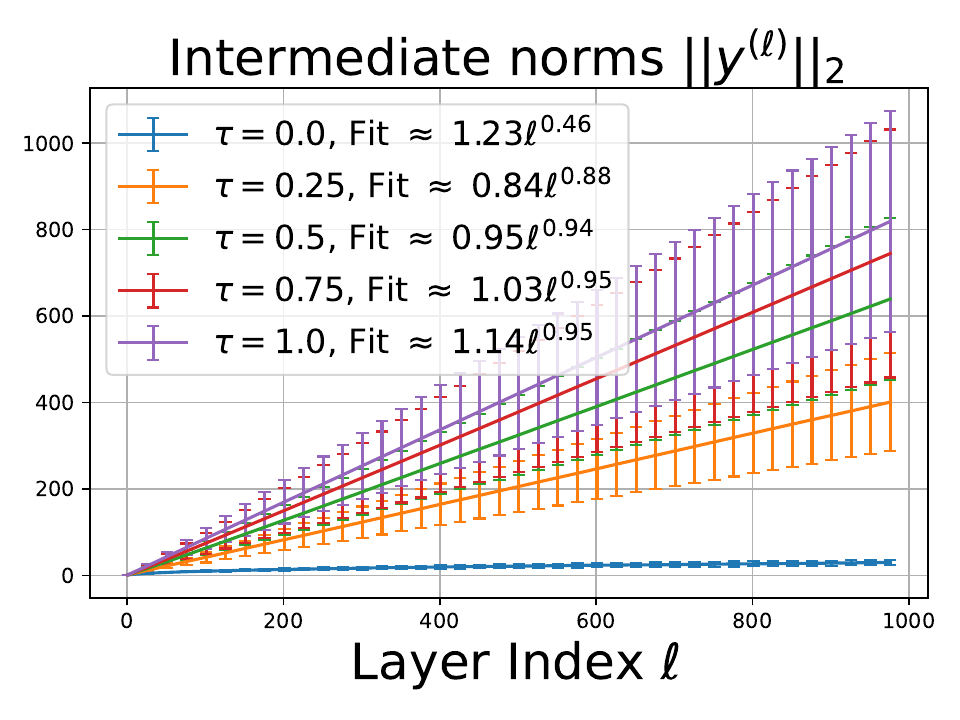}
    \label{fig:cls_vs_prompt_a}
    \end{subfigure}\hfill
    \begin{subfigure}{0.3\textwidth}
    \centering
    \includegraphics[width=\textwidth]{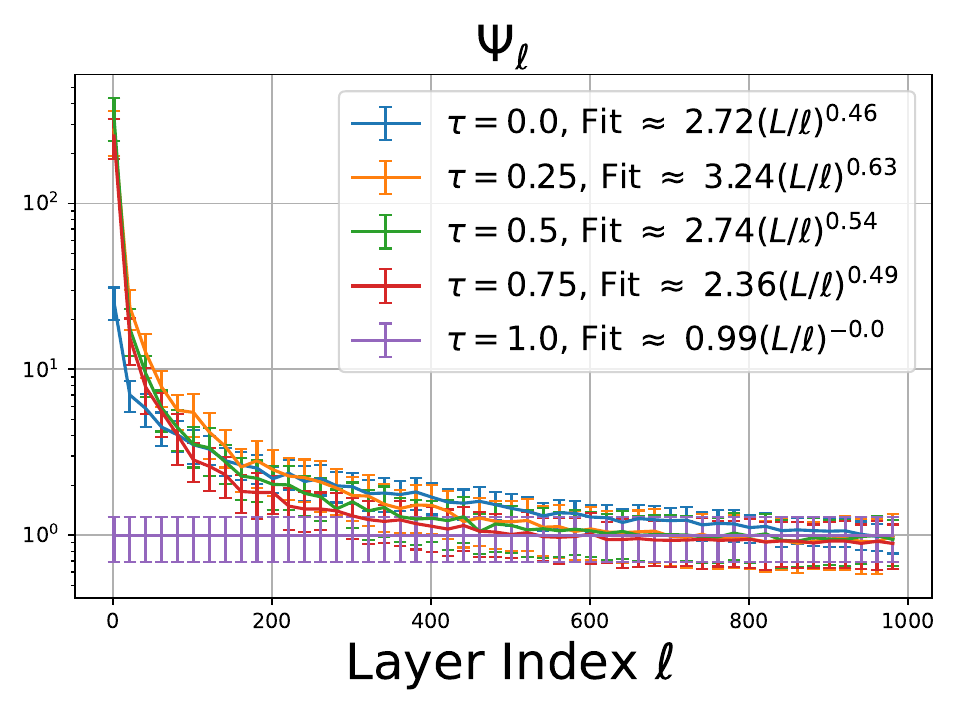}
    \label{fig:cls_vs_prompt_b}
    \end{subfigure}\hfill
    \begin{subfigure}{0.3\textwidth}
    \centering
    \includegraphics[width=\textwidth]{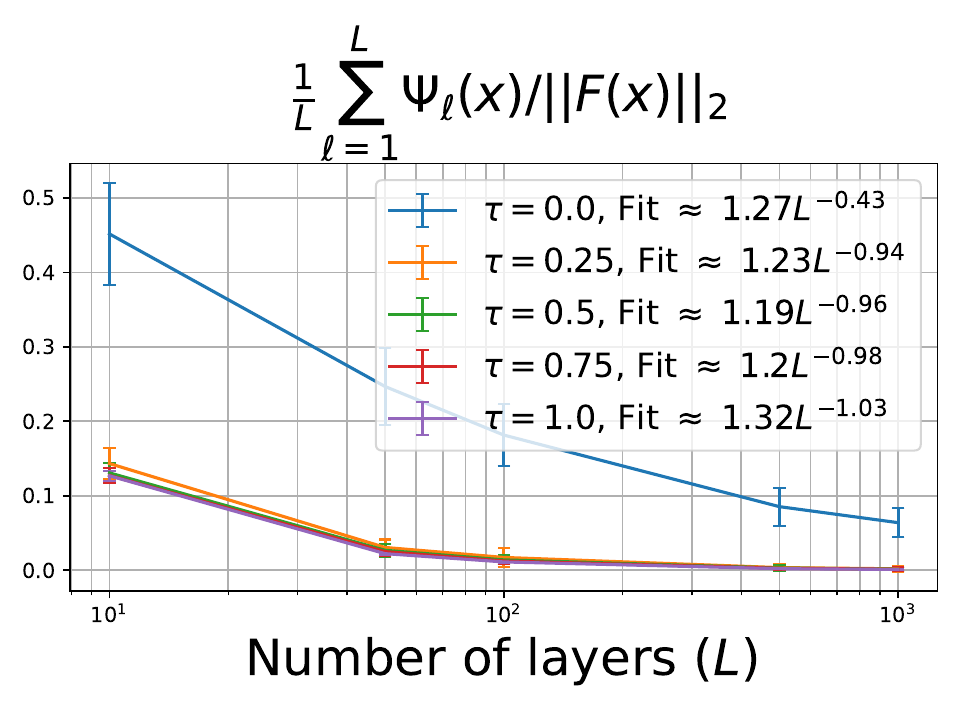}
    \label{fig:cls_vs_prompt_c}
    \end{subfigure}
    \caption{\looseness-1Behavior on a linear residual network with normalization layers with $100$ random samples from $\sphere^{\dims-1}$, and dimension $\dims=100$. The parameters of each layer $\ell$ is represented as $\sqrt{\tau} \mA + \sqrt{1-\tau} \mG^{(\ell)}$ for a shared matrix $\mA \in \RR^{\dims \times \dims}$ with $\norm{\mA}_2 \le 1$ and $\mG^{(\ell)} \sim \mathcal{N}\left(0, \dims^{-1/2}\mI\right)$. Left to right: Behavior of (a)  the norms of intermediate activation $\vy^{(\ell)}$ with index $\ell$, (b)  $\Psi_{\ell}$ (\cref{eq:lipstacklayers}) for each stack of layers $F_{\ell:\layer}$, and (c) $\frac{1}{\layer} \sum_{\ell=1}^{\layer} \Psi_{\ell} (\vx) / \norm{ F(\vx) }_2$ that appears in our bounds in \cref{thm:err_prop_inf}.
    }
    \label{fig:linear_behavior}
\end{figure*}

\section{More extensive downstream tests on UL2 models}
\rebuttal{We conduct extensive downstream evaluation of  the trained models from \cref{tab:ul2_1b} on additional downstream tasks and under higher shot in-context setting. Additional tasks include QA tasks like natural QA \citep{kwiatkowski-etal-2019-natural}, web QA \citep{berant2013semantic}, DROP \citep{dua2019drop}, CoQA \citep{reddy2019coqa}, and QuAC \citep{choi2018quac}; and completion tasks like  Storyclose \citep{mostafazadeh2017lsdsem}, Hellaswag \citep{zellers2019hellaswag}, and LAMBADA \citep{paperno2016lambada}.}

\section{Additional related works}\label{sec:add_related_works_App}

{\bf Residual networks as ensembles.} 
Training with random paths can be weakly viewed as ensembles over paths, and the connection between ResNets \citep{he2016deep} and ensembles of shallow networks was first point out in \citep{veit2016residual} for vision models. \citep{dong2021attention} showed the same phenomenon for self-attention models, where longer paths lead to rank-collapse in the self-attention module. \cite{chang2023revisiting} study vision transformers~\citep{dosovitskiy2020image} as a cascade of multiple paths and propose pruning and self-distillation to remove long paths and improve performance. All these works mainly provide a novel perspective or inference efficiency but do not focus on training efficiency.

{\bf Learnable scaling of residual blocks.} \cite{bachlechner2021rezero}, \cite{zhang2019fixup}, \cite{touvron2021going} consider learnable scales on the output of the residual blocks. These works aim to understand favorable initialization of the learnable scales under various constraints for faster training, which is very different from the role of scaling in our algorithm.

\paragraph{Early exit for efficient inference} A lot of recent works have focused on improving inference efficiency for large language models. ~\citep{lei2023conditional,tay2022ul2,del2023skipdecode,xin2020deebert,zhou2020bert,hou2020dynabert}. However, none of these works focus on efficient fine-tuning.  \cite{lei2023conditional} modified pre-training by substituting resource-intensive MLP computations with straightforward classifiers for a predetermined fraction of tokens within the sequence. It's worth noting that their primary focus was on accelerating inference time rather than enhancing the efficiency of the pre-training process.

\begin{table}[tbp]
\centering
\small
\caption{Additional rows from \cref{tab:ul2_1b}: we compare baseline runs at two learning rate schedules, and run compare {\method} and Gradual stacking with a new 6-12-18-24 schedule. \textbf{Key Observations}: (a) Cosine decay learning works best for the baseline, compared to Square-root schedule proposed by \cite{tay2022ul2}, (b) The performance of gradual stacking improves with the new schedule, implying a performance dependence with appropriate scheduling, and (c) {\method}'s performance doesn't change much with the new schedule.
}
\label{tab:ul2_1b_appendix}
\begin{tabular}{lccccccc} \toprule
& Rel. FLOPs & Eval Loss & Trivia QA & Tydi QA & SQuADv2 &  SGLUE  & Avg.\\ 
\midrule
Baseline ({\tiny Square-root LR decay}) & 1.2 & 2.07 & 23.4 & 31.9 &  44.3 &  60.0 & 39.9 \\
Baseline ({\tiny Cosine LR decay}) & 1.2 & 2.06  & 25.0  & 34.4  & 42.1 & 60.0 & 40.4\\
\midrule
6-12-18-24 Stacking & 1 &  2.06 & 22.1  & 34.6  & 38.0  & 60.5 & 38.8\\
12-16-20-24 Stacking & 1 & 2.08  & 20.1  &  28.6  & 36.0 & 60.4 & 36.3\\
\midrule
12-16-20-24 {\method} & 1 & 2.08  & 22.2  & 38.2 & 40.6 & 60.1 & 40.3\\
(+{\tiny 30k initial full-model train}) & 1 & 2.06  & 25.8 & 36.7  & 44.1  & 60.9 & 41.9\\
6-12-18-24 {\method} & \\
(+{\tiny 30k initial full-model train}) & 1 & 2.06  & 24.2   & 37.3  & 42.3  & 61.1 & 41.2\\
\bottomrule
\end{tabular}
\end{table}

\begin{figure}[!htbp]%
    \centering
    \begin{subfigure}{0.45\textwidth}
    \centering
    \includegraphics[width=\textwidth]{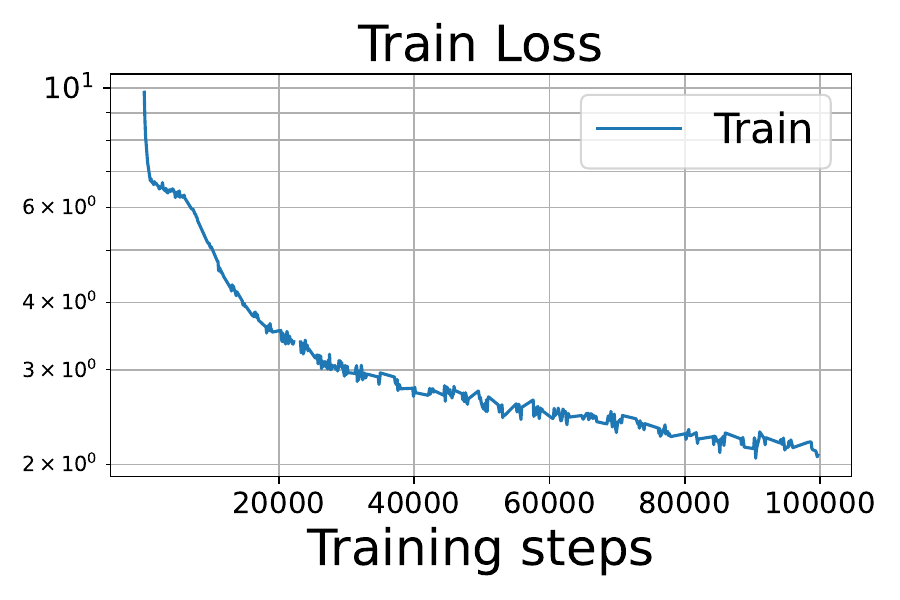}
    \label{fig:train_ablate_app}
    \end{subfigure}\hfill
    \begin{subfigure}{0.45\textwidth}
    \centering
    \includegraphics[width=\textwidth]{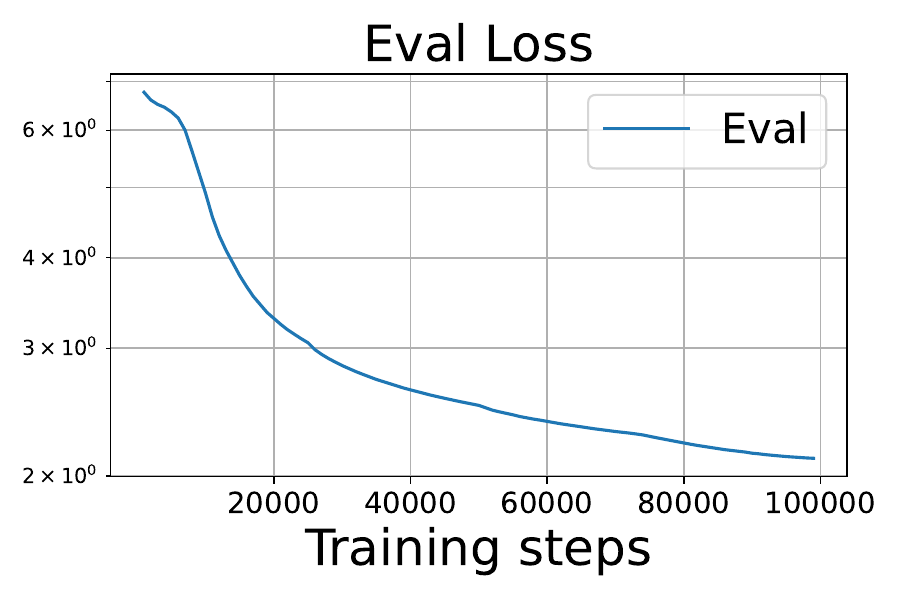}
    \label{fig:eval_ablate_app}
    \end{subfigure}\hfill
    \begin{subfigure}{0.32\textwidth}
    \end{subfigure}
    \caption{\looseness-1Train and Evaluation Loss behavior for a \bertbase{} model trained with {\method} for 100k steps. We have $4$ stages with 6-8-10-12 schedule (see \cref{sec:experiments} for details). The boundaries are at 25k, 50k, and 75k. Key observation: the model's train and evaluation loss change smoothly across stage transitions, indicating stability of the model to {\layerdrop} training.}
    \label{fig:bert100kruns_traintest}
\end{figure}

\section{Additional Experiment details}\label{sec:add_details}

\subsection{Details on \progressivelayerdrop{}} \label{sec:progressivedrop}
The progressive layer dropping algorithm \cite{zhang2020accelerating} is given in \cref{alg:pld_alg}.
\progressivelayerdrop{} employs a time and depth schedule that determines the probability of dropping each block. The time schedule begins with a zero probability of dropping each block. Then it increases this probability throughout the training process until it reaches a maximum of $(1-\bar{\alpha})$, where the hyperparameter $\bar{\alpha}=0.5$. We vary $\bar{\alpha}$ depending on the average FLOP perplexity we want to achieve in the end. With $\bar{\alpha} = p$, the FLOPs reduce by $1-(p/2)$.

The depth schedule ensures that blocks located earlier in the model are dropped with a lower probability than those located later/deeper. An important hyperparameter of the depth schedule in \progressivelayerdrop{} is $\gamma_f$, which controls the rate at which the probability of dropping layers increases. A higher value of $\gamma_f$ leads to a quicker increase in the probability. \citep{zhang2020accelerating} set $\gamma_f$ to 100 in their experiments.

\subsection{Details on variants of gradual stacking} \label{sec:stacking_variants_app}
Here, we give the details of the variants of gradual stacking, which we use to compare \method{} against in \cref{tab:raptr_v_stacking}. We simply initiate the depth-wise growth operator of each method to compare to \method{}.

\paragraph{bert2BERT} \citep{chen2021bert2bert} bert2BERT is a stacking based strategy that aims to progressively grow BERT models along width and depth at different stages of training. bert2BERT used progressive stacking \citep{gong2019efficient} to grow the depth of bert models during the course of training. Here, the model's depth is doubled at each transition phase (e.g. 3 layer BERT $\to$ 6 layer BERT $\to$ 12 layer BERT).

\paragraph{MSG} \citep{yao2023masked} MSG is a stacking based strategy that aims to progressively grow BERT models along $4$ dimensions: width, depth, number of attention heads, and MLP intermediate dimension at different stages of training. The main recipe is a small warm-up during phase transitions, where the model is slowly allowed to grow from its previous configuration to its new one. This is done to preserve the loss during phase transitions. MSG uses gradual stacking to increase the depth of the model, where top few layers are repeated to initialize new models at phase transitions.

\paragraph{LEMON} \citep{wang2023lemon} LEMON modified gradual stacking to instead use consecutive stacking. Here, the new layers are put consecutive to the source layers, whose parameters are copied to initialize the new ones. For example, for a schedule that grows a model from $6$ layers to $9$ layers, we choose the top $3$ layers as source layers and insert new layers between layers $(4, 5)$, $(5, 6)$, $(6,)$ repeating the parameters of layers $4, 5, 6$ respectively.

\paragraph{MSG with layer interpolation} We modify MSG's depth growth operator to LEMON's growth opertor. Here, instead of using the top few layers to stack on top of the existing model, we use consecutive stacking to put the newly initialized layers close to their source layers.

\textit{Remark:} In order to keep the comparisons uniform across the methods, we use the same learning rate schedule as \method{} across all methods.

\subsection{Details for polynomial training}\label{sec:toy_setting}
We present in   \cref{fig:raptr_v_layerdrop_toy} the results from a 20-layer residual network trained on 100-dimensional data. Our $f^{\star}$ has max degree $k=10$ restricted to the first $t=20$  variables ($\vx_{1:20}$) in the input. We pick $m=20$ random basis polynomials for each degree $\ell \le 10$.  
We set the average drop rate of \progressivelayerdrop{} to $\bar{\alpha}=0.6$, such that the average FLOPs is $80\%$ lower to a baseline run. For \method{}, we use $4$-stage training, with the stages represented by its $(p, \fixed)$ as $(8/20, \{\}), (12/20, \{\}), (16/20, \{\}),$ and$(20/20, \{\})$ respectively.  
The lengths of each stage have been set such that the average FLOPs is $80\%$ lower to a baseline run (please see \cref{sec:schedule_selection}).

\subsection{Details for BERT} 
\paragraph{Equal step Pretraining}  The batch-size is set to $256$ and $512$ for \bertbase{} and \bertlarge{} respectively. Following \citep{reddi2023efficient}, we use a learning rate warmup of 10k steps and keep learning rate constant in all phases, followed by a linear learning rate decay to $0$ in the final full model training phase. We set the peak learning rate as $10^{-4}$ in all our experiments.

\paragraph{Equivalent flop Pretraining} To make a fair comparison across all methods, we tune the peak the learning rate in the grid $\{1, 1.6, 2.5, 4\}\times 10^{-4}$  for \bertbase{} and grid $\{1, 1.6\} \times 10^{-4}$ for \bertlarge{}.

\paragraph{Finetuning} We fine-tune BERT models for $3$ epochs with batch-size $32$, using a learning rate grid search over $\{1, 2, 3, 5\} \times 10^{-5}.$ We use a linear warmup schedule over initial $6\%$ training steps, followed by constant learning rate schedule for the rest of the training.

\begin{minipage}[H]{\textwidth}
          \begin{algorithm}[H]
    \centering
    \caption{Progressive layer dropping (\progressivelayerdrop{}) \cite{zhang2020accelerating}}
    \label{alg:pld_alg}
      \begin{algorithmic}[1]
        \STATE \textbf{Input:} iterations $T$, layer keep probability $\bar{\alpha}$, temperature budget $\gamma_f>0$, layers $L$, functions (self-attention, layer-norm, feed-forward) $f_{ATTN}, f_{LN}, f_{FFN}$, loss function $\mathcal{L}$, data $(\mathbf{x}_0, \mathbf{y})$, output layer $f_O$
        \STATE $\gamma \leftarrow \frac{\gamma_f}{T}$
        \FOR {$t$\ $\leftarrow$ 1 to $T$}
            \STATE $p \leftarrow 1$ \algorithmiccomment{Keep probability.}
            \STATE $\alpha_t \leftarrow (1 - \bar{\alpha}) \exp(-\gamma \cdot t) + \bar{\alpha}$
            \STATE $p_d \leftarrow \frac{1 - \alpha_t}{L}$ \COMMENT{Layer decay.}
            \FOR {$i$\ $\leftarrow$ 0 to $L-1$}
                \STATE $s \sim \operatorname{Bernoulli}(p)$ \COMMENT{Keep or drop.}
                \IF {$s == 0$} 
                    \STATE $\mathbf{x}_{i+1} \leftarrow \mathbf{x}_i$ \COMMENT{Drop.}
                \ELSE 
                    \STATE $\mathbf{x}_{i}' \leftarrow \mathbf{x}_{i} + \frac{f_{ATTN}(f_{LN}(\mathbf{x}_{i}))}{p}$ %
                    \STATE $\mathbf{x}_{i+1} \leftarrow \mathbf{x}_{i}' + \frac{f_{FFN}(f_{LN}(\mathbf{x}_{i}'))}{p}$ %
                \ENDIF
                \STATE $p \leftarrow p - p_d$ \COMMENT{Decay prob.}
            \ENDFOR
            \STATE $\ell \leftarrow \mathcal{L}(f_O(\mathbf{x}_L), \mathbf{y})$
            \STATE $f_{ATTN}, f_{LN}, f_{FFN}, f_O \leftarrow \operatorname{Update}(\ell)$
        \ENDFOR
      \end{algorithmic}
     \label{alg:dropping}
     \end{algorithm}
     \end{minipage}

\subsubsection{Additional pretraining experiments}

\looseness-1{\bf Results at equal FLOPs.} Inspired by \citep{kaddour2023no}, we further compare {\method} and gradual stacking to baseline training by adjusting the number of training steps of baseline training to match its FLOPs to {\method} (\Cref{tab:bert_12_flop}).
For \bertbase{}, we observe that at shorter FLOP experiments, {\method} achieves better validation loss compared to baseline training and gradual stacking. This difference gets smaller as we move to larger horizon settings.
Similar findings for \bertlarge{}.
\begin{table}[t]
\centering
\small
\caption{Equal FLOPs comparisons for \bertbase{} and \bertlarge{} with extensive peak LR tuning. For \bertbase{} we use the best performing schedules from \cref{tab:bert_12}. For \bertlarge{}, we use 6-12-18-24 for both {\method} and stacking. FLOPs denotes the number of steps involved in baseline training.
Key observations: for \bertbase{} (a)   {\method} achieves  \rebuttal{better} loss compared to baseline at all FLOP measures, with larger differences for fewer FLOPs. (b) {\method} has $0.02$ better loss than gradual stacking at all FLOPs. (c) For \bertlarge{}, stacking and {\method} are competitive to each other at all FLOPs. Both methods have better loss than baseline at lower FLOPs. }
\label{tab:bert_12_flop}
\begin{tabular}{ccccc}
\toprule
Model & FLOPs & Baseline &  Stacking & \method{} \\
\midrule
\multirow{3}{*}{\bertbase{}} & $75k$ & 2.09 & 2.02 & \textbf{2.01} \\
& $170k$ & 1.90 & 1.88 & \textbf{1.86} \\
& $510k$ & 1.74 & 1.75 & \textbf{1.73} \\
\midrule
\multirow{3}{*}{\bertlarge{}} &$62.5k$ & 1.84 & \textbf{1.78} & 1.80\\
&$140k$ & 1.63 & \textbf{1.60} & 1.61\\
&$625k$ & \textbf{1.40} & 1.41 & 1.41 \\
\bottomrule
\end{tabular}
\end{table}

\subsection{Details for UL2}\label{sec:ul2_expdetails}

\paragraph{Pretraining} 
Similar to our experiments on BERT, we use a linear warmup of $10$K steps, followed by constant learning rate in all stages, and a cosine decay to $0.1 \times$ the peak learning rate in the final full model training stage. The peak learning rate was searched over $\{10^{-2}, 5\times 10^{-3}, 2\times10^{-2}, 3\times 10^{-2}\}$ and was fixed to $10^{-2}$.

We use Adafactor~\citep{shazeer2018adafactor} optimizer and train with a batch size $512$ for $400k$ steps on a mixture of Arxiv, C4~\citep{2020t5}, Github, and Wikipedia~\citep{wikidump} datasets, with mixing ratios $9\%, 57\%, 17\%, 17\%$ respectively (for a total of 100B tokens). This roughly corresponds to 0.8 epochs of C4.

Reported datasets in \cref{tab:ul2_1b} include Trivia QA~\citep{joshi2017triviaqa}, Tydi QA~\citep{clark2020tydi}, SQuADV2~\citep{rajpurkar2018know}, and SuperGLUE~\citep{wang2019superglue}.  For QA tasks, we report Exact match scores, and report average accuracy for SupreGLUE tasks. To reduce variance, we report average performance across $3$ runs for the most representative setting for each method.

Boundaries for 12-16-20-24 {\method} and Gradual Stacking are 40k, 120k, and 240k (decided using \cref{sec:schedule_selection}). The average  {\layerdrop} length used during training turns out to be $20$ with this schedule. When we introduce an initial 30k full-model training for {\method}, we shift the boundaries by 30k so that the average {\layerdrop} length stays $20$.

\subsubsection{Additional Pretraining experiments} In \cref{tab:ul2_1b_appendix}, we have the following additional runs compared to \cref{tab:ul2_1b} in the main paper. 

We first compare Baseline model with cosine decay learning rate to a Baseline model with Square-root learning rate decay that was used in the original paper~\citep{tay2022ul2}. We observe that the model with square-root learning rate performs significantly worse compared to the model with cosine decay learning rate. Hence, we use model with cosine decay learning rate as our baseline.

Secondly, we try another schedule for Gradual Stacking, which contains 4 stages and is represented by 6-12-18-24.  The stage lengths have been adjusted such that the average number of layers used over training is $20$. We set the stage boundaries as 17k, 74k, and 172k respectively. We observe that 6-12-18-24 Gradual Stacking performs much better than  12-16-20-24 Gradual Stacking on various downstream tasks. This signals towards the dependency of Gradual Stacking on proper schedule selection. 

On the other hand {\method} doesn't show a big difference with 6-12-18-24 schedule, when compared to 12-16-20-24 schedule. A proper analysis of both the methods on schedule dependencies is kept for future work.

\rebuttal{
\section{Schedule selection} \label{sec:schedule_selection}
}
\rebuttal{
We follow two schedules from \cite{reddi2023efficient}, Equal and Proportional, to build the lengths of each stage in {\method} and Stacking. For Equal scheduling, used in \Cref{tab:bert_12,tab:bert_12_flop}, we split training into $\stage$ equal stages.
For Proportional scheduling in \Cref{tab:ul2_1b}, we increase the length of a stage in proportion to index of the stage. For example, for 12-16-20-24 {\method} and Stacking with proportional scheduling for $400k$ training steps, the stage lengths are set as $40k$, $80k$, and $120k$ respectively.
}

\rebuttal{
The flop counts are decided on the basis of average {\layerdrop} length during training. For example, if the average {\layerdrop} length is $18$ for a model with $24$ layers, we consider relative flops of {\method} compared to baseline as $0.75$.
}

\rebuttal{
For 12-16-20-24 {\method} with Equal schedule, the average {\layerdrop} length during training is given by $18$, while for Proportional schedule, the average {\layerdrop} length during training is given by $20$. However, when either Equal and Proportional schedules don't return the target average {\layerdrop} length, we re-consider the stage lengths by taking the closest candidate among Equal and Proportional schedules and remove equal steps from each of the first $\stage - 1$ stages and add to the final full model training stage. For example, in \cref{tab:ul2_1b_appendix}, with 6-12-18-24 {\method} and target average {\layerdrop} length $20$, Proportional schedule returns an average {\layerdrop} length of $18$ and stage lengths $40k$, $80k$, $120k$ and $160k$ respectively. We then aim to find $x$ such that stage lengths of $40k - x$, $80k - x$, $120k - x$ and $160k + 3x$ returns the desired average {\layerdrop} length. On solving for $x$, we get $x = 22k$. 
}

\rebuttal{
\subsection{Ablation on different schedules for \bertbase{}}
}

\rebuttal{
We conduct experiments on {\method} with different stage schedules that have same relative flop counts in \cref{tab:bert_12}. We observe on \bertbase{} that  different schedules differ by atmost $0.01$ in pre-training loss and $0.1-0.5$\% in downstream performance on average. Thus, we conclude that {\method} is robust to the stage schedule selection.
}

\rebuttal{
For UL2, we use the following stage schedules to minimize computational overhead. We restrict the number of phases in {\method} to $4$. The first stage selects a sub-network with half of the layers at random at each step. We increase the sub-network size by an equal quantity in each stage transition, such that we train the full model in the final phase. 
}

\begin{table}[htbp]
\centering
\caption{\rebuttal{Performance of {\method} on \bertbase{}  with different schedules. The average {\layerdrop} size for each schedule is $9$, which makes each schedule $1.33$x faster  than the baseline model (provided for reference). We use the same hyperparameters as used in \cref{tab:bert_12}. We observe minor differences between different schedules, indicating robustness of {\method}.} }
\label{tab:bert_different_schedules}
\begin{tabular}{lcc|cccc} 
\toprule
  {\method} schedule & Rel. flops & Eval loss & MNLI & QNLI & SST-2 & Avg.\\
\toprule
3-6-9-12 & 1 & 1.76 & 82.0 & 89.6 & 92.0 & 87.9 \\
4-8-12 & 1 & 1.76 & 81.9 & 89.3 & 91.5 & 87.6  \\
6-9-12 &  1 & 1.75 & 82.3 & 89.2 & 91.0 & 88.0   \\
6-8-10-12 & 1 & 1.75 & 82.1 & 89.8 & 92.4 & 88.1 \\
\midrule 
Baseline & 1.33 & 1.76  & 81.5 & 90.5 & 91.4 & 87.8\\
\bottomrule
\end{tabular}
\end{table}

\rebuttal{\subsection{Ablation on fixed layer set $\fixed$ for \bertbase{}}
}
\rebuttal{
We conduct experiments on 6-8-10-12 {\method} on \bertbase{} with different fixed layer set $\fixed$ (\cref{tab:bert_different_fixed}). We observe that fixed set selection of $\{1, 12\}$ performs the best among other candidates. This selection however can be expensive to verify for other settings like UL2. Hence, in all subsequent experiments, we continue with the first and last layer fixed in all stages of {\method}.
}

\begin{table}[htbp]
\centering
\caption{\rebuttal{Performance of {\method} on \bertbase{}  with different $\fixed$ set for {6-8-10-12} {\method} run for $100k$ steps.  We use the same hyperparameters as used in \cref{tab:bert_12}, except the training steps reduced to $100k$ steps. We observe that fixing the first and the last layer at all times lead to slightly better performance, compared to other candidates.}} %
\label{tab:bert_different_fixed}
\begin{tabular}{lc|cccc} 
\toprule
  $\fixed$ set & Eval loss & MNLI & QNLI & SST-2 & Avg.\\
\toprule
 $\{\}$ & 2.12 & - & - & - & -\\
 $\{1\}$ & 2.13 & - & - & - & -\\
 $\{1, 12\}$ & \textbf{2.11} & 78.6 & 86.77 & 88.69 & 84.7 \\
 $\{1, 2, 12\}$ & 2.13 & - & - & - & -\\
 $\{1, 11, 12\}$ & 2.12 & 77.4 & 87.0 & 89.4 & 84.6 \\
 $\{1, 2, 11, 12\}$ & 2.14 & 78.0 & 86.1 & 88.8 & 84.3\\
 $\{1, 2, 3, 11, 12\}$ & 2.18 & 77.4 & 86.5 & 88.4 & 84.1\\
 $\{1, 2, 10, 11, 12\}$ & 2.16 & - & - & - & -\\
\bottomrule
\end{tabular}
\end{table}

\rebuttal{\section{System speed of different subnetworks}\label{sec:speedups}}
\rebuttal{We report the speed of training in steps/sec for UL2 models with different {\layerdrop} sizes on a TPU 16x16 chips  with percore batch size of $2$ in \cref{tab:ul2_speedups}. 
}
\rebuttal{
Suppose $t$ is the total time for $400k$ steps with a 24-layer model in UL2 training. In \cref{tab:ul2_1b}, we use 12-16-20-24 {\method} that has $20\%$ "theoretical" flop count reduction, with the stage lengths given by $40k$, $80k$, $120k$, and $240k$ respectively (\cref{sec:schedule_selection}). Theoretically, the necessary time required be $0.83t$.
Using the training speed numbers of \cref{tab:ul2_speedups}, it takes approximately $0.84t$ to complete $400k$ steps with {\method} on our machine, close to the theoretical training time.}

\begin{table}[htbp]
\centering
\caption{Training Steps/sec at different stages on 24-layer UL2}
\label{tab:ul2_speedups}
\begin{tabular}{lc} 
\toprule
  {\layerdrop} size & steps/sec \\
\toprule
12 & 2.0\\
16 & 1.6\\
20 & 1.4\\
24 & 1.1\\
\bottomrule
\end{tabular}
\end{table}

\rebuttal{
We trained our BERT-base models on $4 \times 4$ TPU cores with percore batch size of $8$. We report the speed of training in steps/sec with different {\layerdrop} sizes. 
}

\begin{table}[htbp]
\centering
\caption{Training Steps/sec at different stages on BERT-base}
\label{tab:bert_speedups}
\begin{tabular}{lc} 
\toprule
  {\layerdrop} size & steps/sec \\
\toprule
6 & 17.9 \\
8 & 14.3 \\
10 & 11.8 \\
12 &  10.2\\
\bottomrule
\end{tabular}
\end{table}

\rebuttal{
So, looking at \cref{tab:bert_12_flop}, let $t$ be the total time for completing $675k$ steps in BERT-base training. In the case of a 6-8-10-12 {\method} with Equal stage lengths, it takes approximately $0.79t$ to complete $675k$ steps on our machine. For an ideal machine that perfectly realizes the theoretical speedups based on the desired flop count reduction, the total time required will be $0.75t$.
}

\section{Efficient implementation of \method{} in distributed training}
\label{sec:efficient_raptr_impl}

\begin{figure}[!htbp]%
    \centering
    \begin{subfigure}{0.45\textwidth}
    \centering
    \includegraphics[width=\textwidth]{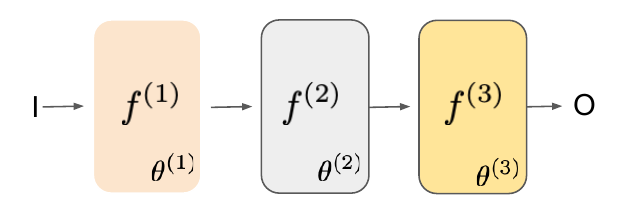}
    \caption{A representative 3-layer network}
    \end{subfigure}\hfill
    \begin{subfigure}{0.45\textwidth}
    \centering
    \includegraphics[width=\textwidth]{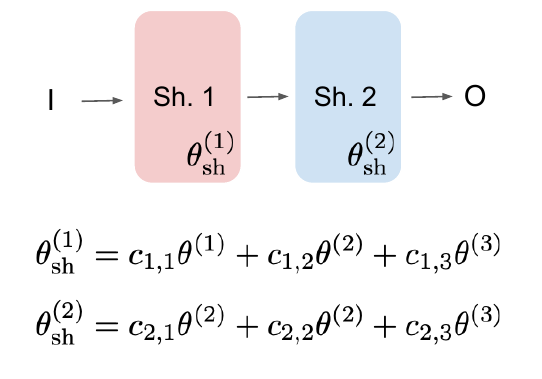}
    \caption{ Shared base sub-network across the $3$ possible $2$-layer sub-networks}
    \end{subfigure}\hfill
    \begin{subfigure}{0.8\textwidth}
    \centering
    \includegraphics[width=\textwidth]{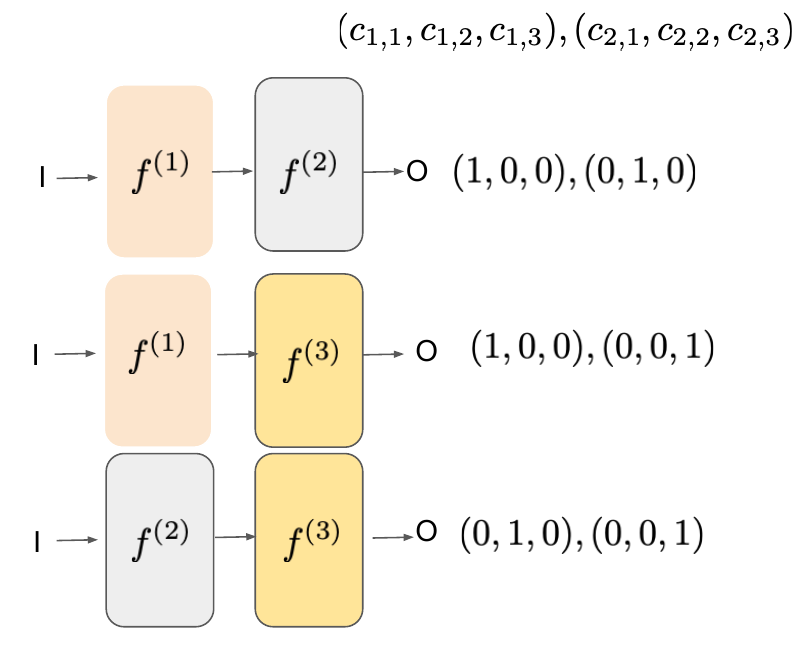}
    \caption{ Representing each $2$-layer sub-network in the shared base sub-network}
    \end{subfigure}
    \caption{\looseness-1 Representative overview of efficient implementation of \method{} in distributed training setting. A simple 'if-else' implementation of dropping layers doesn't work in distributed setting, as the optimizer expects gradients from each parameter from each GPU at each step of training. We instead represent all possible random sub-networks as a shared sub-network, whose parameters are represented as linear combination of all parameters. Then, we can back-propagate through a random sub-network by setting the linear coefficients appropriately.}
    \label{fig:shared_backbone_network}
\end{figure}

\paragraph{Issues with distributed training:} \method{} can be naively implemented with if-else statements that bypass a layer if a layer isn't part of a random sub-network at a specific time step. This will work with pytorch if we are training with a single GPU. However, if we train a model on multiple GPUs with pytorch, we get the following error.

\noindent %
\fbox{
    \parbox{\textwidth}{
            \vspace{2pt}
            RuntimeError: Expected to have finished reduction in the prior iteration before starting a new one. This error indicates that your module has parameters that were not used in producing loss. You can enable unused parameter detection by passing the keyword argument `find unused parameters=True' to `torch.nn.parallel.DistributedDataParallel', and by  making sure all `forward` function outputs participate in calculating loss. Parameter indices which did not receive grad for rank 1: 33 34 35 36 37 38 39 40 41 42 43 44 45 46 47 48 49 50 51 52 53 54 55 56 57 58 59 60 61 62 63 64 65 66 67 68 69 70 71 72 73 74 75 76 77 78 79 80 81 82 83 84 85 86 87 88 89 90 91 92 93 94 95 96 97 98 99 100 101 102 103 104 105 106 107 108 109 110 111 112 113 114 115 116 117 118 119 120 121 122 123 124 125 126 127 128 129 130 131 132 ...
        }
}

We get this error, because the optimizer expects gradients from each trainable model parameter from each GPU at each step of training. We get similar error, when we use jax framework to implement \method{}.

\paragraph{Solution -- Layer Selection as Linear Algebraic formulation on Parameters:} Instead of viewing \method{} as selecting a random sub-network composed of a random selection of layers, we should instead view sub-networks as a set of selected parameters in the Transformer model. All Transformer layers are identical in structure but with different parameters. 

Let $f^{(1)}, f^{(2)}, f^{(3)}, \cdots, f^{(L)}$ represent the $L$ Transformer layers with corresponding parameters $\theta^{(1)}, \theta^{(2)}, \cdots, \theta^{(L)}$. Each of these layers can be expressed as a common Transformer layer $f$, where the parameters $\theta^{(1)}, \theta^{(2)}, \cdots, \theta^{(L)}$ are used to represent $f^{(1)}, f^{(2)}, f^{(3)}, \cdots, f^{(L)}$, respectively. Furthermore, each layer $f^i$ can be represented with parameters $\sum_{j=1}^{L} c_{ij} \theta^{(j)}$, where $c_{ii} = 1$ and $c_{ij} = 0 $ for all $j \ne i$.

We can extend this idea to a sub-network that selects $k$ layers at random from the $L$ Transformer layers. We can maintain a $k$-layer shared sub-network, where shared layer $1 \le \ell \le k$ uses parameter $\theta^{(\ell)}_{\textrm{sh}} = \sum_{j=1}^{L} c_{\ell, j} \theta^{(j)}$. $c_{\ell, j}$ are fixed depending on the random sub-network of layers that we pick at each step of training. For example, for a sub-network that picks layers $1, 2, 3, \cdots, k$, $c_{\ell, i}$ is set as $1$ for $i=\ell$ for all $1 \le \ell \le k$ and $0$ otherwise. In doing so, all the parameters get gradients in the loss computation at each step of training.

\paragraph{Additional FLOPs involved in sub-network selection:} Because we have additional parameter additions to represent sub-networks at each step of training, one might wonder whether the additional FLOPs incurred will lead to drastic slowdown in model training. To do so, we will compare the number of FLOPs necessary for backpropagation for a $k$-random sub-network to the number of FLOPs necessary to compute gradients for a $k$-layer static sub-network used in stacking. The backpropagation step includes a forward pass, a backward pass, and a gradient descent step.

To further simplify our explanation, we will focus on an $L$-layer network that uses linear layers as its Transformer layers with parameters $\mW^{(1)}, \cdots, \mW^{(L)} \in \mathbb{R}^{d \times d}.$ We will focus on Gradient Descent algorithm for simplicity. Batch size is given by $B$.

\paragraph{Back-propagation FLOP Computation for a $k$-layer static sub-network:} For simplicity, we will use the first $k$ layer parameters to define the static sub-network.  We will refer to the output of each layer $\ell$ as $\mY^{(\ell)} \in \mathbb{R}^{d \times d}.$
\begin{itemize}
    \item \textbf{Forward pass:} For an input $\mY^{(\ell)} \in \mathbb{R}^{B \times d}$, the forward pass involves computing $\mW^{(\ell)} \vy^{(\ell)}_i$ for each $\vy^{(\ell)}_i \in \mY^{(\ell)}$, for each layer $\ell \in [k]$. Thus, the number of FLOPs involved is $k B d^2$.

    \item \textbf{Backward pass:} On gradient $\mG^{(\ell)} \in \mathbb{R}^{B \times d}$ back-propagating to a layer $\ell$, the backward pass computes the gradient w.r.t. the parameter $\mW^{(\ell)}$ and the gradient that flows to the next layer. 
    
    The gradient w.r.t. the weight is given as $\frac{1}{B} \sum_{i=1}^{B} \vg_i^{(\ell)} (\vx_i^{(\ell)})^{\top}$. This involves $B d^2$ computation, and overall a $kBd^2$ computation for all weights. 
    
    Furthermore, the gradient to be back-propagated beyond layer $\ell$ is given as 
    
    $\left[ \vg_1^{(\ell)} (\mW^{(\ell)})^{\top}, \cdots \vg_i^{(\ell)} (\mW^{(\ell)})^{\top}, \cdots, \vg_B^{(\ell)} (\mW^{(\ell)})^{\top} \right]$, which incurs an additional $B d^2$ computation. Overall, a $kBd^2$ computation across all layers.

    \item \textbf{Gradient Descent step:} Finally, the descent step is a simple addition of negative gradients to each weight, which requires $kd^2$ for all weights. 

\end{itemize}

\paragraph{Back-propagation FLOP Computation for a $k$-layer random sub-network:} For a randomly picked sub-network, we will use $\mW^{(\ell)}_{\textrm{sh}} = \sum_{j=1}^{L} c_{\ell, j} \mW^{(j)}$ to denote the parameters of the $k-$layer shared sub-network that we use to perform back-propagation at particular step.
\begin{itemize}    
    \item \textbf{Forward pass:} Our proposed method requires an additional sum of weights in each layer, with the coefficients $c_{\ell,j}$ for $1 \le \ell \le k, 1 \le j \le L$ in the forward pass. Hence, the total FLOP computation involved is $k B d^2 + k L d^2$. 

    \item \textbf{Backward pass:} Here, there are no changes for backpropagation through the $k$-layer shared network, as the gradients are computed and backpropagated w.r.t. $\mW^{(\ell)}_{\textrm{sh}}$.
    
    \item \textbf{Gradient Descent step:} Here, the gradients need to be distributed to the weight parameters $\mW^{(\ell)}$ before taking the descent step.
    \begin{align*}
        \nabla \mW^{(\ell)} = \sum_{i=1}^{k} c_{i, \ell} \nabla \mW^{(\ell)}_{\textrm{sh}}.
    \end{align*}
    This requires $kLd^2$ computation, in addition to the necessary gradient descent steps.
\end{itemize}

Thus, the relative FLOPs of backpropagating through a $k$-layer random sub-network to a $k$-layer static sub-network (used in stacking)   is
\begin{align*}
    \frac{2kBd^2+kd^2+2kLd^2}{2kBd^2+kd^2} = 1 + \frac{2L}{2B+1}.
\end{align*}

With $B$ set high, this relative FLOPs gap is close to $1$.

\paragraph{Discussion for language models:} For language models, the batch size represents the number of tokens that pass through the layer during backpropagation. In our UL2 experiments, sequences were of length $512$ and the number of sequences in each minibatch was set as $2$. Hence, the effective batch size is $B=512 \times 2 = 1024$. The number of layers $L$ in our UL2 setting is $24$. Hence, the relative FLOPs of backpropagating through a $k$-layer random network compared to a $k$-layer static network (used in stacking) would be $\approx (48 / (2048 + 1) + 1) = 1.02$ if we had used the same minibatch for our linear network setting. 

\textit{Remark:} %
When training large models, we can structure the selection of random sub-networks to minimize the performance overhead associated with random sampling. For instance, if we are training a $100$-layer network and wish to train a random sub-network of $50$ layers at a given step, rather than selecting $50$ layers arbitrarily from the entire model, we can organize the layers into $10$ groups, each containing $10$ layers. Then, we create a $50$-layer shared sub-network, where each shared layer is a linear combination of $10$ layer parameters from a specific group. This approach reduces the computational overhead associated with combining parameters from $50 \times 10 \times d^2$ to $5 \times 10 \times d^2$.

\section{Noise stability for transformers}
\begin{theorem}\label{thm:err_prop}
    Suppose $\netloss$ satisfies the following condition $\netloss$ satisfies the following condition for any input $\Tilde{\vx}, \eta \in \RR^{\dims}$ and label $\vv \in \{0,1\}^{\vocab}$ for some constant $\mu_{loss}$ :
     \begin{align}
         \netloss(\Tilde{\vx} + \eta \norm{\Tilde{\vx}}, \vv) - \netloss(\Tilde{\vx}, \vv) \le \liploss  \norm{ \eta}_2. \label{eq:losslip}
     \end{align}
    The difference between expected loss of $(\layer-1)$-{\method} and $\layer$-{\method}, i.e. $|\netloss_2(F) - \netloss_1(F)|$, is upper bounded by $\frac{C}{\layer} \expect_{\vx \in \distrib}  (\sum_{\ell=1}^{\layer} \Psi_{\ell}(\vx)) / \norm{F(\vx)}_2$
    for some constant $C$ that depends on regularity properties of the head $\head$ on top of the backbone $F$.
\end{theorem}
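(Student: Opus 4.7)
The plan is to reduce the loss-gap to a per-layer expectation and then invoke the relative-Lipschitz condition (Eq.~\ref{eq:losslip}) individually on each term. First I would write out the two effective losses in closed form: by construction of the $(L{-}1)$-stage of \method{} which skips exactly one random layer uniformly, we have
\begin{align*}
\netloss_1(F) \;=\; \tfrac{1}{\layer}\sum_{\ell=1}^{\layer} \expect_{\vx\sim\distrib}\bigl[\netloss(\head\circ F_{-\ell}(\vx))\bigr],
\qquad
\netloss_2(F) \;=\; \expect_{\vx\sim\distrib}\bigl[\netloss(\head\circ F(\vx))\bigr].
\end{align*}
Rewriting $\netloss(\head\circ F(\vx)) = \tfrac{1}{\layer}\sum_{\ell=1}^{\layer}\netloss(\head\circ F(\vx))$ and applying the triangle inequality inside the expectation gives
\begin{align*}
|\netloss_2(F)-\netloss_1(F)| \;\le\; \tfrac{1}{\layer}\sum_{\ell=1}^{\layer} \expect_{\vx\sim\distrib}\bigl|\netloss(\head\circ F(\vx))-\netloss(\head\circ F_{-\ell}(\vx))\bigr|.
\end{align*}

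The next step is to bound each summand by applying assumption (\ref{eq:losslip}) to the composed loss $\netloss\circ\head$. Set $\Tilde{\vx}=F(\vx)$ and choose the perturbation
\begin{align*}
\eta \;=\; \frac{F_{-\ell}(\vx)-F(\vx)}{\norm{F(\vx)}_2},
\qquad\text{so that}\qquad \Tilde{\vx}+\eta\norm{\Tilde{\vx}}_2 \;=\; F_{-\ell}(\vx),
\quad \norm{\eta}_2 \;=\; \frac{\Psi_{\ell}(\vx)}{\norm{F(\vx)}_2}.
\end{align*}
Assumption (\ref{eq:losslip}), applied to the head-composed loss in both directions (swapping the roles of $F(\vx)$ and $F_{-\ell}(\vx)$ for the other side of the absolute value, which only costs a bounded factor coming from the regularity of $\head$), yields
\begin{align*}
\bigl|\netloss(\head\circ F(\vx))-\netloss(\head\circ F_{-\ell}(\vx))\bigr| \;\le\; C\,\frac{\Psi_{\ell}(\vx)}{\norm{F(\vx)}_2},
\end{align*}
where the constant $C$ absorbs $\liploss$ together with the Lipschitz behaviour of $\head$ needed to push the relative-Lipschitz inequality through the projection into logits.

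Finally I would sum over $\ell$, exchange the finite sum with the expectation, and collect terms:
\begin{align*}
|\netloss_2(F)-\netloss_1(F)| \;\le\; \frac{C}{\layer}\,\expect_{\vx\sim\distrib}\!\left[\sum_{\ell=1}^{\layer}\frac{\Psi_{\ell}(\vx)}{\norm{F(\vx)}_2}\right],
\end{align*}
which is exactly the claimed bound. The main obstacle is handling the \emph{two-sided} loss gap cleanly: assumption (\ref{eq:losslip}) is written as a one-sided inequality with $\norm{\Tilde{\vx}}_2$ in the denominator of the perturbation, so to bound the symmetric quantity one must also control the ratio $\norm{F(\vx)}_2/\norm{F_{-\ell}(\vx)}_2$ when the roles are swapped; this is where the head regularity enters the constant $C$. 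A clean way around this is to note that under Assumption~\ref{ass:scale_invariance} the loss is scale invariant, so applying (\ref{eq:losslip}) at the midpoint $\tfrac12(F+F_{-\ell})$ yields a symmetric bound with the same form up to a constant factor, which then gets folded into $C$.
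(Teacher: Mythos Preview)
Your proposal is correct and follows essentially the same route as the paper: write the two losses as the uniform average over $F_{-\ell}$ versus the full $F$, take the per-$\ell$ difference, and apply the relative-Lipschitz condition with $\Tilde{\vx}=F(\vx)$ and $\eta=(F_{-\ell}(\vx)-F(\vx))/\norm{F(\vx)}_2$ to obtain the $\Psi_\ell(\vx)/\norm{F(\vx)}_2$ bound. If anything, you are slightly more careful than the paper: the paper's proof only writes out the one-sided bound $\netloss_2-\netloss_1\le\cdots$ and does not explicitly address the absolute value or the role-swap issue you flag; your discussion of the symmetric bound and where the head regularity is absorbed into $C$ is a reasonable way to close that gap. (Also note that the paper's appendix proof silently swaps the labels of $\netloss_1$ and $\netloss_2$ relative to the main text; your labeling matches the main text convention.)
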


\begin{proof}[Proof of \cref{thm:err_prop_inf}]
    Denote by $F_{-\ell}$ the $\layer$ candidates that we can randomly choose from during $\layer-1$ {\method} training. The output of $F_{-\ell}$ on any input $\vx$ is equivalent to the output of the model $F$ on $\vx$ and $\alpha_{1:\layer}$, with $\alpha_{\ell}=0$ and the rest set to $1$.

    Then, the average loss of $\layer-1$ random training is given by
    \begin{align*}
        \netloss_{2}(F) = \frac{1}{\layer}  \sum_{i=1}^{\layer} \expect_{\vx, \vy \sim \distrib} \netloss\left( F_{-\ell} (\vx), \vv \right).
    \end{align*}
    
    On the other hand, the loss of the full model is given by
    \begin{align*}
        \netloss_{1}(F) = \expect_{\vx, \vv \sim \distrib} \netloss\left( F (\vx), \vv \right).
    \end{align*}

    Hence, the expected difference between loss of $\layer-1$ random training and full model is given by

    \begin{align*}
        \netloss_{2} (F) - \netloss_{1} (F) &= \frac{1}{\layer}  \sum_{i=1}^{\layer} \expect_{\vx, \vv \sim \distrib} \left( \netloss\left( F_{-\ell} (\vx), \vv \right) -  \netloss\left( F (\vx), \vv \right) \right) \\&
        \leq  \frac{1}{\layer}  \sum_{i=1}^{\layer} \expect_{\vx, \vv \sim \distrib} \quad \liploss \frac{ \norm{ F_{-\ell} (\vx) - F (\vx) }_2 }{ \norm{ F (\vx) }_2 } \\&
        \leq  \frac{1}{\layer}  \sum_{i=1}^{\layer} \expect_{\vx, \vv \sim \distrib} \quad \liploss \frac{ \psi_{\ell} }{ \norm{ F (\vx) }_2 }.
    \end{align*}
    Here the pre-final step uses the definition of $\liploss$ from \cref{eq:losslip}, and the final step uses the definition of $\Psi_{\ell}$ from \cref{eq:lipstacklayers}.
\end{proof}

\paragraph{Discussion} An example of a loss function with bounded $\liploss$ is a transformer model, that uses a layer normalization layer before using an embedding matrix $\embed$ to compute the logits. The logits are then compared with the true label with a cross-entropy loss with a softmax function on logits. $\psi(\netloss)$ hence depends on the $\ell_2$ norm of $\embed$ and the weight parameters of the layer normalization layer.

\subsection{Noise stability of linear residual networks}

\subsubsection{Useful lemmas}
To prove, we will require the following theorem (and its variaions). For random initialization, we use $\mW_1, \cdots, \mW_{\layer} \sim \mathcal{N}(0, d^{-1}).$

\begin{lemma}[Norm of the output of linear layers at initialization]\label{lem:linearnorm}
    Consider the case of a linear network with no residual connections and layernorm, i.e.
    \begin{align*}
        \vy^{(\ell)} = \mW_{\ell} \vy^{(\ell-1)}, \text{ for all } \ell \ge 1, 
    \end{align*}
    with $\vy^{(0)} = \vx$.
    
    For an input $\vx$, any $\varepsilon \ge 0$, with probability at least $1 - \layer e^{-\Omega \left(d \varepsilon^2 / \layer^2 \right)}$ over the randomness of $\mW_1, \cdots, \mW_{\layer}$, we have
    $$
      \norm{\vy^{(\ell)}}_2  \in [ 1-\varepsilon, 1+\varepsilon ] \quad \text{for all} \ell \ge 1.
    $$
\end{lemma}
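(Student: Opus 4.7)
The plan is to prove this layer-by-layer by exploiting the key observation that $\mW_\ell$ is independent of everything in the previous layers, so conditional on $\vy^{(\ell-1)}$ the vector $\mW_\ell \vy^{(\ell-1)}$ has a clean Gaussian structure. Specifically, if $\mW_\ell$ has iid $\mathcal{N}(0, 1/d)$ entries (which is how I read $\mathcal{N}(0, d^{-1/2}\mI)$), then conditional on $\vy^{(\ell-1)}$, the random variable $\norm{\mW_\ell \vy^{(\ell-1)}}_2^2$ has the same distribution as $\norm{\vy^{(\ell-1)}}_2^2 \cdot \chi^2_d / d$. This immediately gives a multiplicative concentration of the norm ratio $\norm{\vy^{(\ell)}}_2 / \norm{\vy^{(\ell-1)}}_2$ around $1$ via a standard chi-squared tail bound of the form $\mathbb{P}[|\chi^2_d/d - 1| > t] \leq 2e^{-c d t^2}$ for small $t$.

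The next step is to set the per-layer deviation tolerance to $t = \varepsilon / (2\layer)$ so that, after chaining the $\layer$ multiplicative ratios, the worst-case product $\prod_{\ell=1}^{\layer}(1 + \delta_\ell)$ with $|\delta_\ell| \leq \varepsilon/(2\layer)$ is squeezed inside $[e^{-\varepsilon/2}, e^{\varepsilon/2}] \subseteq [1-\varepsilon, 1+\varepsilon]$ (assuming $\varepsilon$ is not too large, which we may take w.l.o.g.\ because the claim is trivial otherwise). A union bound over the $\layer$ layers then gives failure probability at most $\layer \cdot 2 e^{-c d \varepsilon^2 / \layer^2}$, matching the stated bound $\layer\, e^{-\Omega(d\varepsilon^2/\layer^2)}$. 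Moreover, the argument gives the intermediate claim for \emph{all} $\ell \le \layer$ simultaneously in the same union bound, since each prefix product is controlled by the same per-layer events.

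Formally, I would proceed by induction on $\ell$: define the good event $\gE_\ell = \{|\norm{\vy^{(\ell)}}_2 - 1| \le \varepsilon\}$ and the per-layer ratio event $\gR_\ell = \{(1 - \varepsilon/(2\layer)) \le \norm{\vy^{(\ell)}}_2 / \norm{\vy^{(\ell-1)}}_2 \le (1 + \varepsilon/(2\layer))\}$. Conditioning on $\vy^{(\ell-1)}$, the chi-squared bound yields $\mathbb{P}[\gR_\ell^c \mid \vy^{(\ell-1)}] \le 2e^{-cd\varepsilon^2/\layer^2}$, which is a uniform bound independent of the conditioning, so $\mathbb{P}[\gR_\ell^c] \le 2e^{-cd\varepsilon^2/\layer^2}$ unconditionally. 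Intersecting the ratio events $\bigcap_{\ell=1}^{\layer} \gR_\ell$ via union bound and telescoping the resulting product completes the proof.

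The main subtlety, and the only place where care is needed, is the passage from the per-layer multiplicative bound to the joint bound on all prefix norms: one must be careful that the chi-squared concentration is in the right regime (i.e., $\varepsilon/\layer$ is small enough to use the quadratic tail rather than the subexponential tail). This is handled implicitly in the lemma by the $\Omega(\cdot)$ in the exponent, so no additional assumption is needed beyond noting that the bound is vacuous unless $\varepsilon/\layer \lesssim 1$. Everything else reduces to standard Gaussian/chi-squared concentration, making this a clean induction rather than a delicate argument.
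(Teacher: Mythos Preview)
Your proposal is correct and follows essentially the same approach as the paper: condition on $\vy^{(\ell-1)}$ to get a Gaussian vector, apply chi-squared/Gaussian-norm concentration to control the multiplicative ratio $\norm{\vy^{(\ell)}}_2/\norm{\vy^{(\ell-1)}}_2$ at each layer, then union bound over the $\layer$ layers with per-layer tolerance $\Theta(\varepsilon/\layer)$ to produce the $\layer^2$ in the exponent. Your write-up is in fact more explicit than the paper's about the per-layer rescaling and the telescoping of the product of $(1+\delta_\ell)$ factors, both of which the paper leaves implicit.
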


\begin{proof}
    Because $\mW_{\ell}$ is gaussian, $\vy^{(\ell)} = \mW_{\ell} \vy^{(\ell-1)}$ follows $\mathcal{N}(0, \norm{\vy^{(\ell-1)}}_2^2 / d )$. Thus, $\{ (\vy^{(\ell)}_i)^2 \}_{i=1}^{d}$ follow a Chi-squared distribution, with $\mathbb{E} \norm{\vy^{(\ell)}}_2^2 = \mathbb{E} \sum_{i=1}^{d} (\vy^{(\ell)}_i)^2$ given by $\norm{\vy^{(\ell-1)}}_2^2$. The proof then follows from applying concentration bounds on the norm of gaussian vectors (Formula 3.7, \cite{vershynin2020high}), i.e. for any $\varepsilon \ge 0$, 
    \begin{align*}
        \Pr\left( \abs{ \norm{\vy^{(\ell)}}_2 - \norm{\vy^{(\ell-1)}}_2 } \ge \varepsilon \norm{\vy^{(\ell-1)}}_2 \right) \le e^{-\Omega(d \varepsilon^2) }.
    \end{align*}

    We apply a union bound over all layers, to get the final bound.
\end{proof}

\begin{corollary}[Norm of the output of linear layers with residual connections at initialization]\label{len:cor_linearnorm}
    Consider the case of a linear network with residual connections but no layernorm, i.e.
    \begin{align*}
        \vy^{(\ell)} = \vy^{(\ell-1)} + \mW_{\ell} \vy^{(\ell-1)}, \text{ for all } \ell \ge 1, 
    \end{align*}
    with $\vy^{(0)} = \vx$.
    
    For an input $\vx$, any $\varepsilon \ge 0$, with probability at least $1 - \layer e^{-\Omega \left(d \varepsilon^2 / \layer^2 \right)}$ over the randomness of $\mW_1, \cdots, \mW_{\layer}$, we have
    $$
      \norm{\vy^{(\ell)}}_2 \in [ 2^{\ell/2} (1-\varepsilon), 2^{\ell/2} ( 1+\varepsilon) ] \quad \text{for all } \ell \ge 1.
    $$
\end{corollary}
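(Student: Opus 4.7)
The plan is to proceed by induction on $\ell$, tracking the squared norm via the residual recursion
\begin{align*}
\norm{\vy^{(\ell)}}_2^2 = \norm{(\mI + \mW_\ell)\vy^{(\ell-1)}}_2^2 = \norm{\vy^{(\ell-1)}}_2^2 + 2\langle \vy^{(\ell-1)}, \mW_\ell \vy^{(\ell-1)}\rangle + \norm{\mW_\ell \vy^{(\ell-1)}}_2^2.
\end{align*}
Conditioning on $\vy^{(\ell-1)}$, the vector $\mW_\ell \vy^{(\ell-1)}$ is Gaussian with distribution $\mathcal{N}(0, \dims^{-1}\norm{\vy^{(\ell-1)}}_2^2 \mI)$, so its squared norm has expectation $\norm{\vy^{(\ell-1)}}_2^2$ and the cross term has conditional mean zero. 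This yields $\expect[\norm{\vy^{(\ell)}}_2^2 \mid \vy^{(\ell-1)}] = 2\norm{\vy^{(\ell-1)}}_2^2$, which by iteration gives the target expectation $2^\ell \norm{\vx}_2^2$.

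For high-probability concentration, I would apply two standard Gaussian estimates conditionally on $\vy^{(\ell-1)}$. First, by the Chi-squared tail bound reused from the proof of \cref{lem:linearnorm}, $\norm{\mW_\ell \vy^{(\ell-1)}}_2^2 \in [(1-\delta), (1+\delta)] \cdot \norm{\vy^{(\ell-1)}}_2^2$ with probability at least $1 - e^{-\Omega(\dims\delta^2)}$. Second, the cross term $\langle \vy^{(\ell-1)}, \mW_\ell \vy^{(\ell-1)}\rangle$ is a one-dimensional Gaussian with variance $\dims^{-1}\norm{\vy^{(\ell-1)}}_2^4$, so it is bounded by $\delta \norm{\vy^{(\ell-1)}}_2^2$ with probability at least $1 - e^{-\Omega(\dims\delta^2)}$. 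Combining both and choosing $\delta = \Theta(\varepsilon/\layer)$ gives the one-step multiplicative bound
\begin{align*}
\norm{\vy^{(\ell)}}_2^2 \in \bigl[ 2(1 - c\varepsilon/\layer),\; 2(1 + c\varepsilon/\layer) \bigr] \cdot \norm{\vy^{(\ell-1)}}_2^2
\end{align*}
for a suitable absolute constant $c$, each with failure probability $e^{-\Omega(\dims \varepsilon^2/\layer^2)}$.

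A union bound over $\ell \in [\layer]$ controls all layers simultaneously with total failure probability $\layer \, e^{-\Omega(\dims \varepsilon^2/\layer^2)}$, matching the statement. Iterating the multiplicative bound and using $(1 \pm c\varepsilon/\layer)^{\layer} \le 1 \pm 2c\varepsilon$ (for $\varepsilon$ small enough) yields $\norm{\vy^{(\ell)}}_2^2 \in [2^\ell(1-\varepsilon)^2, 2^\ell(1+\varepsilon)^2]$ after rescaling the constant absorbed into $\delta$, which taking square roots delivers the claim $\norm{\vy^{(\ell)}}_2 \in [2^{\ell/2}(1-\varepsilon), 2^{\ell/2}(1+\varepsilon)]$.

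The main subtlety will be the careful bookkeeping of the per-layer slack: the target bound is multiplicative in $\varepsilon$ rather than $\varepsilon \ell$, so each layer must contribute only a $\Theta(\varepsilon/\layer)$ error, which is exactly what inflates the failure probability exponent from $\dims\varepsilon^2$ (single-layer) to $\dims\varepsilon^2/\layer^2$. A small technical point is that the concentration events at different layers are not independent (they are conditioned on the evolving $\vy^{(\ell-1)}$), but since each bound holds conditionally with the stated probability, a union bound over $\ell$ still applies without requiring any independence.
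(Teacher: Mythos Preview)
Your proposal is correct and follows essentially the same approach as the paper: decompose $\norm{\vy^{(\ell)}}_2^2$ into $\norm{\vy^{(\ell-1)}}_2^2 + 2\langle \vy^{(\ell-1)}, \mW_\ell \vy^{(\ell-1)}\rangle + \norm{\mW_\ell \vy^{(\ell-1)}}_2^2$, control the third term by the chi-squared concentration from \cref{lem:linearnorm}, control the cross term by a tail bound, and union-bound over the $\layer$ layers with per-layer slack $\Theta(\varepsilon/\layer)$. The only cosmetic difference is that the paper invokes the Hanson--Wright inequality for the cross term, whereas you observe (more directly) that conditionally on $\vy^{(\ell-1)}$ this quadratic form is an exact one-dimensional Gaussian in the entries of $\mW_\ell$, so a standard Gaussian tail bound already suffices.
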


\begin{proof}
    The proof follows similar to \ref{lem:linearnorm}. W.h.p., we can show that
    for any $\varepsilon \ge 0$, 
    \begin{align*}
        \Pr\left( \norm{\mW_{\ell} \vy^{(\ell-1)}}_2 - \norm{\vy^{(\ell-1)}}_2 \ge \varepsilon \right) \le e^{-\Omega(d \varepsilon^2 / \layer^2 ) }.
    \end{align*}
    
    The only remaining piece is to show that $\vy^{(\ell-1)}$ and $\mW_{\ell} \vy^{(\ell-1)}$ are roughly orthogonal. First of all, $\mathbb{E} (\vy^{(\ell-1)})^{\top} \mW_{\ell} \vy^{(\ell-1)} = 0$ because $\mW_{\ell}$ is concentrated around $0$. Furthermore, $\mathbb{E} ((\vy^{(\ell-1)})^{\top} \mW_{\ell} \vy^{(\ell-1)})^2 = \sum_{i, j} w_{\ell, i, j}^2 (\vy^{(\ell-1)}_i)^2 (\vy^{(\ell-1)}_j)^2 = \frac{1}{d} \norm{\vy^{(\ell-1)}}^4$. Thus, using Hanson wright inequality (theorem 6.2.1, \cite{vershynin2020high}), we have w.p. atleast 
    \begin{align*}
        \Pr [ \abs{ (\vy^{(\ell-1)})^{\top} \mW_{\ell} \vy^{(\ell-1)} } \ge \varepsilon \norm{\vy^{(\ell-1)}}^2 ] \le e^{-\Omega(d \varepsilon^2) }.
    \end{align*}

    We apply a union bound over all layers, to get the final bound.

\end{proof}

\begin{corollary}[Norm of the output of linear layers with residual connections at initialization]\label{len:cor_linearnorm_layernorm}
    Consider the case of a linear network with residual connections and layer-normalization, i.e.
    \begin{align*}
        \vy^{(\ell)} = \vy^{(\ell-1)} + \mW_{\ell} \vy^{(\ell-1)} / \norm{\vy^{(\ell-1)}}_2, \text{ for all} \ell \ge 1, 
    \end{align*}
    with $\vy^{(0)} = \vx$.
    
    For an input $\vx$, any $\varepsilon \ge 0$, with probability at least $1 - \layer e^{-\Omega \left(d \varepsilon^2 / \layer^2 \right)}$ over the randomness of $\mW_1, \cdots, \mW_{\layer}$, we have
    $$
      \norm{\vy^{(\ell)}}_2 \in [ \sqrt{\ell+1} (1 - \epsilon), \sqrt{\ell+1} (1 + \epsilon) ] \quad \text{for all } \ell \ge 1.
    $$
\end{corollary}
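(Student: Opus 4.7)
The plan is to adapt the martingale-style inductive arguments used in \cref{lem:linearnorm,len:cor_linearnorm} to the residual-plus-layernorm recursion by tracking $\norm{\vy^{(\ell)}}_2^2$ and showing that it concentrates around its conditional mean $\ell + 1$ (assuming $\norm{\vx}_2 = 1$, as is implicit from the $\sqrt{\ell+1}$ target and the $\sphere^{\dims-1}$ inputs used later in \cref{fig:linear_behavior}). The key structural observation is that, conditional on $\vy^{(\ell-1)}$, the vector $\vz_\ell := \mW_\ell \vy^{(\ell-1)}/\norm{\vy^{(\ell-1)}}_2$ is distributed as $\mathcal{N}(0, d^{-1}\mI)$, since $\mW_\ell$ is independent of the filtration $\mathcal{F}_{\ell-1} := \sigma(\mW_1,\ldots,\mW_{\ell-1})$. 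Expanding
\begin{align*}
\norm{\vy^{(\ell)}}_2^2 \;=\; \norm{\vy^{(\ell-1)}}_2^2 \;+\; 2(\vy^{(\ell-1)})^\top \vz_\ell \;+\; \norm{\vz_\ell}_2^2,
\end{align*}
and taking conditional expectations using $\mathbb{E}[\vz_\ell \mid \mathcal{F}_{\ell-1}] = 0$ and $\mathbb{E}[\norm{\vz_\ell}_2^2 \mid \mathcal{F}_{\ell-1}] = 1$ yields the clean identity $\mathbb{E}[\norm{\vy^{(\ell)}}_2^2 \mid \mathcal{F}_{\ell-1}] = \norm{\vy^{(\ell-1)}}_2^2 + 1$, which integrates to $\ell + 1$.

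\textbf{Per-step noise control.} I would next bound the deviation $N_\ell := 2(\vy^{(\ell-1)})^\top \vz_\ell + (\norm{\vz_\ell}_2^2 - 1)$ conditionally on $\mathcal{F}_{\ell-1}$, while assuming inductively that $\norm{\vy^{(\ell-1)}}_2^2 \le 2\ell$. The cross term is a centered Gaussian with variance $4\norm{\vy^{(\ell-1)}}_2^2/d \le 8\ell/d$, giving $|2(\vy^{(\ell-1)})^\top \vz_\ell| \le t$ with failure probability $\le 2\exp(-\Omega(t^2 d/\ell))$; the chi-squared term admits the Laurent--Massart tail $|\norm{\vz_\ell}_2^2 - 1| \le t$ with failure probability $\le 2\exp(-\Omega(t^2 d))$ for $t = O(1)$. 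Choosing $t \asymp \varepsilon(\ell+1)/\layer$ in both bounds yields $|N_\ell| \le \varepsilon(\ell+1)/\layer$ with failure probability $\le \exp(-\Omega(d\varepsilon^2/\layer^2))$, since $(\ell+1)^2/\ell \ge 1$.

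\textbf{Induction, union bound, and telescoping.} A union bound over $\ell \in [\layer]$ produces a single good event of probability at least $1 - \layer\exp(-\Omega(d\varepsilon^2/\layer^2))$, which matches the bound in the statement. On this event, I would close the induction by telescoping:
\begin{align*}
\bigl|\,\norm{\vy^{(\ell)}}_2^2 - (\ell+1)\,\bigr| \;\le\; \sum_{k=1}^{\ell} |N_k| \;\le\; \frac{\varepsilon}{\layer}\sum_{k=1}^{\ell}(k+1) \;=\; O\!\bigl(\varepsilon(\ell+1)\bigr),
\end{align*}
which immediately implies $\norm{\vy^{(\ell)}}_2/\sqrt{\ell+1} \in [\sqrt{1-O(\varepsilon)},\sqrt{1+O(\varepsilon)}] \subseteq [1-\varepsilon, 1+\varepsilon]$ after absorbing constants into $\varepsilon$, giving the claimed conclusion.

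\textbf{Main obstacle.} The principal subtlety is that the increments $\vz_\ell$ are not jointly independent: $\vz_\ell$ depends on $\vy^{(\ell-1)}$, which is itself a random function of $\mW_1,\ldots,\mW_{\ell-1}$. My plan handles this by explicitly conditioning on $\mathcal{F}_{\ell-1}$ at each induction step, so that the Gaussian and chi-squared tails hold uniformly over all realizations of $\vy^{(\ell-1)}$ inside the inductive good event. A secondary delicate point is calibrating the per-step tolerance to the stated probability exponent: one must bound the relative per-step error by $\varepsilon/\layer$ rather than an absolute constant, which is precisely what produces the $\layer^2$ in the denominator of the exponent after the union bound and reproduces the probability $1 - \layer e^{-\Omega(d\varepsilon^2/\layer^2)}$ in the claim.
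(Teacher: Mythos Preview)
Your proposal is correct and follows essentially the same approach the paper intends: the paper's own proof of this corollary is a one-line deferral to \cref{len:cor_linearnorm}, whose ingredients are precisely (i) concentration of the Gaussian increment's norm, (ii) near-orthogonality of the increment to $\vy^{(\ell-1)}$, and (iii) a union bound over layers. Your writeup is in fact more careful than the paper's---you make the conditioning on $\mathcal{F}_{\ell-1}$ explicit, track the squared norm additively so that the expectations telescope exactly to $\ell+1$, and calibrate the per-step tolerance to recover the $\layer^2$ in the exponent---but the underlying argument is the same.
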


\begin{proof}
    The proof stays extremely similar to \ref{len:cor_linearnorm}. For simplicity, we simply refer to \ref{len:cor_linearnorm} for any such variations.
\end{proof}

\subsubsection{Proofs of \cref{lem:random}}

\begin{proof}[Proof of \cref{lem:random}]
    We outline the proof for each case.
    \begin{enumerate}
        \item With residual connection and layer normalization, the function $F$ computes the intermediate activations $\vy^{(1)}, \cdots, \vy^{(\layer)}$ on an input $\vx$ as follows.
        \begin{align*}
            \vy^{(\ell)} = \left(\mI + \frac{\mW_{\ell}}{\norm{\vy^{(\ell-1)}}} \right) \vy^{(\ell-1)}.
        \end{align*}

        By \cref{len:cor_linearnorm_layernorm}, we can show w.h.p. with respect to the randomness of $\mW_1, \cdots, \mW_{\layer}$, for all $\ell \le \layer$, $\sqrt{\ell}( 1 - \mathcal{O}(1)) \le \norm{ \vy^{(\ell)} }_2 \le \sqrt{\ell}( 1 + \mathcal{O}(1) ).$

        Ignoring the error terms, we can simply write
        \begin{align*}
            \vy^{(\ell)} = \left(\mI + \frac{\mW_{\ell}}{ \sqrt{ \ell } } \right) \vy^{(\ell-1)}.
        \end{align*}

        Pick a general layer $\ell$ to be dropped. Then,
        \begin{align*}
            F_{-\ell} (\vx) -  F_{} (\vx) = \prod_{\ell'=\ell+1}^{\layer} \left(1 + \frac{\mW_{\ell'}}{\sqrt{\ell'}}\right) \frac{\mW_{\ell}}{\sqrt{\ell}} \vy^{(\ell-1)} + err,
        \end{align*}
        where the $err$ term appears because of the change in scales of activations $\vy^{(\ell+1)}, \cdots, \vy^{(\layer)}$ with the dropping of layer $\ell$. This error can be bounded as $\mathcal{O}(\sqrt{\layer}/\ell)$ using the same procedure followed below to bound the first term on R.H.S..

        Similar to bounding the norms of $\vy^{(\ell)}$, w.h.p. we can show that
        \begin{align*}
            \norm{ \prod_{\ell'=\ell+1}^{\layer} \left(1 + \frac{\mW_{\ell'}}{\sqrt{\ell'}}\right) \frac{\mW_{\ell}}{\sqrt{\ell}} \vy^{(\ell-1)} }_2 \le \sqrt{ \frac{\layer}{\ell} }.
        \end{align*}

        Hence, 
        \begin{align*}
           \psi_{\ell} := \norm{  F_{-\ell} (\vx) -  F_{\ell} (\vx) }_2 \le \mathcal{O}(\sqrt{\layer/\ell}). 
        \end{align*}

        This implies
        \begin{align*}
            \frac{1}{\layer} \sum_{\ell} \psi_{\ell} (\vx) = \frac{1}{\layer} \sum_{\ell} \mathcal{O}(\sqrt{\layer/\ell}) = \mathcal{O}(1).
        \end{align*}

        We then apply a union over all $\vx \in S$ such that the above condition holds true for all inputs in set $S$.
        Since the gap in $\mathcal{L}_2$ and $\mathcal{L}_1$ is bounded by $\mathcal{O}( \expect_{\vx} \frac{1}{\layer} \sum_{\ell} \psi_{\ell} (\vx) / ||F(\vx)|| )$ from \cref{thm:err_prop_inf}, we have the gap as $\mathbf{O}\left(\frac{1}{\sqrt{\layer}}\right)$.

        \item With no normalization, the function $F$ looks as follows.
        \begin{align*}
             \vy^{(\ell)} = \left(\mI + \mW_{\ell} \right) \vy^{(\ell-1)}.
        \end{align*}

        By \cref{len:cor_linearnorm},  
        we can show w.h.p. with respect to the randomness of $\mW_1, \cdots, \mW_{\layer}$, for all $\ell \le \layer$, $2^{\ell/2}( 1 - \mathcal{O}(1)) \le \norm{ \vy^{(\ell)} }_2 \le 2^{\ell/2}( 1 + \mathcal{O}(1) ).$

        With a drop in layer, we get
        \begin{align*}
            F_{-\ell} (\vx) -  F_{} (\vx) = \prod_{\ell'=\ell+1}^{\layer} \left(\mI + \mW^{\ell'} \right)\mW_{\ell} \prod_{\ell'=1}^{\ell-1} \left(\mI + \mW^{\ell'} \right) \vx.
        \end{align*}

        Similar to \cref{len:cor_linearnorm},  
        we can show w.h.p. with respect to the randomness of $\mW_1, \cdots, \mW_{\layer}$, 
        
        $\norm{\prod_{\ell'=\ell+1}^{\layer} \left(\mI + \mW^{\ell'} \right)\mW_{\ell} \prod_{\ell'=1}^{\ell-1} \left(\mI + \mW^{\ell'} \right) \vx}_2 \ge \mathcal{O}( 2^{(\ell-1)/2}).$

        This implies $\norm{F_{-\ell} (\vx) -  F_{} (\vx)}_2 = \Omega (2^{(\ell-1)/2}).$

        \item Without residual connection, the function $F$ looks as follows.
        \begin{align*}
             \vy^{(\ell)} =\mW_{\ell} \frac{\vy^{(\ell-1)}}{ \norm{ \vy^{(\ell-1)} }_2 }.
        \end{align*}

        Similar to \cref{len:cor_linearnorm_layernorm},  
        we can show w.h.p. with respect to the randomness of $\mW_1, \cdots, \mW_{\layer}$, for all $\ell \le \layer$, $( 1 - \mathcal{O}(1)) \le \norm{ \vy^{(\ell)} }_2 \le ( 1 + \mathcal{O}(1) ).$ Thus, ignoring error terms, the 
        network $F$ roughly looks like
        \begin{align*}
             \vy^{(\ell)} =\mW_{\ell} \vy^{(\ell-1)}.
        \end{align*}

        With a drop in layer, we get
        \begin{align*}
            F_{-\ell} (\vx) -  F_{} (\vx) =  \prod_{\ell'=\ell+1}^{\layer} \mW_{\ell'} (\mW^{(\ell)} - \mI) \prod_{\ell'=1}^{\ell-1} \mW_{\ell'} \vx.
        \end{align*}

        Using randomness of $\mW^{(\ell)}$ this can be shown to be of norm $\Omega(1)$.

    \end{enumerate}

\end{proof}

\begin{lemma}\label{lem:identical}
    When the layers are perfectly aligned, i.e. all weights $\mW_i = \mA$ for some matrix $\mA$ with $\norm{\mA}_2 = 1$ and for simplicity, assume second eigenvalue $\lambda_2(\mA) = 1-\delta$ for some $\delta > 0$, we have
    \begin{enumerate}[label=(\alph*)]
        \item With residual connection \& layernorm, we have $\Psi_{\ell} (\vx) = \mathcal{O}(1)$ and $\norm{F(\vx)} = \Omega(\layer)$. Thus, the gap in losses between stages is $\mathcal{O}(1/\layer)$. 
        \item Without residual connection, we have both $\Psi_{\ell} (\vx) = \mathcal{O}( \delta^{-1} e^{-\ell} )$ and $\norm{F(\vx)} = \mathcal{O}(1)$. Thus the gap in losses between stages is $\mathcal{O}( \frac{1}{\layer} )$. 
        \item Without layernorm, we have $\Psi_{\ell} (\vx) = \Omega(2^{\layer-1})$ and $\norm{F(\vx)} = \mathcal{O}(2^{\layer})$ . Thus the gap in losses between stages can be $\Omega(1)$. 
    \end{enumerate}    
\end{lemma}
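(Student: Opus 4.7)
\textbf{Proof proposal for Lemma~\ref{lem:identical}.} The plan is to substitute $\mW_i = \mA$ into the recursion of Definition~\ref{def:functiondef} in each of the three architectural settings, analyze the resulting \emph{deterministic} dynamics (in contrast to the random-initialization arguments of Lemma~\ref{lem:random}), and then feed the resulting bounds on $\Psi_{\ell}(\vx)$ and $\norm{F(\vx)}_2$ into Theorem~\ref{thm:err_prop_inf} to obtain the claimed loss-gap scalings. Throughout, I would choose $\vx$ to have nontrivial component along the top eigenvector of $\mA$ to make the lower bounds on $\norm{F(\vx)}_2$ easy; the upper bounds are immediate from $\norm{\mA}_2 = 1$.

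\textbf{Case (c), no layernorm.} The recursion collapses to $\vy^{(\ell)} = (\mI+\mA)\vy^{(\ell-1)}$, so $F(\vx) = (\mI+\mA)^{\layer}\vx$. Since $\mA$ is symmetric with top eigenvalue $1$, $\mI+\mA$ has top eigenvalue $2$, which immediately gives $\norm{F(\vx)}_2 = \mathcal{O}(2^{\layer})$ with matching lower bound. Dropping layer $\ell$ removes one factor, so $F_{-\ell}(\vx) - F(\vx) = -(\mI+\mA)^{\layer-1}\mA\,\vx$, which has norm $\Omega(2^{\layer-1})$. Hence $\Psi_{\ell}/\norm{F(\vx)}_2 = \Omega(1)$ for every $\ell$, and Theorem~\ref{thm:err_prop_inf} gives a loss gap of $\Omega(1)$.

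\textbf{Case (a), residual plus layernorm.} Each update $\vy^{(\ell)} = \vy^{(\ell-1)} + \mA\vy^{(\ell-1)}/\norm{\vy^{(\ell-1)}}_2$ adds a vector of norm at most $\norm{\mA}_2 = 1$; aligning $\vx$ with the top eigenvector of $\mA$ makes this increment $\Theta(1)$, so by induction $\norm{\vy^{(\ell)}}_2 = \Theta(\ell)$ and $\norm{F(\vx)}_2 = \Omega(\layer)$. For the stability bound, I would track the perturbation introduced by skipping layer $\ell$: at layer $\ell$ the perturbation has norm $\mathcal{O}(1)$, and propagating it through the remaining layers, the per-layer map $\vz \mapsto \vz + \mA\vz/\norm{\vz}_2$ has Jacobian of operator norm $1 + \mathcal{O}(1/\norm{\vz}_2)$; since $\norm{\vy^{(j)}}_2 = \Theta(j)$, the product $\prod_{j=\ell+1}^{\layer}(1+\mathcal{O}(1/j))$ is bounded by a constant, giving $\Psi_{\ell}(\vx) = \mathcal{O}(1)$. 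Dividing by $\norm{F(\vx)}_2 = \Omega(\layer)$ and averaging over $\ell$ gives the claimed $\mathcal{O}(1/\layer)$ loss gap.

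\textbf{Case (b), no residual.} The recursion $\vy^{(\ell)} = \mA\vy^{(\ell-1)}/\norm{\vy^{(\ell-1)}}_2$ is exactly (normalized) power iteration on $\mA$, so $\norm{\vy^{(\ell)}}_2 \le \norm{\mA}_2 = 1$ immediately and $\norm{F(\vx)}_2 = \Theta(1)$. For $\Psi_{\ell}$ I would expand $\vy^{(\ell-1)}$ in the eigenbasis of $\mA$: the component along the top eigenvector is preserved (up to normalization) while all other components shrink by a factor at most $1-\delta$ per layer. Formally, a standard analysis of normalized power iteration shows the angle between $\vy^{(j)}$ and the top eigenvector contracts geometrically at rate $(1-\delta)$. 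Consequently, the trajectories of $F$ and $F_{-\ell}$, which agree up to layer $\ell-1$ and disagree by $\mathcal{O}(1)$ at layer $\ell$, re-converge at rate $(1-\delta)^{\layer-\ell}$, yielding $\Psi_{\ell}(\vx) = \mathcal{O}((1-\delta)^{\layer-\ell}) = \mathcal{O}(e^{-\delta(\layer-\ell)})$. Averaging $\frac{1}{\layer}\sum_{\ell}\Psi_{\ell}$ gives a geometric sum of order $\delta^{-1}/\layer$, so Theorem~\ref{thm:err_prop_inf} yields the $\mathcal{O}(1/\layer)$ loss gap.

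\textbf{Main obstacle.} Cases (a) and (c) are essentially bookkeeping using the spectral decomposition of $\mA$ and the triangle inequality. The technical heart is case (b): the normalization in $\vy \mapsto \mA\vy/\norm{\mA\vy}_2$ is nonlinear, so one cannot simply take powers of $\mA$, and the contraction toward the top eigenvector is only locally a linear contraction. I would address this by working in the spherical (projective) geometry and using that the differential of the normalized power-iteration map at an iterate $\vu$ acts on the orthogonal complement of $\vu$ as $\mA/\norm{\mA\vu}_2$, whose spectral norm on that subspace is at most $(1-\delta)/\norm{\mA\vu}_2 \le 1-\delta + \mathcal{O}(\delta)$; a Grönwall-style composition then gives the required geometric contraction and hence the $\mathcal{O}(\delta^{-1}e^{-\ell})$ bound on $\Psi_{\ell}$.
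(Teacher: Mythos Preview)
Your treatments of cases (c) and (b) are essentially correct and close in spirit to the paper's. In (b) you miss a simplification the paper exploits: since every $\mW_i = \mA$, the subnetwork $F_{-\ell}$ is the \emph{same} map for every $\ell$ (namely $L-1$ applications of the normalized power step), so $\Psi_\ell$ is independent of $\ell$ and simply equals the gap between $L-1$ and $L$ iterations of the power method. This makes the spherical-Jacobian / Gronwall machinery you outline unnecessary, though your route still reaches the right conclusion.

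There is, however, a genuine error in case (a). Your claim that $\prod_{j=\ell+1}^{L}\bigl(1 + \mathcal{O}(1/j)\bigr)$ is bounded by a constant is false: for instance $\prod_{j=2}^{L}(1+1/j) = (L+1)/2$. The Jacobian operator-norm bound therefore only yields $\Psi_\ell = \mathcal{O}(L/\ell)$, which after averaging and dividing by $\norm{F(\vx)}_2 = \Omega(L)$ gives a loss gap of $\mathcal{O}((\log L)/L)$, not the claimed $\mathcal{O}(1/L)$. The paper avoids this by arguing that the iteration $\vy \mapsto \vy + \mA\vy/\norm{\vy}_2$ is a modified power method: it tracks the angle $\theta_\ell$ between $\vy^{(\ell)}$ and the top eigenvector $\vv_1(\mA)$ and shows that this angle contracts geometrically, so after a burn-in the dynamics is essentially one-dimensional along $\vv_1$ (the scalar map $t \mapsto t+1$). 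In that regime dropping any layer changes the output by exactly one unit along $\vv_1$, giving $\Psi_\ell = \mathcal{O}(1)$ with no amplification. Your Jacobian argument can be repaired along these lines: once both the iterate $\vz$ and the perturbation lie along $\vv_1$, the correction term $\mA\bigl(\mI - \vz\vz^{\top}/\norm{\vz}_2^2\bigr)/\norm{\vz}_2$ in the differential annihilates the perturbation, so the harmful $1+1/j$ growth factor never materializes.
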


\begin{proof}[Proof of \cref{lem:identical}]

We outline the proof for each case.

\begin{enumerate}
    \item With residual connection \& layernorm, the function $F$ computes the intermediate activations $\vy^{(1)}, \cdots, \vy^{(\layer)}$ on an input $\vx$ as follows.
        \begin{align*}
            \vy^{(\ell)} = \left(\mI + \frac{\mA}{\norm{\vy^{(\ell-1)}}} \right) \vy^{(\ell-1)}.
        \end{align*}
        We show that the above method will behave like power method, i.e. $\vy^{(\ell)}$ will be of magnitude $\Omega(\ell)$ and will be $\varepsilon$-close in angle to the top eigenvector of $\mA$, denoted as $\vv_1(\mA)$, provided $\ell \ge \Omega( (1/\delta) \log (1/\varepsilon) ).$

        We give an inductive argument as follows. Consider any layer $\ell$. %
        Say $\theta_{\ell}$ denotes the angle between $\vy^{(\ell)}$ and $\vv_1(\mA)$. Also, say $\project^{\perp}_{\vv1}$ denotes an orthogonal projection to subspace spanned by the rest of the eigenvectors of $\mA$. Then,
        \begin{align*}
            \abs{ \tan \theta_{\ell+1} } &= \frac{ \abs{ \langle \vv_1(\mA), \vy^{(\ell+1)} \rangle} }{ \norm{ \project^{\perp}_{\vv1} (\vy^{(\ell+1)})  }_2  } \\
            &= \frac{ \abs{ \langle \vv_1(\mA), \vy^{(\ell)} \rangle  + \langle \vv_1(\mA), \mA \vy^{(\ell)} \rangle} }{ \norm{ \project^{\perp}_{\vv1} (\vy^{(\ell)}) + \project^{\perp}_{\vv1} \mA \vy^{(\ell)}  }_2  } \\
            & = \frac{ 2 \abs{ \langle \vv_1(\mA), \vy^{(\ell)} \rangle } }{ \norm{ \project^{\perp}_{\vv1} (\vy^{(\ell)}) + \project^{\perp}_{\vv1} \mA \vy^{(\ell)}  }_2  }  \\&
            \leq \frac{ 2 \langle \vv_1(\mA), \vy^{(\ell)} \rangle  }{ \norm{ \project^{\perp}_{\vv1} (\vy^{(\ell)}) + \lambda_2 (\mA) \project^{\perp}_{\vv1} \vy^{(\ell)}  }_2  } \\&
            = \frac{2}{1 + \lambda_2(\mA)} \abs{ \tan \theta_{\ell} }.
        \end{align*}

        This implies, under $\lambda_2(\mA) < 1$, $\abs{\tan \theta_{\ell}}$ decreases with $\ell$. Under the assumption that $\vx$ isn't orthogonal to $\vv_1(\mA)$, the above condition simply shows that if $\ell \ge \mathcal{O}( (1/\delta) \log (1/\varepsilon) )$, than $\abs{\tan \theta_{\ell}} < \varepsilon$. 

        Furthermore, once aligned (or closely aligned) to $\vv_1(\mA)$, the norm of $\vy^{(\ell)}$ grows linearly. Hence, $\norm{\vy^{(\ell)}} = \Omega(\layer)$. Furthermore, for any $\ell$, the gap between $F_{-\ell}$ and $F_{\ell}$ simply becomes equal to the gap between $\layer$ and $\layer-1$ steps of the modified power method, which can be bounded as $\mathcal{O}(1).$

    \item With no residual connection,  the function $F$ computes the intermediate activations $\vy^{(1)}, \cdots, \vy^{(\layer)}$ on an input $\vx$ as follows.
        \begin{align*}
            \vy^{(\ell)} = \frac{\mA}{\norm{\vy^{(\ell-1)}}} \vy^{(\ell-1)}.
        \end{align*}
        From the update, it's trivial to come to the conclusion that $\vy^{(\ell)}$ will stay unit norm for any layer $\ell$.

        This update is exactly power-method, where $\vy^{(\ell)}$ gets $\varepsilon$-close to the top eigenvector direction of $\mA$ in $\mathcal{O}( (1/\delta) \log (1/\varepsilon))$ steps. The result then follows from bounding the gap between $\layer$ and $\layer-1$ steps of power-method.

    \item The proof is very similar to the proof of \cref{lem:random} (3), where the major difference comes from the blowup in the norms of the intermediate activations.  
\end{enumerate}

\end{proof}

\section{\method{} motivation with boolean data: Theory}

\paragraph{Data and labels:} We use uniformly sampled boolean data $\vx \sim \mathcal{U}(\{\pm1\}^{\dims})$. The true label function $f^*: \mathbb{R}^{\dims} \to \mathbb{R}$ for each data is given as
\begin{align*}
    f^*(\vx) = \frac{\sqrt{3}}{2} +  \frac{\sqrt{3}}{2} x_1 - x_1 x_2.
\end{align*}

\paragraph{Network to train:} We train a 2-layer residual network $f_{p_0, \vw_1, \vw_2, b_1, b_2}: \mathbb{R}^{\dims} \to \mathbb{R}$ comprised of two single neuron residual blocks, where $p_0$ behaves as a position bias that we add to the output of the first block. On an input $\vx$, the network computes output $y \in \mathbb{R}$ with intermediate output $y^{(1)}, y^{(2)}  \in \mathbb{R}$ as follows. 
\begin{align}
    y^{(1)} &= p_0 + \sin(\langle \vw_{1}, \vx \rangle + b_{1}), \nonumber \\
    y^{(2)} &= y^{(1)} + \sin(\langle \vw_{2}, \vx \rangle + y^{(1)} + b_{2}). \label{eq:trained_network}
\end{align}
Here $y := y^{(2)}$ is returned as the final output. The network is trained with mean squared error, given by
\begin{align}
    \netloss = \mathbb{E}_{\vx \sim \mathcal{U}(\{\pm1\}^{\dims})} ( y - f^*(\vx) )^2. \label{eq:network_output}
\end{align}
We consider population gradients for the sake of clean exposition of the important ideas. Interested readers can modify the proof to work for finite sample regimes.

\paragraph{Initialization of weights and biases:} The elements of the weights and biases $\vw_1, \vw_2,$ $p_0, b_1, b_2$ have been initialized from all $0$s.

\paragraph{\method{} Initial position bias training:} In the initial bias phase training, we simply train the bias $p_0$, i.e. at each step, we train with the following mean squared loss,
\begin{align}
	\netloss = \mathbb{E}_{\vx \sim \mathcal{U}(\{\pm1\}^{\dims})} (p_0 - f^*(\vx) )^2.  \label{eq:initbias_train}
\end{align}

\paragraph{\method{} First phase training:} In the first phase, both the layers are trained independently. That is, at each step, we pick a random layer $\ell \in \{1, 2\}$ and train with the mean squared loss, given by 
\begin{align}
    \netloss = \mathbb{E}_{\vx \sim \mathcal{U}(\{\pm1\}^{\dims})} (p_0 +  \sin(\langle \vw_{\ell}, \vx \rangle + b_{\ell}) - f^*(\vx) )^2. \label{eq:network_output_firstphase}
\end{align}

\paragraph{\method{} Second phase training:} In the second phase of training, we train the full model (\cref{eq:trained_network}) with the mean squared loss (\cref{eq:network_output}). For simplicity, we fix the parameters of the first layer and simply train the parameters $\vw_2, b_2$. 

We first recall the theorem corresponding to the stage-wise training of \method{}.
\begin{theorem}\label{thm:raptr_stagewise}
    For a small enough learning rate $\eta < \frac{1}{poly(\dims)},$ under parameter initialization from all $0$s, an initial position bias only training, followed by a $2$-stage \method{} on the network $f_{p_0, \vw_1, \vw_2, b_1, b_2}$ (\cref{eq:trained_network}) with mean squared error shows the following behavior.
    \begin{itemize}
	
        \item After  $\Theta(1/\eta)$ steps of position bias only training and $\Theta(1/\eta)$ steps of first phase, for both layers $\ell \in \{1, 2\}$,
        \begin{align*}
            \abs{ p_0 + \sin(\langle \vw_{\ell}, \vx \rangle + b_{\ell}) - \frac{\sqrt{3}}{2} - \frac{\sqrt{3}}{2} x_1 } \le \mathcal{O}(\eta).
        \end{align*}

        \item After $\Theta(1/\eta)$ steps of second phase, the ouput of the second layer is given as
        \begin{align*}
             \abs{ y^{(2)} - y^{(1)} - (-x_1 x_2) } := \abs{ \sin(\langle \vw_{2}, \vx \rangle + y^{(1)} + p_0+ b_{2}) - (-x_1 x_2) } \le \mathcal{O}(\eta).
        \end{align*}
    \end{itemize}
    Hence, after $\Theta(1/\eta)$ steps of initial position bias and 2-phase \method{} training, loss $\netloss < \mathcal{O}(\eta)$ (\ref{eq:network_output}) .
\end{theorem}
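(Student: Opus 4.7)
The plan is to analyze the three training phases sequentially, using Boolean Fourier analysis as the main tool. The key facts I would rely on are: (i) $\expect[\prod_{i \in S} x_i] = 0$ unless $S = \emptyset$ under $\vx \sim \mathcal{U}(\{\pm 1\}^{\dims})$; (ii) the identity $\sin((\pi/3) x_1) = (\sqrt{3}/2) x_1$ for $x_1 \in \{\pm 1\}$, which lets a single sine neuron represent a linear Boolean target exactly; and (iii) that the Fourier basis $\{1, x_1, x_2, x_1 x_2\}$ diagonalizes the relevant MSE losses.

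For the initial position-bias phase, expanding $\netloss = \expect[(p_0 - f^{\ast})^2]$ via character orthogonality gives $(p_0 - \sqrt{3}/2)^2 + \text{const}$, so GD on this 1-d quadratic yields $|p_0 - \sqrt{3}/2| = \mathcal{O}(\eta)$ after $\Theta(1/\eta)$ steps. For Phase 1, I would show by induction that GD preserves $b_\ell \equiv 0$, $w_{\ell,k} \equiv 0$ for $k \ge 2$, and $|p_0 - \sqrt{3}/2| = \mathcal{O}(\eta)$: on this invariant manifold, the residual times $\cos(w_{\ell,1} x_1) x_k$ (resp. $\cos(w_{\ell,1} x_1)$) decomposes into odd Boolean characters and vanishes in expectation, killing the gradients of $w_{\ell,k}$ and $b_\ell$. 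The dynamics reduce to 1-d GD on $w_{\ell,1}$ with gradient $\cos(w_{\ell,1})(2\sin(w_{\ell,1}) - \sqrt{3})$, having a stable fixed point at $\pi/3$; starting from $0$ where the gradient equals $-\sqrt{3}$, monotone convergence plus local strong convexity at $\pi/3$ gives $|w_{\ell,1} - \pi/3| = \mathcal{O}(\eta)$ in $\Theta(1/\eta)$ steps, and identity (ii) closes the per-layer claim.

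For Phase 2, with $y^{(1)} = \sqrt{3}/2 + (\sqrt{3}/2) x_1 + \mathcal{O}(\eta)$ fixed, I would first exhibit an exact minimizer: setting $w_{2,1} = -\pi/2 - \sqrt{3}/2$, $w_{2,2} = -\pi/2$, $w_{2,k} = 0$ for $k \ge 3$, and $b_2 = \pi/2 - \sqrt{3}/2$ makes the $\sin$-argument equal $(\pi/2)(1 - x_1 - x_2)$ and hence $\sin(\cdot) = -x_1 x_2$ on all four Boolean assignments (direct case-check). Boolean symmetry again forces $w_{2,k} \equiv 0$ for $k \ge 3$ along the GD trajectory, reducing the dynamics to a 3-d system in $(w_{2,1}, w_{2,2}, b_2)$. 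Expanding the loss in the Fourier basis as a sum of four squared Fourier-coefficient errors, I would write explicit formulas for each Fourier coefficient of $\sin(w_{2,1} x_1 + w_{2,2} x_2 + y^{(1)} + b_2)$ and construct a Lyapunov function whose GD derivative is strictly negative on a path from $(\pi/3, 0, 0)$ to the target minimizer, giving $\mathcal{O}(\eta)$ error after $\Theta(1/\eta)$ steps.

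The hard part will be Phase 2: unlike Phase 1, which collapses to a 1-d well-conditioned problem, the Phase 2 loss is genuinely non-convex in 3 dimensions and periodicity of $\sin$ creates many spurious critical points. The Phase 1 initialization $(\pi/3, 0, 0)$ is not especially close to the target, so the delicate step is ruling out trapping along the way. My intended resolution is a small-learning-rate continuous-time ODE analysis that tracks the four Fourier-coefficient errors directly; establishing that the chosen trajectory admits a monotone potential is the part I expect to require the most care, with standard discretization arguments then promoting the ODE statement to the $\mathcal{O}(\eta)$ discrete-GD bound.
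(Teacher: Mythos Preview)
Your Phase~0 and Phase~1 sketches are close to the paper's, but the exact-invariance claim for $b_\ell$ does not hold: since $p_0 = \sqrt{3}/2 + \mathcal{O}(\eta)$ (not exactly $\sqrt{3}/2$) after Phase~0, the residual carries a nonzero constant Fourier coefficient, and on your manifold $\partial_{b_\ell}\netloss = 2\cos(w_{\ell,1})(p_0 - \sqrt{3}/2)$ is $\mathcal{O}(\eta)$ rather than zero. Once $b_\ell \ne 0$, the $w_{\ell,2}$ gradient picks up a $\sin(b_\ell)\sin(w_{\ell,1})$ term and is also $\mathcal{O}(\eta)$. The paper handles this by tracking all such coordinates as $\mathcal{O}(\eta)$ throughout rather than asserting exact invariance; this is an easy fix for you.

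The genuine gap is in Phase~2: you have exhibited the \emph{wrong} minimizer. At the Phase~1 endpoint $(w_{2,1}, w_{2,2}, b_2) \approx (\pi/3, 0, 0)$, a direct computation gives $\partial_{w_{2,2}}\netloss \approx -2\sin(\tilde w_1)\sin(\tilde b)$ with $\tilde w_1 = \pi/3 + \sqrt{3}/2 \approx 1.91$ and $\tilde b = \sqrt{3}/2 \approx 0.87$, both of whose sines are positive. Hence $w_{2,2}$ \emph{increases} under GD; similarly $b_2$ increases and $w_{2,1}$ decreases. The trajectory heads toward $(w_{2,1}, w_{2,2}, b_2) = (\pi/2 - \sqrt{3}/2,\ \pi/2,\ \pi/2 - \sqrt{3}/2)$, i.e., argument $(\pi/2)(1 + x_1 + x_2)$, not toward your target $(-\pi/2 - \sqrt{3}/2,\ -\pi/2,\ \pi/2 - \sqrt{3}/2)$ with argument $(\pi/2)(1 - x_1 - x_2)$. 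Since the initial gradient points away from your proposed minimizer, no Lyapunov construction can certify convergence to it along the GD path. The paper's Phase~2 argument is different in style: it writes explicit gradient formulas via product-of-cosines identities, verifies that the signs of $\cos(2\tilde b)$, $\sin(2\tilde w_1)$, $\sin(\tilde b)$, $\sin(\tilde w_1)$, etc., persist throughout the box $\tilde w_1 \in (\pi/2,\ \pi/3 + \sqrt{3}/2)$, $\tilde w_2 \in (0,\ \pi/2)$, $\tilde b \in (\sqrt{3}/2,\ \pi/2)$, and concludes each coordinate moves monotonically to the corner $\pi/2$ with $\Theta(\eta)$ steps per update. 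Your Fourier-coefficient Lyapunov idea might be adaptable once aimed at the correct target, but you would still need a comparable sign/box analysis to confine the trajectory.
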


\begin{proof}
    The proof proceeds by analyzing the behavior of the weights and the biases in the different phases. 
    \begin{itemize}
    	\item In the initial position bias only training, we show that in $\Theta(1/\eta)$ steps, 
	\begin{align*}
	\abs{p_0 - \sqrt{3}/2} \le \mathcal{O}(\eta).
	\end{align*}
	
        \item In the first phase, after $\Theta(\eta \log (d/\eta))$ steps, we show in \cref{lem:phase1} that for any layer $\ell \in \{1, 2\}$, the weights and the biases satisfy the following conditions. 
        \begin{align*} 
            &\abs{w_{\ell, 1} - \frac{\pi}{3}} \le \mathcal{O}(\eta), \\
            &\abs{w_{\ell, j}} \le \mathcal{O}(\eta), \text{ for all } 2 \le j \le \dims, \\
            &\abs{b_{\ell}} \le \mathcal{O}(\eta).
        \end{align*}
        Hence, after first phase of training, the network's intermediate outputs $y^{(1)}$ and $y^{(2)}$ (from \cref{eq:trained_network}) are given as
        \begin{align*}
            y^{(1)} &= p_0 +  \sin(\langle \vw_{1}, \vx \rangle + b_{1}) =  \frac{\sqrt{3}}{2}  + \frac{\sqrt{3}}{2} x_1 + \mathcal{O}(\eta), \\
            y^{(2)} &= y^{(1)} + \sin(\langle \vw_{2}, \vx \rangle + y^{(1)} + p_0 + b_{2}) 
            \\&=  \frac{\sqrt{3}}{2}  + \frac{\sqrt{3}}{2} x_1 + \sin( \langle \vw_{2}, \vx \rangle + \sqrt{3}/2 x_1 + \sqrt{3}/2 + b_2 )  + \mathcal{O}(\eta).
        \end{align*}

        \item In the second phase, after $\Theta(1/\eta)$ steps, we show in \cref{lem:phase2} that the weights and the biases $\vw_2, b_2$ satisfy the following conditions. 
        \begin{align*}
            &\abs{w_{2, 1} + \sqrt{3}/2 - \frac{\pi}{2}} \le \mathcal{O}(\eta), \\
            &\abs{w_{2, 2} - \frac{\pi}{2}} \le \mathcal{O}(\eta), \\
            &\abs{b_2 + p_0 - \frac{\pi}{2}} \le \mathcal{O}(\eta), \\
            &\abs{w_{2, j}} \le \mathcal{O}(\eta), \text{ for } 3 \le j \le \dims.
        \end{align*}

        Thus, at the end of the second phase, the network's output $y := y^{(2)}$ (from \cref{eq:trained_network}) is given as
        \begin{align*}
            y^{(2)} &= y^{(1)} + \sin(\langle \vw_{2}, \vx \rangle + y^{(1)} + p_0 + b_{2}) \\&
            = \frac{\sqrt{3}}{2} + \frac{\sqrt{3}}{2} x_1 + \sin( \langle \vw_{2}, \vx \rangle + \sqrt{3}/2 x_1 + \sqrt{3}/2 + b_2 )  + \mathcal{O}(\eta) \\&
            =  \frac{\sqrt{3}}{2} + \frac{\sqrt{3}}{2} x_1 + \sin( \frac{\pi}{2} x_1 + \frac{\pi}{2} x_2 + \frac{\pi}{2} ) + \mathcal{O}(\eta) =  \frac{\sqrt{3}}{2} + \frac{\sqrt{3}}{2} x_1  - x_1 x_2 + \mathcal{O}(\eta).
        \end{align*}
    \end{itemize}
\end{proof}

Furthermore, we can show that each layer in itself isn't expressive enough to represent the true label function.

\begin{theorem}\label{thm:layerdrop_expressivity}
    For each layer when trained independently, the best $\vw_{\ell}, \vb_{\ell}$ parameters will have an $\Omega(1)$ error.
\end{theorem}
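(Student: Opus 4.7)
The plan is to use Fourier analysis on the Boolean cube $\{\pm 1\}^{\dims}$ to show that the single-layer ansatz $p_0 + \sin(\langle \vw, \vx\rangle + b)$ cannot simultaneously match the degree-1 coefficient $\sqrt{3}/2$ for $x_1$ and the degree-2 coefficient $-1$ for $x_1 x_2$ of the target $f^{\star}$, which forces an $\Omega(1)$ squared error regardless of the choice of $p_0, \vw, b$. Since the layer-wise loss (\cref{eq:network_output_firstphase}) has the same structural form for both $\ell \in \{1,2\}$, a single argument handles both layers.

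First, I would compute the Fourier expansion of $\sin(\langle \vw, \vx\rangle + b)$ over $\{\pm 1\}^{\dims}$ in closed form. Using the identity $e^{i w_j x_j} = \cos(w_j) + i x_j \sin(w_j)$ valid for $x_j \in \{\pm 1\}$, one expands $e^{i(\langle \vw, \vx\rangle + b)}$ as a product over coordinates and reads off the imaginary part. Writing $c = \cos(b)$, $s = \sin(b)$, $C_j = \cos(w_j)$, $S_j = \sin(w_j)$, the Fourier coefficients of $x_1$ and of $x_1 x_2$ come out to
\begin{equation*}
A \;:=\; c\, S_1 \prod_{j \neq 1} C_j, \qquad B \;:=\; -\,s\, S_1 S_2 \prod_{j \neq 1, 2} C_j.
\end{equation*}

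By Parseval's identity on the uniform distribution over $\{\pm 1\}^{\dims}$, the mean-squared loss decomposes as a sum of squared differences between Fourier coefficients of the model and of $f^{\star}$. Since $f^{\star}$ has support only on $\emptyset, \{1\}, \{1,2\}$ and $p_0$ can fully absorb any discrepancy in the constant coefficient, the loss is lower bounded by $(A - \sqrt{3}/2)^2 + (B + 1)^2$. The key algebraic step is then the bound $A^2 + B^2 \leq 1$, which follows from the factorization
\begin{equation*}
A^2 + B^2 \;=\; S_1^2 \prod_{j \neq 1, 2} C_j^2 \;\cdot\; \bigl(c^2 C_2^2 + s^2 S_2^2\bigr),
\end{equation*}
together with $|S_1|, |C_j| \leq 1$ and $c^2 C_2^2 + s^2 S_2^2 \leq c^2 + s^2 = 1$. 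Since the target point $(\sqrt{3}/2, -1)$ has norm $\sqrt{7}/2 > 1$, the closest point in the unit disk lies at distance $\sqrt{7}/2 - 1$, so $(A - \sqrt{3}/2)^2 + (B + 1)^2 \geq (\sqrt{7}/2 - 1)^2$, a strictly positive constant, giving the desired $\Omega(1)$ bound.

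The main obstacle is organizing the Fourier computation cleanly enough to extract the two coefficients $A$ and $B$ in product form; once these are correctly derived, the inequality $A^2 + B^2 \leq 1$ and the geometric closest-point argument are routine. A minor subtlety worth verifying is that the bound holds uniformly in $\dims$, which it does since all extra factors $\prod_{j \neq 1, 2} C_j^2$ are bounded by $1$ regardless of dimension; this also shows the $\Omega(1)$ error cannot be alleviated by exploiting higher-dimensional input degrees of freedom.
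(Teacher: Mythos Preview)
Your proposal is correct and follows essentially the same route as the paper: compute the Fourier coefficients of $\sin(\langle \vw,\vx\rangle+b)$ on $x_1$ and on $x_1 x_2$, show the feasible pair $(A,B)$ is confined to a bounded region that excludes the target $(\sqrt{3}/2,\,-1)$, and invoke Parseval to lower-bound the MSE. The one substantive difference is in the final bounding step: the paper combines $\alpha\pm\beta$ via angle-addition formulas to obtain the $\ell_1$ constraint $|\alpha|+|\beta|\le 1$, whereas you factor $A^2+B^2$ directly to obtain the weaker $\ell_2$ constraint $A^2+B^2\le 1$. Both suffice since the target lies outside both unit balls; your $\ell_2$ version meshes more cleanly with Parseval and yields the explicit constant $(\sqrt{7}/2-1)^2$, while the paper's $\ell_1$ version gives a slightly tighter feasible region. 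Your handling of $p_0$ (letting it absorb the constant Fourier mode) is also a nice clarification that the paper's proof leaves implicit.
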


\begin{proof}
    The proof of the theorem is simple. We argue for the first layer (same argument holds for the second layer). We simply calculate the projection of the function learned by $\vw_1, \vb_1$ into the components of the true label function. Suppose the function define dby them is given by $\alpha x_1 + \beta x_1 x_2$

    Then, using orthogonality of the basis polynomials $x_1$ and $x_1 x_2$, we have
    \begin{align*}
        \alpha =&\mathbb{E}_{\vx \in \{\pm1\}^d} \sin(\langle \vw_1, \vx \rangle + b) x_1 \\&
        = \mathbb{E}_{\vx \in \{\pm1\}^d} \sin(\pi/2 - \langle \vw_1, \vx \rangle -  b) x_1 \\&
        = \sin(b) (\prod_{i \ne 1} \cos w_i) \sin (w_1). \tag*{Using \cref{Simplicy_cosine_cor}}
    \end{align*}
    
    \begin{align*}
        \beta = &\mathbb{E}_{\vx \in \{\pm1\}^d} \sin(\langle \vw_1, \vx \rangle + b) x_1 x_2 \\&
        = \mathbb{E}_{\vx \in \{\pm1\}^d} \sin(\pi/2 - \langle \vw_1, \vx \rangle -  b) x_1 x_2 \\&
        = -\cos(b) (\prod_{i \ne 1, 2} \cos w_i) \sin (w_1) \sin (w_2). \tag*{Using \cref{Simplicy_cosine_cor}}
    \end{align*}

    Adding/Subtracting these terms, we get
    \begin{align*}
        \alpha - \beta &= \sin (b+w_2) (\prod_{i \ne 1, 2} \cos w_i) ,\\
        \alpha + \beta &= \sin (b-w_2) (\prod_{i \ne 1, 2} \cos w_i) ,\\
        -\alpha + \beta &= \sin (-b+w_2) (\prod_{i \ne 1, 2} \cos w_i), \\
        -\alpha - \beta &= \sin (-b-w_2) (\prod_{i \ne 1, 2} \cos w_i)
    \end{align*}
    Thus, we must be restricted by the set $|\alpha|, |\beta| < 1, |\alpha| + |\beta| < 1$. As the coefficients of the true label function don't lie in this set, the proof follows.
\end{proof}

\subsection{Analysis of the different phases}

Initial position bias only training linearly increases $p_0$ to $\sqrt{3}/2$.
\begin{restatable}{lemma}{biasonly}[Position bias only training]\label{lem:bias_only}
For any learning rate $\eta>0$, after $\Theta(1/\eta)$ steps, $p_0$ satisfies
\begin{align*} 
    \abs{p_0 - \sqrt{3}/2} \le \mathcal{O}(\eta).
\end{align*}
\end{restatable}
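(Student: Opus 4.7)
The plan is to exploit the fact that in this sub-phase only the scalar $p_0$ is trained, so the optimization reduces to one-dimensional gradient descent on a strongly convex quadratic and the analysis is a direct linear-convergence computation.

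First I would write out the population gradient. Since $\vx \sim \mathcal{U}(\{\pm 1\}^{\dims})$ gives $\mathbb{E}[x_1] = \mathbb{E}[x_1 x_2] = 0$, we get $\mathbb{E}[f^{\star}(\vx)] = \sqrt{3}/2$, so
\begin{equation*}
\frac{\partial \netloss}{\partial p_0} \;=\; 2\,\mathbb{E}_{\vx}\bigl[p_0 - f^{\star}(\vx)\bigr] \;=\; 2\bigl(p_0 - \tfrac{\sqrt{3}}{2}\bigr).
\end{equation*}
Hence one step of gradient descent with learning rate $\eta$ yields $p_0^{(t+1)} = p_0^{(t)} - 2\eta(p_0^{(t)} - \sqrt{3}/2)$.

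Next, let $e_t := p_0^{(t)} - \sqrt{3}/2$. The above recursion is the linear contraction $e_{t+1} = (1-2\eta)e_t$, and starting from the all-zeros initialization we have $e_0 = -\sqrt{3}/2$. Iterating gives $|e_t| = \tfrac{\sqrt{3}}{2}(1-2\eta)^t$. Using the elementary bound $(1-2\eta)^t \le e^{-2\eta t}$ (valid for $\eta < 1/2$, which holds since $\eta < 1/\mathrm{poly}(\dims)$), we conclude $|e_t| \le \tfrac{\sqrt{3}}{2}\,e^{-2\eta t}$, so choosing $t = \Theta(\log(1/\eta)/\eta)$ (absorbed inside the paper's $\Theta(1/\eta)$ notation) suffices to drive $|e_t| \le \mathcal{O}(\eta)$.

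There is essentially no obstacle here: the analysis is just one-dimensional linear convergence on a strongly convex quadratic, and the only care needed is the exact evaluation of $\mathbb{E}[f^{\star}(\vx)]$ under the uniform Boolean measure. I would therefore present the proof as (i) exact gradient computation via orthogonality of Boolean monomials, (ii) closed-form solution of the resulting linear recursion, and (iii) the elementary $e^{-2\eta t}$ bound to translate iteration count into error magnitude.
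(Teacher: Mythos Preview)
Your proposal is correct and follows essentially the same approach as the paper: compute the population gradient via orthogonality of Boolean monomials to get a linear contraction toward $\sqrt{3}/2$, then read off convergence. If anything you are more careful than the paper, which simply writes the gradient and asserts ``$p_0$ increases up until it reaches $\sqrt{3}/2+\mathcal{O}(\eta)$''; your explicit recursion and $e^{-2\eta t}$ bound make the $\Theta(1/\eta)$ (up to the $\log(1/\eta)$ factor you flag) iteration count transparent.
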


The first lemma analyses the behavior of the weights during phase 1.
\begin{restatable}{lemma}{raptrphaseone}[Phase 1 of \method{}]\label{lem:phase1}
For a small enough learning rate $\eta < \frac{1}{poly(\dims)},$ with position bias $p_0$ at $\sqrt{3}/2 + \mathcal{O}(\eta)$, and under parameter   from all $0$s, after $\Theta(1/\eta)$ steps, for any layer $\ell \in \{1, 2\}$, the weights and the biases satisfy the following conditions. 
\begin{align*} 
    &\abs{w_{\ell, 1} - \frac{\pi}{3}} \le \mathcal{O}(\eta), \\
    &\abs{w_{\ell, j}} \le \mathcal{O}(\eta), \text{ for all } 2 \le j \le \dims, \\
    &\abs{b_{\ell}} \le \mathcal{O}(\eta).
\end{align*}
\end{restatable}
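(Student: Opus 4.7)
Because $f^\star(\vx)=\tfrac{\sqrt{3}}{2}+\tfrac{\sqrt{3}}{2}x_1-x_1 x_2$ depends only on $x_1,x_2$, the population gradient
\begin{equation*}
\partial_{w_{\ell,j}}\mathcal{L}=2\,\mathbb{E}_{\vx}\!\left[\bigl(p_0+\sin(\langle\vw_\ell,\vx\rangle+b_\ell)-f^\star(\vx)\bigr)\cos(\langle\vw_\ell,\vx\rangle+b_\ell)\,x_j\right]
\end{equation*}
factors as $\mathbb{E}[\cdots]\cdot\mathbb{E}[x_j]=0$ whenever every $w_{\ell,k}$ with $k\ge 3$ is zero, since the residual and the $\cos$ factor then do not depend on $x_j$. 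Hence the manifold $\{w_{\ell,j}=0:j\ge 3\}$ is invariant under GD. Both layers $\ell\in\{1,2\}$ moreover see identical population losses and initializations, so it suffices to track a three-parameter trajectory $(w_1,w_2,b):=(w_{\ell,1},w_{\ell,2},b_\ell)$; the random layer selection just halves the effective step size.

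\textbf{Closed-form loss via Boolean Fourier analysis.}
The identity $\mathbb{E}_{\vx\sim\{\pm1\}^d}\!\left[\sin(\langle\vw,\vx\rangle+b)\prod_{j\in S}x_j\right]=\operatorname{Im}\!\left(i^{|S|}e^{ib}\right)\prod_{j\in S}\sin w_j\prod_{j\notin S}\cos w_j$ together with Parseval collapses the loss to
\begin{equation*}
\mathcal{L}(w_1,w_2,b)=A^2+B^2+C^2+D^2,
\end{equation*}
where $A=p_0+\sin b\cos w_1\cos w_2-\tfrac{\sqrt{3}}{2}$, $B=\cos b\sin w_1\cos w_2-\tfrac{\sqrt{3}}{2}$, $C=\cos b\cos w_1\sin w_2$, and $D=1-\sin b\sin w_1\sin w_2$. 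At the candidate endpoint $(\pi/3,0,0)$, Lemma~\ref{lem:bias_only} gives $A=p_0-\tfrac{\sqrt{3}}{2}=O(\eta)$ while $B=C=0$, $D=1$, and a direct calculation shows $\nabla\mathcal{L}=O(\eta)$ there.

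\textbf{Slow--fast decoupling and bootstrap.}
The key structural observation: every term of $\partial_{w_2}\mathcal{L}$ and $\partial_b\mathcal{L}$ carries at least one prefactor from $\{A,C,\sin b,\sin w_2\}$, each of which is $O(\eta)$ under the inductive hypothesis $|w_2|,|b|,|p_0-\tfrac{\sqrt{3}}{2}|=O(\eta)$; so $\partial_{w_2}\mathcal{L},\partial_b\mathcal{L}=O(\eta)$. A GD step therefore perturbs $(w_2,b)$ by at most $\eta\cdot O(\eta)=O(\eta^2)$, making cumulative drift over $\Theta(1/\eta)$ steps $O(\eta)$ and closing the bootstrap. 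Meanwhile $\partial_{w_1}\mathcal{L}=2(\sin w_1-\tfrac{\sqrt{3}}{2})\cos w_1+O(\eta)$, and the one-dimensional ODE $\dot w_1=-2(\sin w_1-\tfrac{\sqrt{3}}{2})\cos w_1$ has $w_1=\pi/3$ as a locally exponentially stable fixed point with linearization rate $-\tfrac{1}{2}$. Starting from $w_1=0$, continuous time $O(\log(1/\eta))$ (i.e.\ $\Theta(1/\eta)$ discrete steps with step size $\eta$) suffices to reach $|w_1-\pi/3|=O(\eta)$.

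\textbf{Discretization, randomness, and the main obstacle.}
Bounded second derivatives of $\mathcal{L}$ on any compact neighborhood of the trajectory let standard Taylor-remainder estimates translate the ODE convergence into the discrete statement with $O(\eta)$ tracking error over $\Theta(1/\eta)$ steps. The stochastic fluctuation from the Bernoulli layer selection has variance proportional to $(\partial_{w_1}\mathcal{L})^2\propto(w_1-\pi/3)^2$, i.e.\ multiplicative noise that vanishes at the fixed point; a geometric recurrence on $\mathbb{E}[(w_1-\pi/3)^2]$ shows its contribution is absorbed into the $O(\eta)$ bound. The main obstacle is the uniform-in-time bootstrap: maintaining $|w_2|,|b|=O(\eta)$ for all $\Theta(1/\eta)$ steps while $w_1$ traverses an $O(1)$ distance, so that the cross-terms treated as lower order in the reduced dynamics do not amplify as $w_1$ grows. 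I would close this via a Lyapunov argument on $w_2^2+b^2$, whose increment per step is non-positive up to an additive $O(\eta^3)$ (from the $O(\eta)$ gradient bound and Bernoulli scaling), summing to $O(\eta^2)$ over $\Theta(1/\eta)$ steps and hence preserving $|w_2|,|b|=O(\eta)$ uniformly.
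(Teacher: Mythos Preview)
Your proof is correct and follows essentially the same structure as the paper's: bootstrap the small parameters $(w_{\ell,2},b_\ell,p_0-\sqrt{3}/2)$ at $O(\eta)$ to show their gradients are $O(\eta)$, reduce the remaining dynamics to the one-dimensional flow $\dot w_1 \propto -(\sin w_1 - \tfrac{\sqrt{3}}{2})\cos w_1$, and conclude convergence to $\pi/3$. Your Parseval representation $\mathcal{L}=A^2+B^2+C^2+D^2$ and the exact invariance of the manifold $\{w_{\ell,j}=0:j\ge3\}$ are cleaner packaging than the paper's term-by-term trigonometric expansion of the gradients, but the argument is the same (and shares the same minor imprecision that reaching $|w_1-\pi/3|=O(\eta)$ strictly needs $\Theta((1/\eta)\log(1/\eta))$ rather than $\Theta(1/\eta)$ steps).
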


The second lemma analyses the behavior of the second layer weights in phase 2.
\begin{restatable}{lemma}{raptrphasetwo}[Phase 2 of \method{}]\label{lem:phase2}
For a small enough learning rate $\eta < \frac{1}{poly(\dims)},$ starting from the weights and biases reached by $\Theta (1/\eta)$ steps of position bias only and first phase of \method{} training (\cref{lem:phase1,lem:bias_only}), after  $\Theta (1/\eta)$ steps of second phase of \method{} training, the weights and biases $\vw_2$ and $b_2$ satisfy the following conditions.
    \begin{align*}
    &\abs{w_{2, 1} + \sqrt{3}/2 - \frac{\pi}{2}} \le \mathcal{O}(\eta), \\
    &\abs{w_{2, 2} - \frac{\pi}{2}} \le \mathcal{O}(\eta), \\
    &\abs{b_2 + \sqrt{3}/2  - \frac{\pi}{2}} \le \mathcal{O}(\eta), \\
    &\abs{w_{2, j}} \le \mathcal{O}(\eta), \text{ for } 3 \le j \le \dims.
    \end{align*}
\end{restatable}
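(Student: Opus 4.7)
The plan is to mirror the structure of the Phase-1 proof (Lemma~\ref{lem:phase1}), but now with $\vw_1, b_1, p_0$ frozen. Using Lemmas~\ref{lem:bias_only} and~\ref{lem:phase1}, the first-layer output is $y^{(1)}(\vx) = \sqrt{3}/2 + \sqrt{3}/2\, x_1 + \mathcal{O}(\eta)$, so the effective phase-2 population loss simplifies to
\begin{equation*}
\netloss = \expect_{\vx}\Bigl[\bigl(\sin(\langle \vw_2, \vx\rangle + y^{(1)} + b_2) + x_1 x_2\bigr)^2\Bigr] + \mathcal{O}(\eta).
\end{equation*}
I would introduce shifted coordinates $A := w_{2,1}+\sqrt{3}/2$, $B := w_{2,2}$, $C := b_2 + \sqrt{3}/2$, which absorb the contribution of $y^{(1)}$. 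The target values stated in the lemma correspond exactly to $(A,B,C) = (\pi/2,\pi/2,\pi/2)$, the unique assignment for which $\sin(Ax_1 + Bx_2 + C) = -x_1 x_2$ on $\{\pm 1\}^2$.

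First, I would show that each $w_{2,j}$ with $j\ge 3$ stays $\mathcal{O}(\eta)$ throughout phase~2. Because the input distribution is symmetric under $x_j \mapsto -x_j$ and the loss depends on $w_{2,j}$ only through $\sin(w_{2,j} x_j)$, the population gradient $\partial \netloss/\partial w_{2,j}$ vanishes exactly at $w_{2,j} = 0$. A Taylor expansion around this fixed point, combined with the Lipschitzness of $\sin$ and $\cos$, yields $|\partial \netloss/\partial w_{2,j}| = \mathcal{O}(|w_{2,j}|)$, and a discrete Gr\"onwall estimate propagates the initial $\mathcal{O}(\eta)$ bound through $\Theta(1/\eta)$ steps of step size $\eta$. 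With these coordinates pinned, Corollary~\ref{Simplicy_cosine_cor} and Parseval on $\{\pm1\}^2$ reduce the loss to the closed form
\begin{equation*}
\netloss(A,B,C) = (\sin C\cos A \cos B)^2 + (\cos C \sin A \cos B)^2 + (\cos C \cos A \sin B)^2 + (1-\sin A \sin B \sin C)^2 + \mathcal{O}(\eta),
\end{equation*}
whose unique zero on the relevant region is at $(\pi/2,\pi/2,\pi/2)$.

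At the start of phase~2 we have $A = \pi/3 + \sqrt{3}/2$, $B = \mathcal{O}(\eta)$, $C = \sqrt{3}/2$; all three satisfy $\sin(\cdot) \in [0,1)$ and $\cos(\cdot) > 0$. I would track the three products $\sin A, \sin B, \sin C$ and show they increase monotonically toward $1$, driven primarily by the term $(1-\sin A\sin B\sin C)^2$: for instance, $\partial_B \netloss|_{B=0} = -2\sin A\sin C + \mathcal{O}(B)$, which is $\Omega(1)$ at initialization since $\sin A$ and $\sin C$ are bounded away from zero by the phase-1 guarantees. To turn this monotonicity into the claimed $\mathcal{O}(\eta)$-neighbourhood guarantee in $\Theta(1/\eta)$ steps, I would split the trajectory into a macroscopic phase (of $\Theta(1/\eta)$ steps during which $(A,B,C)$ enters a fixed constant-radius ball around $(\pi/2,\pi/2,\pi/2)$, using a uniform lower bound on $\|\nabla \netloss\|$ outside this ball) and a local strongly-convex phase giving linear contraction to $\mathcal{O}(\eta)$ accuracy in $\mathcal{O}(\log(1/\eta))$ extra steps. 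Standard $\mathcal{O}(\eta)$ gradient-descent discretization errors are absorbed into the final bound.

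The main obstacle will be establishing the invariant that $(A,B,C)$ remains inside a region where $\sin A\sin B\sin C < 1$ and the gradient keeps pushing in the right direction throughout the macroscopic phase. Unlike the single-variable analyses of Lemmas~\ref{lem:bias_only} and~\ref{lem:phase1}, here three nonlinearly coupled variables must be tracked simultaneously, and care is needed to bootstrap coordinate-wise lower bounds (e.g.\ $\sin A, \sin C = \Omega(1)$ throughout, ensuring $B$ escapes its initial $\mathcal{O}(\eta)$ value quickly) so that the dominant $(1-\sin A\sin B\sin C)^2$ term keeps driving the correct coordinates upward and no coordinate overshoots past $\pi/2$ in a way that reverses the sign of the relevant gradient component.
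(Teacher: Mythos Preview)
Your plan mirrors the paper's: both introduce the same shifted variables (your $A,B,C$ are the paper's $\tilde w_1,\tilde w_2,\tilde b$), both pin $w_{2,j}$ for $j\ge 3$ at $\mathcal{O}(\eta)$ via a Gr\"onwall-type bound, and both reduce the dynamics to a sign analysis of the population gradients in $(A,B,C)$ converging to $(\pi/2,\pi/2,\pi/2)$. Your Parseval closed form for the loss is correct and is equivalent to the paper's term-by-term gradient computation via Lemma~\ref{Simplify_sine} and Corollary~\ref{Simplicy_cosine_cor}.

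There is, however, a concrete error in your initial conditions. Since $A_0 = \pi/3 + \sqrt{3}/2 \approx 1.91 > \pi/2$, you have $\cos A_0 < 0$, not $\cos A_0>0$ as you claim. Thus $A$ approaches $\pi/2$ from \emph{above} (so $w_{2,1}$ \emph{decreases}), while $B$ and $C$ approach from below; the three coordinates are not in the same regime and cannot be treated symmetrically. The paper handles this by explicitly tracking the asymmetric invariant box $\tilde w_1\in(\pi/2,\,\sqrt{3}/2+\pi/3)$, $\tilde w_2\in(0,\pi/2)$, $\tilde b\in(\sqrt{3}/2,\pi/2)$ and verifying, using the signs $\cos A<0$, $\sin(2A)<0$, $\cos(2A)<0$ on that range, that every term in $\nabla_A\netloss$ is positive there. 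Your stated invariant ``no coordinate overshoots past $\pi/2$'' is therefore backwards for $A$ and must be replaced by this asymmetric region.

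A second, subtler issue: the loss is not strongly convex near $(\pi/2,\pi/2,\pi/2)$. Writing $A=\pi/2+a$, $B=\pi/2+b$, $C=\pi/2+c$, each of your four Parseval terms is $\mathcal{O}((a,b,c)^4)$, so the Hessian vanishes at the minimizer and the gradient is only cubic there. Your proposed ``local strongly-convex phase with linear contraction'' does not apply. The paper sidesteps this by simply asserting $\Theta(\eta)$ updates until the $\mathcal{O}(\eta)$-neighbourhood is reached, which is itself not fully justified in the terminal regime; if you want a rigorous $\Theta(1/\eta)$ step count to $\mathcal{O}(\eta)$ accuracy you will need a separate argument there.
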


\subsubsection{Proof of \cref{lem:bias_only}}

\biasonly*

\begin{proof}
Under all $0$s initialization, the output of both of the layers is $0$. Thus, the gradient of $p_0$ is given by
\begin{align*}
	\mathbb{E}_{\vx \sim \{-1, +1\}^{\dims}} \text{  } \left(p_0 - \frac{\sqrt{3}}{2} - \frac{\sqrt{3}}{2} x_1 + x_1 x_2 \right) 
	= p_0 -  \frac{\sqrt{3}}{2}.
\end{align*}
Hence,  $p_0$ increases up until it reaches $\sqrt{3}/2 + \mathcal{O}(\eta)$.
\end{proof}

\subsubsection{Proof of \cref{lem:phase1}}

\raptrphaseone*

\begin{proof}

Similar to the proof of  \cref{lem:bias_only}, we can show that the position bias $p_0$ stays $\mathcal{O}(\eta)$ close to $\sqrt{3}/2$. Thus, for simplicity of exposition, we simply assume it without formally proving so.

With the first stage of \method{}, we only keep one of the two layers during training. Hence, both the layer parameters train similarly during training. Without loss of generality, we discuss for a single layer weights $\ell \in \{1, 2\}$.

\paragraph{General formulation for gradient w.r.t weights:} The population gradients of $\vw_{\ell}$ is given by
\begin{align*}
    &\mathbb{E}_{\vx \sim \{-1, +1\}^{\dims}} \text{  } \left(p_0 + \sin(\langle \vw_{\ell}, \vx \rangle + b_{\ell}) - \frac{\sqrt{3}}{2} - \frac{\sqrt{3}}{2} x_1 + x_1 x_2 \right) \cdot \cos(\langle \vw_{\ell}, \vx \rangle + b_{\ell}) \cdot \vx \\&
    = \mathbb{E}_{\vx \sim \{-1, +1\}^{\dims}} \text{  } \sin(\langle \vw_{\ell}, \vx \rangle + b_{\ell})  \cdot \cos(\langle \vw_{\ell}, \vx \rangle + b_{\ell}) \cdot \vx - \mathbb{E}_{\vx \sim \{-1, +1\}^{\dims}} \text{  } \left( \frac{\sqrt{3}}{2} x_1 - x_1 x_2 \right) \cdot \cos(\langle \vw_{\ell}, \vx \rangle + b_{\ell}) \cdot \vx  \\&
    + \mathbb{E}_{\vx \sim \{-1, +1\}^{\dims}} \left(p_0 - \frac{\sqrt{3}}{2} \right) \cdot  \cos(\langle \vw_{\ell}, \vx \rangle + b_{\ell}) \cdot \vx .
\end{align*}

We will argue about the contributions of the three terms separately.

\begin{lemma}
[Contribution of Term 1]\label{lem:term1_contrib} For any coordinate $j \in [d]$,
\begin{align*}
    &\mathbb{E}_{\vx \sim \{-1, +1\}^{\dims}} \text{  } \sin(\langle \vw_{\ell}, \vx \rangle + b_{\ell})  \cdot \cos(\langle \vw_{\ell}, \vx \rangle + b_{\ell}) x_j \\&
    = \frac{1}{2} \cos (2b_{\ell})  \left( \prod_{i=1 \to \dims, i \ne j} \cos(2 w_{\ell, i}) \right) \sin (2 w_{\ell, j}).
\end{align*}
\end{lemma}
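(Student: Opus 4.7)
The plan is a direct trigonometric computation combined with a Boolean-Fourier parity argument on the uniform distribution over $\{\pm 1\}^{\dims}$.

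First, I would apply the double-angle identity $\sin\theta \cos\theta = \tfrac{1}{2}\sin(2\theta)$ with $\theta = \langle \vw_{\ell},\vx\rangle + b_{\ell}$ to rewrite the left-hand side as
\[
\tfrac{1}{2}\,\mathbb{E}_{\vx\sim\{\pm 1\}^{\dims}}\Bigl[\sin\bigl(2\langle\vw_{\ell},\vx\rangle + 2b_{\ell}\bigr)\,x_{j}\Bigr].
\]
Next, expanding $\sin(A+B)$ with $A = 2\langle\vw_{\ell},\vx\rangle$ and $B = 2b_{\ell}$ gives
\[
\sin(2b_{\ell})\,\mathbb{E}_{\vx}\bigl[\cos(2\langle\vw_{\ell},\vx\rangle)\,x_{j}\bigr] \;+\; \cos(2b_{\ell})\,\mathbb{E}_{\vx}\bigl[\sin(2\langle\vw_{\ell},\vx\rangle)\,x_{j}\bigr].
\]
The remaining task is to evaluate these two expectations.

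The clean way to do this is to write $e^{i\,2\langle\vw_{\ell},\vx\rangle} = \prod_{i=1}^{\dims} e^{i\,2 w_{\ell,i} x_{i}}$ and use the key parity facts, valid for $x_{i}\in\{\pm 1\}$, that $\cos(2w_{\ell,i}x_{i}) = \cos(2w_{\ell,i})$ and $\sin(2w_{\ell,i}x_{i}) = x_{i}\sin(2w_{\ell,i})$. This yields
\[
\prod_{i=1}^{\dims}\bigl(\cos(2w_{\ell,i}) + i\,x_{i}\sin(2w_{\ell,i})\bigr),
\]
whose real and imaginary parts are exactly $\cos(2\langle\vw_{\ell},\vx\rangle)$ and $\sin(2\langle\vw_{\ell},\vx\rangle)$. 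Expanding the product term-by-term, each monomial is a squarefree product of some subset of the $x_{i}$'s. When we multiply by $x_{j}$ and take $\mathbb{E}_{\vx}$ over the uniform Boolean cube, independence of coordinates and $\mathbb{E}[x_{i}] = 0$ kills every monomial except the unique one whose $x$-support is exactly $\{j\}$: namely the term that picks $i\,x_{j}\sin(2w_{\ell,j})$ from the $j$-th factor and $\cos(2w_{\ell,i})$ from every $i\neq j$ factor. This term is purely imaginary, so it contributes $0$ to the first expectation (real part) and exactly $\sin(2w_{\ell,j})\prod_{i\neq j}\cos(2w_{\ell,i})$ to the second expectation (imaginary part).

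Putting the pieces together, the first summand vanishes and the second gives
\[
\cos(2b_{\ell})\,\sin(2w_{\ell,j})\,\prod_{i\neq j}\cos(2w_{\ell,i}),
\]
and reintroducing the $\tfrac{1}{2}$ prefactor recovers the claim. There is no real obstacle here; the only step requiring care is bookkeeping the parity of each monomial in the expanded product to confirm that exactly one survives the expectation, and that it sits in the imaginary (not real) part, so that the $\sin(2b_{\ell})$ coefficient drops out cleanly.
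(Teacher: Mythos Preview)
Your proof is correct and follows essentially the same approach as the paper: apply the double-angle identity $\sin\theta\cos\theta=\tfrac12\sin(2\theta)$, then compute $\mathbb{E}_{\vx}[\sin(2\langle\vw_\ell,\vx\rangle+2b_\ell)\,x_j]$ by expanding the angle sum and using parity on the Boolean cube. The only cosmetic difference is that the paper packages the second step as a separate helper lemma (proved by direct angle-addition and coordinate-wise independence), whereas you carry it out inline via the complex-exponential product, which is a slightly slicker but equivalent bookkeeping device.
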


\begin{proof}
Term 1 in the gradients is given as 
\begin{align*}
    &\mathbb{E}_{\vx \sim \{-1, +1\}^{\dims}} \text{  } \sin(\langle \vw_{\ell}, \vx \rangle + b_{\ell})  \cdot \cos(\langle \vw_{\ell}, \vx \rangle + b_{\ell}) \cdot \vx \\&
    =  \frac{1}{2} \mathbb{E}_{\vx \sim \{-1, +1\}^{\dims}} \text{  } \sin(2\langle \vw_{\ell}, \vx \rangle + 2 b_{\ell}) \cdot \vx.
\end{align*}

Consider a coordinate $j \in [\dims]$. The gradient concerning the coordinate is given as
\begin{align*}
    &\frac{1}{2} \mathbb{E}_{\vx \sim \{-1, +1\}^{\dims}} \text{  } \sin(2\langle \vw_{\ell}, \vx \rangle + 2 b_{\ell}) x_j \\&
    = \frac{1}{2} \mathbb{E}_{\vx \sim \{-1, +1\}^{\dims}} \text{  } \sin \left(2\sum_{i=1 \to \dims, i \ne j} w_{\ell, i} x_i +  2w_{\ell, j} x_j + 2b_{\ell}\right) x_j \\&
    = \frac{1}{2} \cos (2b_{\ell})  \left( \prod_{i=1 \to \dims, i \ne j} \cos(2 w_{\ell, i}) \right) \sin (2 w_{\ell, j}),
\end{align*}
using \cref{Simplify_sine}.
\end{proof}

\paragraph{Contribution of Term 2:}
Term 2 in the gradients is given as 
\begin{align*}
    &\mathbb{E}_{\vx \sim \{-1, +1\}^{\dims}} \text{  } \left( \frac{\sqrt{3}}{2} x_1 - x_1 x_2 \right) \cdot \cos(\langle \vw_{\ell}, \vx \rangle + b_{\ell}) \cdot \vx.
\end{align*}
Consider the coordinate $i=1$. The gradient concerning the coordinate is given as
\begin{align*}
    &\mathbb{E}_{\vx \sim \{-1, +1\}^{\dims}} \text{  } \left( \frac{\sqrt{3}}{2} x_1 - x_1 x_2 \right) \cdot \cos(\langle \vw_{\ell}, \vx \rangle + b_{\ell}) x_1 \\&
    = \frac{\sqrt{3}}{2}  \mathbb{E}_{\vx \sim \{-1, +1\}^{\dims}} \cos(\langle \vw_{\ell}, \vx \rangle + b_{\ell}) - \mathbb{E}_{\vx \sim \{-1, +1\}^{\dims}} x_2 \cos(\langle \vw_{\ell}, \vx \rangle + b_{\ell}) \tag*{Replace $x_1^2$ by $1$} \\&
    = \frac{\sqrt{3}}{2}  \mathbb{E}_{\vx \sim \{-1, +1\}^{\dims}} \cos(\langle \vw_{\ell}, \vx \rangle + b_{\ell}) + \sin (b_{\ell}) \left(\prod_{i=1 \to \dims, i \ne 2} \cos(w_{\ell, i}) \right) \sin(w_{\ell, 2}) \tag*{Simplified second term with \cref{Simplicy_cosine}} \\&
    = \frac{\sqrt{3}}{2}  \cos(b_{\ell}) \left(\prod_{i=1 \to \dims} \cos(w_{\ell, i}) \right) + \sin (b_{\ell}) \left(\prod_{i=1 \to \dims, i \ne 2} \cos(w_{\ell, i}) \right) \sin(w_{\ell, 2}). \tag*{Simplified first term with \cref{Simplicy_cosine_cor}}
\end{align*}

Consider the coordinate $i=2$. The gradient concerning the coordinate is given as
\begin{align*}
    &\mathbb{E}_{\vx \sim \{-1, +1\}^{\dims}} \text{  } \left( \frac{\sqrt{3}}{2} x_1 - x_1 x_2 \right) \cdot \cos(\langle \vw_{\ell}, \vx \rangle + b_{\ell}) x_2 \\&
    = \frac{\sqrt{3}}{2}  \mathbb{E}_{\vx \sim \{-1, +1\}^{\dims}} \cos(\langle \vw_{\ell}, \vx \rangle + b_{\ell}) x_1 x_2 - \mathbb{E}_{\vx \sim \{-1, +1\}^{\dims}} x_1 \cos(\langle \vw_{\ell}, \vx \rangle + b_{\ell}) \tag*{Replace $x_2^2$ by $1$}  \\&
    = -\frac{\sqrt{3}}{2} \cos(b_{\ell}) \left( \prod_{i=1 \to \dims, i \notin \{1, 2\}} \cos(w_{\ell, i}) \right) \left( \prod_{j \in \{1, 2\}} \sin(w_{\ell, j}) \right) + \sin(b_{\ell}) \left( \prod_{i=1 \to \dims, i \ne 1} \cos(w_{\ell, i}) \right)  \sin(w_{\ell,1}) .
    \tag*{Using \cref{Simplicy_cosine_cor}}
\end{align*}

For any other coordinate $t \notin \{1, 2\}$, we have
\begin{align*}
    &\mathbb{E}_{\vx \sim \{-1, +1\}^{\dims}} \text{  } \left( \frac{\sqrt{3}}{2} x_1 - x_1 x_2 \right) \cdot \cos(\langle \vw_{\ell}, \vx \rangle + b_{\ell}) x_t \\&
    = \frac{\sqrt{3}}{2}  \mathbb{E}_{\vx \sim \{-1, +1\}^{\dims}} \cos(\langle \vw_{\ell}, \vx \rangle + b_{\ell}) x_1 x_t - \mathbb{E}_{\vx \sim \{-1, +1\}^{\dims}} x_1 x_2 x_t \cos(\langle \vw_{\ell}, \vx \rangle + b_{\ell})  \\&
    = -\frac{\sqrt{3}}{2} \cos(b_{\ell}) \left( \prod_{i=1 \to \dims, i \notin \{1, t\}} \cos(w_{\ell, i}) \right) \left( \prod_{j \in \{1, t\}} \sin(w_{\ell, j}) \right) \\&- \sin(b_{\ell}) \left( \prod_{i=1 \to \dims, i \notin \{1, 2, t\} } \cos(w_{\ell, i}) \right)  \left( \prod_{j \in \{1, 2, t\}} \sin(w_{\ell, j}) \right) .
    \tag*{Using \cref{Simplicy_cosine_cor}}
\end{align*}

\paragraph{Contribution of Term 3:} 
For any coordinate $j \in [\dims]$,
\begin{align*}
&\mathbb{E}_{\vx \sim \{-1, +1\}^{\dims}} \text{ } \left( p_0 - \frac{\sqrt{3}}{2} \right) \cdot \cos( \langle \vw_{\ell}, \vx \rangle + b_{\ell} ) x_j
\\&= -\left( p_0 - \frac{\sqrt{3}}{2} \right) \sin(b_{\ell}) \left( \prod_{i=1\to d, i \ne j} \cos(w_{\ell, i}) \right) \sin(w_{\ell, j}) \\&= \mathcal{O}(\eta) ,
\end{align*}
where the pre-final step follows from using \cref{Simplicy_cosine_cor} and the final step follows since  $\abs{p_0 - \frac{\sqrt{3}}{2}} = \mathcal{O}(\eta)$ from the initial bias training phase.

Thus, the combination of the 3 terms gives
\begin{align}
   \nabla w_{\ell, 1} &= 
    - \frac{\sqrt{3}}{2}  \cos(b_{\ell}) \left(\prod_{i=1 \to \dims} \cos(w_{\ell, i}) \right)   - \sin (b_{\ell}) \left(\prod_{i=1 \to \dims, i \ne 2} \cos(w_{\ell, i}) \right) \sin(w_{\ell, 2}) \nonumber\\&+ \frac{1}{2} \cos (2b_{\ell})  \left( \prod_{i=1 \to \dims, i \ne 1} \cos(2 w_{\ell, i}) \right) \sin (2 w_{\ell, 1})  + \mathcal{O}(\eta) \label{eq:gradient_w1_formulation}
\end{align}

\begin{align}
   \nabla w_{\ell, 2} &= 
   \frac{\sqrt{3}}{2} \cos(b_{\ell}) \left( \prod_{i=1 \to \dims, i \notin \{1, 2\}} \cos(w_{\ell, i}) \right) \left( \prod_{j \in \{1, 2\}} \sin(w_{\ell, j}) \right) - \sin(b_{\ell}) \left( \prod_{i=1 \to \dims, i \ne 1} \cos(w_{\ell, i}) \right)  \sin(w_{\ell, 1}) \nonumber\\&+ \frac{1}{2} \cos (2b_{\ell})  \left( \prod_{i=1 \to \dims, i \ne 2} \cos(2 w_{\ell, i}) \right) \sin (2 w_{\ell, 2})   + \mathcal{O}(\eta) \label{eq:gradient_w2_formulation}
\end{align}

For $t \notin \{1, 2\}$,
\begin{align}
   \nabla w_{\ell, t} &= 
     \frac{\sqrt{3}}{2} \cos(b_{\ell}) \left( \prod_{i=1 \to \dims, i \notin \{1, t\}} \cos(w_{\ell, i}) \right) \left( \prod_{j \in \{1, t\}} \sin(w_{\ell, j}) \right) \nonumber \\& + \sin(b_{\ell}) \left( \prod_{i=1 \to \dims, i \notin \{1, 2, t\} } \cos(w_{\ell, i}) \right)  \left( \prod_{j \in \{1, 2, t\}} \sin(w_{\ell, j}) \right) \nonumber \\&+ \frac{1}{2} \cos (2b_{\ell})  \left( \prod_{i=1 \to \dims, i \ne t} \cos(2 w_{\ell, i}) \right) \sin (2 w_{\ell, t}) + \mathcal{O}(\eta)\label{eq:gradient_wt_formulation}
\end{align}

\paragraph{General formulation for gradient w.r.t bias:}

Following a similar approach as the weights, the population gradients of $b_{\ell}$ is given by
\begin{align*}
    &\mathbb{E}_{\vx \sim \{-1, +1\}^{\dims}} \text{  } \left(\sin(\langle \vw_{\ell}, \vx \rangle + b_{\ell}) - \frac{\sqrt{3}}{2} - \frac{\sqrt{3}}{2} x_1 + x_1 x_2 \right) \cdot \cos(\langle \vw_{\ell}, \vx \rangle + b_{\ell}) \\&
    = \mathbb{E}_{\vx \sim \{-1, +1\}^{\dims}} \text{  } \sin(\langle \vw_{\ell}, \vx \rangle + b_{\ell}) \cos(\langle \vw_{\ell}, \vx \rangle + b_{\ell}) \\&- \mathbb{E}_{\vx \sim \{-1, +1\}^{\dims}} \left(\frac{\sqrt{3}}{2} x_1 - x_1 x_2 \right)  \cos(\langle \vw_{\ell}, \vx \rangle + b_{\ell}) \\
    &+ \mathbb{E}_{\vx \sim \{-1, +1\}^{\dims}}  \left(p_0 - \frac{\sqrt{3}}{2} \right) \cos(\langle \vw_{\ell}, \vx \rangle + b_{\ell})
\end{align*}

Following similar approach as above, the terms can be simplified as 
\begin{align}
    &\frac{1}{2} \sin (2b_{\ell})  \left( \prod_{i=1 \to \dims} \cos(2 w_{\ell, i}) \right) \\&+ \frac{\sqrt{3}}{2} \sin(b_{\ell}) \left( \prod_{i=1 \to \dims, i \ne 1 } \cos(w_{\ell, i}) \right)  \sin(w_{\ell, 1}) \nonumber \\&- \cos(b_{\ell}) \left( \prod_{i=1 \to \dims, i \notin \{1, 2\} } \cos(w_{\ell, i}) \right)  \left( \prod_{j \in \{1, 2\}} \sin(w_{\ell, j}) \right) + \mathcal{O}(\eta). \label{eq:gradient_bias_formulation}
\end{align}

\paragraph{Behavior of gradients at initialization:}
Since the coordinates of $\vw_{\ell}$ and biases $\vb_{\ell}$  are initialized from $0$s, we have for all $j$,
\begin{align}
    &\sin (b_{\ell}) = 0, \quad \sin (w_{\ell, j}) = 0, \quad \cos(w_{\ell, j}) = 1, \quad \cos(b_{\ell}) = 1.  \label{eq:sin_cosine_smallval}
\end{align}
Furthermore, recall that $p_0 = \sqrt{3}/2 + \mathcal{O}(\eta)$ after the initial bias training phase.

Using the above, we can simplify the gradients from \cref{eq:gradient_w1_formulation,eq:gradient_w2_formulation,eq:gradient_wt_formulation,eq:gradient_bias_formulation}
\begin{align}
   \nabla w_{\ell, 1} &= 
    - \frac{\sqrt{3}}{2}  + \mathcal{O}(\eta), \label{gradient_w1_init} \\
     \nabla w_{\ell, t} &= \mathcal{O}(\eta),  \quad t \ne 1 \label{gradient_w2_init} \\
     \nabla b_{\ell} &= \mathcal{O}(\eta),  \label{gradient_bias_init}
\end{align}

Hence, we observe three kinds of behavior at initialization.
\begin{enumerate}
    \item First coordinate of the weight grows by atleast
    \begin{align*}
        w_{\ell, 1} \gets w_{\ell, 1} + \frac{\eta \sqrt{3}}{2} + \mathcal{O}(\eta^2).
    \end{align*}
    \item Other coordinates of the weight get only $\mathcal{O}(\eta^2)$ updates.
    \begin{align*}
        w_{\ell, j} \gets w_{\ell, j}  + \mathcal{O}(\eta^2).
    \end{align*}
    \item The magnitude of the bias drops as well.
    \begin{align*}
        b_{\ell} \gets b_{\ell} + \mathcal{O}(\eta^2).
    \end{align*}
\end{enumerate}

\paragraph{Beyond Initialization:} 
Due to increasing magnitude of $w_{\ell, 1}$, the behavior of the gradients changes slightly from \Cref{gradient_bias_init,gradient_w1_init,gradient_w2_init,gradient_bias_init}. However, assuming that the weight coordinates $[2, \dims]$ and the biases are smaller than $\mathcal{O}(\eta)$, we can  still give similar formulations as before using \cref{eq:sin_cosine_smallval} using the following inequalities for all $j \ne 1$.
\begin{align*}
    &\sin (b_{\ell}) = \mathcal{O}(\eta), \quad \sin (w_{\ell, j}) = 1-\mathcal{O}(\eta).
\end{align*}

We state them directly.

\begin{align}
   \nabla w_{\ell, 1} &= 
    - \frac{\sqrt{3}}{2}  \cos(b_{\ell}) \left(\prod_{i=1 \to \dims} \cos(w_{\ell, i}) \right)   - \sin (b_{\ell}) \left(\prod_{i=1 \to \dims, i \ne 2} \cos(w_{\ell, i}) \right) \sin(w_{\ell, 2}) \nonumber\\&+ \frac{1}{2} \cos (2b_{\ell})  \left( \prod_{i=1 \to \dims, i \ne 1} \cos(2 w_{\ell, i}) \right) \sin (2 w_{\ell, 1}) + \mathcal{O}(\eta) \nonumber
   \\&= \left( 1 - \mathcal{O}(\eta)  \right)^{d} \left( \sin(w_{\ell, 1}) - \frac{\sqrt{3}}{2} \right) + \mathcal{O} (\eta) \le \frac{1}{2} \left( \sin(w_{\ell, 1}) - \frac{\sqrt{3}}{2} \right) + \mathcal{O}(\eta),
   \label{gradient_w1_beyondinit}
\end{align}
for $\eta \le 1/poly(d).$

\begin{align}
   \nabla w_{\ell, 2} &= 
   \frac{\sqrt{3}}{2} \cos(b_{\ell}) \left( \prod_{i=1 \to \dims, i \notin \{1, 2\}} \cos(w_{\ell, i}) \right) \left( \prod_{j \in \{1, 2\}} \sin(w_{\ell, j}) \right) - \sin(b_{\ell}) \left( \prod_{i=1 \to \dims, i \ne 1} \cos(w_{\ell, i}) \right)  \sin(w_{\ell, 1}) \nonumber\\&+ \frac{1}{2} \cos (2b_{\ell})  \left( \prod_{i=1 \to \dims, i \ne 2} \cos(2 w_{\ell, i}) \right) \sin (2 w_{\ell, 2}) + \mathcal{O}(\eta) 
   \nonumber\\&= \mathcal{O}(\eta). \label{gradient_w2_beyondinit}
\end{align}

For any $t \notin \{1, 2\}$,
\begin{align}
   \nabla w_{\ell, j} &=  \frac{\sqrt{3}}{2} \cos(b_{\ell}) \left( \prod_{i=1 \to \dims, i \notin \{1, t\}} \cos(w_{\ell, i}) \right) \left( \prod_{j \in \{1, t\}} \sin(w_{\ell, j}) \right) \nonumber \\&+ \sin(b_{\ell}) \left( \prod_{i=1 \to \dims, i \notin \{1, 2, t\} } \cos(w_{\ell, i}) \right)  \left( \prod_{j \in \{1, 2, t\}} \sin(w_{\ell, j}) \right) \nonumber \\&+ \frac{1}{2} \cos (2b_{\ell})  \left( \prod_{i=1 \to \dims, i \ne t} \cos(2 w_{\ell, i}) \right) \sin (2 w_{\ell, t}) + \mathcal{O}(\eta),
   \nonumber\\&= 
    \mathcal{O}(\eta).  \label{gradient_wj_beyondinit}
\end{align}

And finally for the bias,
\begin{align}
   \nabla b_{\ell} &= \frac{1}{2} \sin (2b_{\ell})  \left( \prod_{i=1 \to \dims} \cos(2 w_{\ell, i}) \right) + \frac{\sqrt{3}}{2} \sin(b_{\ell}) \left( \prod_{i=1 \to \dims, i \ne 1 } \cos(w_{\ell, i}) \right)  \sin(w_{\ell, 1}) \nonumber \\&- \cos(b_{\ell}) \left( \prod_{i=1 \to \dims, i \notin \{1, 2\} } \cos(w_{\ell, i}) \right)  \left( \prod_{j \in \{1, 2\}} \sin(w_{\ell, j}) \right) + \mathcal{O}(\eta) \\&= \mathcal{O}(\eta). \label{gradient_bias_beyondinit}
\end{align}

Thus, we observe three properties.
\begin{itemize}
    \item First coordinate of the weight grows until $\sin(w_{\ell, 1}) = \frac{\sqrt{3}}{2}$ or $w_{\ell, 1}$ reaches $\pi/3$,
    \begin{align*}
        w_{\ell, 1} \gets w_{\ell, 1} + \frac{\eta}{2} \left( \frac{\sqrt{3}}{2} - \sin(w_{\ell, 1}) \right) + \mathcal{O}(\eta^2).
    \end{align*}
    Thus, in $\Theta(1/\eta)$ steps, $w_{\ell, 1}$ can reach arbitrarily close to $\pi/3$.

    \item Any other coordinate $t \ne 1$ still receives $\mathcal{O}(\eta^2)$ updates as 
    \begin{align*}
        w_{\ell, t} \gets w_{\ell, t} + \mathcal{O}(\eta^2).
    \end{align*}
	Hence, in  $\Theta(1/\eta)$ steps, $w_{\ell, t}$ can only reach $\mathcal{O}(\eta)$ magnitude.

\item $b_{\ell}$ also receives $\mathcal{O}(\eta^2)$ updates as 
    \begin{align*}
        b_{\ell} \gets b_{\ell} + \mathcal{O}(\eta^2).
    \end{align*}
	Hence, in  $\Theta(1/\eta)$ steps, $b_{\ell}$ can only reach $\mathcal{O}(\eta)$ magnitude.

\end{itemize}

\end{proof}

\subsubsection{Proof of \cref{lem:phase2}}

\raptrphasetwo*

\begin{proof}
    The proof follows along similar lines as \cref{lem:phase1}. 

    \paragraph{First layer output:}
    At the end of the first phase training, the weights $\vw_1, \vw_2$ and the biases look as follows. For $\ell \in \{1, 2\}$,
    \begin{align*}
        \abs{w_{\ell, 1} - \frac{\pi}{3}} \le \mathcal{O}(\eta),
       \abs{ w_{\ell, j} } \le \mathcal{O}(\eta), \text{ for all } j \ge 2,
       \abs{ b_{\ell} } \le \mathcal{O}(\eta),
    \end{align*}
    and $\abs{p_0 - \frac{\sqrt{3}}{2}} \le \mathcal{O}(\eta).$

    Hence, using a Taylor expansion, the output of the first layer is given as (for simplicity, we denote it as $o^{(1)}$)
    \begin{align}
        o^{(1)} := \sin(\langle \vw_1, \vx \rangle + b_1) = \frac{\sqrt{3}}{2} x_1 + \mathcal{O}(\eta). \label{eq:o1_formulation}
    \end{align}

    \paragraph{Second layer output:}

    The output of the second layer is now given by (for simplicity, we denote it as $o^{(2)}$)
    \begin{align*}
        & o^{(2)} := \sin \left(\langle \vw_2, \vx \rangle + o^{(1)} + p_0 +  b_2 \right) \\&= 
        \sin \left(\langle \vw_2, \vx \rangle + \sin(\langle \vw_1, \vx \rangle + b_1) + p_0 +  b_2 \right) \\&
        = \sin \left(\langle \vw_2, \vx \rangle + \left(\frac{\sqrt{3}}{2} x_1 + \mathcal{O}(\eta)\right)  + (\sqrt{3}/2 + \mathcal{O}(\eta))  +  b_2 \right) \\&
        = \sin \left(\langle \vw_2, \vx \rangle + \frac{\sqrt{3}}{2} x_1  + \sqrt{3}/2  +  b_2 \right) + \mathcal{O}(\eta) \tag*{With Taylor expansion} \\&
        = \sin \left( ( w_{2, 1} + \sqrt{3}/2 ) x_1 + \sum_{j=2}^{d} w_{2, j} x_j + \sqrt{3}/2 + b_2 \right) + \mathcal{O}(\eta).
    \end{align*}
     
    For brevity, we will use new notations to represent the coefficients of $x_1, \cdots, x_d$ in the above formulation.

    \begin{align*}
        \Tilde{w}_{1} &= w_{2, 1} + \sqrt{3}/2,\\
        \Tilde{w}_{j} &= w_{2, j}, \text{ for } j \ge 2,\\
        \Tilde{b} &= \sqrt{3}/2 + b_2.
    \end{align*}
    At initialization, their corresponding values are
    \begin{align}
        \Tilde{w}_{1} &= \frac{\pi}{3} + \sqrt{3}/2 + \mathcal{O}(\eta), \nonumber \\
        \Tilde{w}_{j} &=  \mathcal{O}(\eta), \text{ for all } j \ge 2 \nonumber\\
        \Tilde{b} &= \frac{\sqrt{3}}{2} + \mathcal{O}(\eta). \label{eq:tilde_init_values}
    \end{align}

    Then, the above formulation is given as
    \begin{align}
        o^{(2)} = \sin \left( \langle \Tilde{\vw}, \vx \rangle + \Tilde{b} \right) + \mathcal{O}(\eta). \label{eq:o2_formulation}
    \end{align}

    \paragraph{General formulation for population gradients:}
    The gradients w.r.t. the weight $\vw_2$ and $b_2$ are given as

    \begin{align*}
        \nabla \vw_2 &= \mathbb{E}_{\vx \sim \{-1, +1\}^{d}} \left(o^{(1)} + o^{(2)} - \frac{\sqrt{3}}{2} x_1 + x_1 x_2\right) \cos \left( \langle \Tilde{\vw}, \vx \rangle + \Tilde{b} \right) \vx + \mathcal{O}(\eta). \\
        \nabla b_2 &= \mathbb{E}_{\vx \sim \{-1, +1\}^{d}} \left(o^{(1)} + o^{(2)} - \frac{\sqrt{3}}{2} x_1 + x_1 x_2\right) \cos \left( \langle \Tilde{\vw}, \vx \rangle + \Tilde{b} \right) + \mathcal{O}(\eta).
    \end{align*}
    
    We will consider the first two terms in the above gradient formulations and add the error term later.

    First of all, we observe that
    \begin{align*}
        &o^{(1)} + o^{(2)} - \frac{\sqrt{3}}{2} x_1 + x_1 x_2 \\& = \sin \left( \langle \Tilde{\vw}, \vx \rangle + \Tilde{b} \right) + x_1 x_2 + \mathcal{O}(\eta).
    \end{align*}

    We have the gradients w.r.t. $\vw_2$ as
    \begin{align*}
        \nabla \vw_2 &= \mathbb{E}_{\vx \sim \{-1, +1\}^{d}} \sin \left( \langle \Tilde{\vw}, \vx \rangle + \Tilde{b} \right) \cos \left( \langle \Tilde{\vw}, \vx \rangle + \Tilde{b} \right) \vx \\&
        + \mathbb{E}_{\vx \sim \{-1, +1\}^{d}}  x_1 x_2 \cos \left( \langle \Tilde{\vw}, \vx \rangle + \Tilde{b} \right) \vx \\&
        + \mathcal{O}(\eta).
    \end{align*}
    
    We treat the two terms separately.
    \subparagraph{Contribution of Term 1}: Using a similar strategy as \cref{lem:term1_contrib}, for any coordinate $t \in [d]$,
    \begin{align*}
        &\mathbb{E}_{\vx \sim \{-1, +1\}^{d}} \sin \left( \langle \Tilde{\vw}, \vx \rangle + \Tilde{b}\right) \cos \left( \langle \Tilde{\vw}, \vx \rangle + \Tilde{b} \right) x_t \\&
        = \frac{1}{2} \cos (2\Tilde{b})  \left( \prod_{i=1 \to \dims, i \ne t} \cos(2 \Tilde{w}_i) \right) \sin (2 \Tilde{w}_t).
    \end{align*}

     \subparagraph{Contribution of Term 2:}
     For coordinate $t=2$,
     \begin{align*}
         &\mathbb{E}_{\vx \sim \{-1, +1\}^{d}}  x_1 x_2 \cos \left(\langle \Tilde{\vw}, \vx \rangle + \Tilde{b}\right) x_2 \\& = \mathbb{E}_{\vx \sim \{-1, +1\}^{d}} \cos \left(\langle \Tilde{\vw}, \vx \rangle + \Tilde{b}\right) x_1 \tag*{Replace $x_2^2$ by 1} \\&
         =  - \sin (\Tilde{b}) \left(\prod_{i=1 \to \dims, i \ne 1} \cos(\Tilde{w}_{i}) \right) \sin(\Tilde{w}_1). \tag*{Using \cref{Simplicy_cosine_cor}}
     \end{align*}

     For coordinate $t=1$,
     \begin{align*}
         &\mathbb{E}_{\vx \sim \{-1, +1\}^{d}}  x_1 x_2 \cos \left(\langle \Tilde{\vw}, \vx \rangle + \Tilde{b} \right) x_1 \\& = \mathbb{E}_{\vx \sim \{-1, +1\}^{d}} \cos \left(\langle \Tilde{\vw}, \vx \rangle + \Tilde{b} \right) x_2 \tag*{Replace $x_1^2$ by 1} \\&
         =  - \sin (\Tilde{b}) \left(\prod_{i=1 \to \dims, i \ne 2} \cos(\Tilde{w}_{i}) \right) \sin(\Tilde{w}_2). \tag*{Using \cref{Simplicy_cosine_cor}}
     \end{align*}

    For any other coordinate $t \notin \{1, 2\}$,
    \begin{align*}
         &\mathbb{E}_{\vx \sim \{-1, +1\}^{d}}   \cos \left( \langle \Tilde{\vw}, \vx \rangle + \Tilde{b} \right) x_1 x_2 x_t  \\&
         =  \sin (\Tilde{b}) \left(\prod_{i=1 \to \dims, i \notin \{1,2,t\}} \cos(\Tilde{w}_{i}) \right) \left( \prod_{i \notin \{1,2,t\}} \sin(\Tilde{w}_t) \right). \tag*{Using \cref{Simplicy_cosine_cor}}
     \end{align*}

     \paragraph{Combination of all terms:}
     For coordinate $t=1$,
     \begin{align}
         \nabla w_{2, 1} = &\frac{1}{2} \cos (2\Tilde{b})  \left( \prod_{i=1 \to \dims, i \ne 1} \cos(2 \Tilde{w}_i) \right) \sin (2 \Tilde{w}_1) - \sin (\Tilde{b}) \left(\prod_{i=1 \to \dims, i \ne 2} \cos(\Tilde{w}_{i}) \right) \sin(\Tilde{w}_2) + \mathcal{O}(\eta). \label{eq:update_w21}
     \end{align}

    For coordinate $t=2$,
    \begin{align}
         \nabla w_{2, 2} & = \frac{1}{2} \cos (2\Tilde{b})  \left( \prod_{i=1 \to \dims, i \ne 2} \cos(2 \Tilde{w}_i) \right) \sin(2\Tilde{w}_2) - \sin (\Tilde{b}) \left(\prod_{i=1 \to \dims, i \ne 1} \cos(\Tilde{w}_{i}) \right) \sin(\Tilde{w}_1) + \mathcal{O}(\eta).\label{eq:update_w22}
     \end{align}

    For any other coordinate $t \notin \{1, 2\}$,
    \begin{align}
         \nabla w_{2, t} = \frac{1}{2} \cos (2\Tilde{b})  \left( \prod_{i=1 \to \dims, i \ne t} \cos(2 \Tilde{w}_i) \right) \sin (2 \Tilde{w}_t) - \sin (\Tilde{b}) \left(\prod_{i=1 \to \dims, i \notin \{1,2,t\}} \cos(\Tilde{w}_{i}) \right) \left( \prod_{i \in \{1,2,t\}} \sin(\Tilde{w}_i) \right) + \mathcal{O}(\eta).\label{eq:update_w2t}
     \end{align}

    We have similar computation for the gradient of the bias $b_2$, which we report directly.
    \begin{align}
    \nabla b_2 &= \mathbb{E}_{\vx \sim \{-1, +1\}^{d}} \left(o^{(1)} + o^{(2)} - \frac{\sqrt{3}}{2} x_1 + x_1 x_2\right) \cos \left( \langle \Tilde{\vw}, \vx \rangle + \Tilde{b} \right) + \mathcal{O}(\eta) \nonumber\\&
    = \frac{1}{2} \sin (2\Tilde{b})  \left( \prod_{i=1 \to \dims} \cos(2 \Tilde{w}_i ) \right)  - \cos(\Tilde{b}) \left(\prod_{i=1 \to \dims, i \notin\{1, 2\}} \cos(\Tilde{w}_{i}) \right) \sin(\Tilde{w}_1) \sin(\Tilde{w}_2) + \mathcal{O}(\eta).\label{eq:update_bias}
    \end{align}

    \subparagraph{At initialization:}
    With $\Tilde{b}$ being initialzied at $\frac{\sqrt{3}}{2} + \mathcal{O}(\eta)$ (from \cref{eq:tilde_init_values}),
    \begin{align*}
        \cos( \Tilde{b}) &= \cos(\sqrt{3}/2) + \mathcal{O}(\eta) > 0 \\
        \cos(2 \Tilde{b}) &= \cos(\sqrt{3}) + \mathcal{O}(\eta) < 0 \\
        \sin( \Tilde{b}) &= \sin(\sqrt{3}/2) + \mathcal{O}(\eta) > 0. \\
        \sin(2\Tilde{b}) &= \sin(\sqrt{3}) + \mathcal{O}(\eta) > 0.
    \end{align*}

    The initial value of $\Tilde{w}_1$ is $\frac{\sqrt{3}}{2} + \frac{\pi}{3} + \mathcal{O}(\eta)$ (from \cref{eq:tilde_init_values}) and hence the values of
    \begin{align*}
        \cos(2\Tilde{w}_1) &= \cos(\sqrt{3} + 2\pi/3) + \mathcal{O}(\eta) < 0 \\
        \sin(2\Tilde{w}_1) &= \sin(\sqrt{3} + 2\pi/3) + \mathcal{O}(\eta) < 0 \\
        \cos(\Tilde{w}_1) &= \cos(\sqrt{3}/2 + \pi/3) + \mathcal{O}(\eta)  < 0 \\
        \sin(\Tilde{w}_1) &= \sin(\sqrt{3}/2 + \pi/3) + \mathcal{O}(\eta) > 0.
    \end{align*}

    The other coordinates $\Tilde{w}_{j}$ are of order $\mathcal{O}(\eta)$. Thus, the behavior of the weights and biases at initialization can be summarized as follows.
    \begin{itemize}
        \item $w_{2, 1}$ gets a decreasing update as both the terms involved in \cref{eq:update_w21} are positive.
        \begin{align*}
           -\eta\nabla w_{2, 1} &=  -\eta\abs{\frac{1}{2} \cos (2\Tilde{b})  \left( \prod_{i=1 \to \dims, i \ne 1} \cos(2 \Tilde{w}_i) \right) \sin (2 \Tilde{w}_1)} - \eta\abs{ \sin (\Tilde{b}) \left(\prod_{i=1 \to \dims, i \ne 2} \cos(\Tilde{w}_{i}) \right) \sin(\Tilde{w}_2) } + \mathcal{O}(\eta^2) \\&
           = -\frac{\eta}{2} \abs{\cos(\sqrt{3})} (1 - \mathcal{O}(2\eta))^{d-1} \abs{\sin(\sqrt{3} + 2\pi/3)} + \mathcal{O}(\eta^2) \le -\frac{\eta}{4} \cos (\sqrt{3}) \sin(\sqrt{3} + 2\pi/3) +   \mathcal{O}(\eta^2),
        \end{align*}
        as $\eta = \mathcal{O}(1/poly(d))$ small.
        
        \item $w_{2, 2}$ gets an increasing update as both the terms involved in \cref{eq:update_w22} are negative. 
        \begin{align*}
            -\eta\nabla w_{2, 2} &= \frac{\eta}{2} \abs{\cos (2\Tilde{b})}  \abs{\left( \prod_{i=1 \to \dims, i \ne 2} \cos(2 \Tilde{w}_i) \right)} \abs{\sin(2\Tilde{w}_2)} + \abs{\sin (\Tilde{b})} \abs{ \left(\prod_{i=1 \to \dims, i \ne 1} \cos(\Tilde{w}_{i}) \right) } \abs{ \sin(\Tilde{w}_1) } + \mathcal{O}(\eta^2) \\
            &= \eta \sin(\sqrt{3})\sin(\sqrt{3}/2 + \pi/3) (1-\mathcal{O}(\eta))^{d-1} + \mathcal{O}(\eta^2) \ge \frac{\eta}{2}\sin(\sqrt{3})\sin(\sqrt{3}/2 + \pi/3) + \mathcal{O}(\eta^2),
        \end{align*}
        as $\eta = \mathcal{O}(1/poly(d))$ small.
        
        \item Other coordinates $w_{2, t}$ for $t \notin \{1, 2\}$ get small gradients as both the involved terms in \cref{eq:update_w2t} depend on how large $\Tilde{w}_t := w_{2, t} = \mathcal{O}(\eta)$ is. 
        \begin{align*}
            -\eta \nabla w_{2, t} &= -\frac{\eta}{2} \cos (2\Tilde{b})  \left( \prod_{i=1 \to \dims, i \ne t} \cos(2 \Tilde{w}_i) \right) \sin (2 \Tilde{w}_t) \\&+ \eta \sin (\Tilde{b}) \left(\prod_{i=1 \to \dims, i \notin \{1,2,t\}} \cos(\Tilde{w}_{i}) \right) \left( \prod_{i \in \{1,2,t\}} \sin(\Tilde{w}_i) \right) + \mathcal{O}(\eta^2) = \mathcal{O}(\eta^2).
        \end{align*}

        \item Bias $b_2$ get an increasing update as both the involved terms in \cref{eq:update_bias} are negative.
        \begin{align*}
            -\eta \nabla b_2 &= \frac{\eta}{2} \abs{ \sin (2\Tilde{b}) }  \abs{\left( \prod_{i=1 \to \dims} \cos(2 \Tilde{w}_i ) \right)}  + \eta \abs{\cos(\Tilde{b})} \abs{\left(\prod_{i=1 \to \dims, i \notin\{1, 2\}} \cos(\Tilde{w}_{i}) \right) \sin(\Tilde{w}_1)} \abs{\sin(\Tilde{w}_2) } + \mathcal{O}(\eta) \\&
            = \frac{\eta}{2} \sin(\sqrt{3}) \abs{\cos(\sqrt{3}+2\pi/3)} (1-\mathcal{O}(\eta))^{d-1}   + \mathcal{O}(\eta) \ge \frac{\eta}{4} \sin(\sqrt{3}) \abs{\cos(\sqrt{3}+2\pi/3)}.
        \end{align*}

    \end{itemize}

    \subparagraph{Beyond initialization:}   
    In fact, we can extend the above argument beyond initialization, where $w_{2, 1}$ decreases and $w_{2, 2}$, $b_2$ increase by $\Theta(\eta)$ updates, up until we reach the conditions
    \begin{align*}
        \abs{\Tilde{w}_1 - \frac{\pi}{2}} &\le \mathcal{O}(\eta), \quad \text{(or)} \quad \abs{w_{2, 1} - \frac{\pi}{2} + \sqrt{3}/2} \le \mathcal{O}(\eta), \\
        \abs{\Tilde{w}_2 - \frac{\pi}{2}} &\le \mathcal{O}(\eta), \quad \text{(or)} \quad \abs{w_{2, 2} - \frac{\pi}{2}} \le \mathcal{O}(\eta) \\
        \abs{\Tilde{b} - \frac{\pi}{2}} &\le  \mathcal{O}(\eta), \quad \text{(or)} \quad \abs{b_2 - \frac{\pi}{2} + \sqrt{3}/2} \le \mathcal{O}(\eta).
    \end{align*}
    Since, we get $\Theta(\eta)$ updates in each step, this condition can be reached in $\Theta(1/\eta)$ steps. 

    The argument is as follows.
    For any $\Tilde{w}_1, \Tilde{w}_2, \Tilde{b}$ satisfying the following ranges,
    \begin{align*}
        \Tilde{w}_{1} &\in (\frac{\pi}{2}, \frac{\sqrt{3}}{2} + \frac{\pi}{3}), \\
        \Tilde{w}_{2} &\in (0, \frac{\pi}{2}), \\
        \Tilde{b} &\in (\sqrt{3}/2, \frac{\pi}{2}),
    \end{align*}
    the following conditions hold true.
    \begin{align*}
        \cos( \Tilde{b}), \text{ } \sin( \Tilde{b}), &\text{ } \sin(2\Tilde{b}) > 0, \quad \cos(2 \Tilde{b}) < 0\\
        \cos(2\Tilde{w}_1), \text{ } \sin(2\Tilde{w}_1), &\text{ } \cos(\Tilde{w}_1) < 0, \quad \sin(\Tilde{w}_1) > 0. \\
         \sin(\Tilde{w}_2), \text{ } \cos(\Tilde{w}_2), &\text{ }  \sin(2\Tilde{w}_2) > 0, \quad \cos(2\Tilde{w}_2) < 0.
    \end{align*}

    We can then show that the terms involved in the update rule of $w_{2, 1}$ (\cref{eq:update_w21}) are positive, implying $-\eta \nabla w_{2, 1} < -\Omega(\eta)$ up until $\Tilde{w}_{1}$ reaches $\mathcal{O}(\eta)$ close to $\frac{\pi}{2}$. Similar argument can be given for the updates of $b_2$ and $w_{2, 2}$ respectively.

    Furthermore, the coordinates $w_{2, t}$ for $t \notin \{1, 2\}$ get $\mathcal{O}(\eta^2)$ updates per step and stay $\mathcal{O}(\eta)$ small for $\Theta(1/\eta)$ steps of training. 
    
\end{proof}

\subsection{Useful Lemmas}

\begin{lemma}\label{Simplify_sine}
    \begin{align*}
     \mathbb{E}_{\vx \sim \{-1, +1\}^{\dims}} \text{  } \sin \left( \langle \vw, \vx \rangle + b\right) x_j = \cos (b)  \left( \prod_{i=1 \to \dims, i \ne j} \cos(w_i) \right) \sin ( w_j).
    \end{align*}
\end{lemma}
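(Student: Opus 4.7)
The plan is to exploit the product/independence structure of the uniform measure on $\{\pm 1\}^{\dims}$ together with the parity of sine and cosine. Concretely, I will separate the $j$-th coordinate from the rest by writing
\[
\sin(\langle \vw, \vx\rangle + b) = \sin\bigl(w_j x_j\bigr)\cos\!\Bigl(b + \sum_{i\ne j} w_i x_i\Bigr) + \cos\bigl(w_j x_j\bigr)\sin\!\Bigl(b + \sum_{i\ne j} w_i x_i\Bigr),
\]
using the angle-addition identity. Since $x_j \in \{\pm 1\}$, parity gives $\sin(w_j x_j) = x_j \sin(w_j)$ and $\cos(w_j x_j) = \cos(w_j)$. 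Multiplying by $x_j$ and using $x_j^2 = 1$ then converts the whole expression into
\[
x_j\sin(\langle \vw,\vx\rangle + b) = \sin(w_j)\cos\!\Bigl(b + \sum_{i\ne j} w_i x_i\Bigr) + x_j\cos(w_j)\sin\!\Bigl(b + \sum_{i\ne j} w_i x_i\Bigr).
\]

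Next I would take the expectation under $\vx \sim \mathcal{U}(\{\pm 1\}^\dims)$ and use independence of the coordinates. The second term involves $x_j$ multiplied by a quantity that depends only on $\{x_i\}_{i \ne j}$, so it vanishes because $\mathbb{E}[x_j] = 0$. The first term reduces the problem to computing
\[
\mathbb{E}_{\vx}\cos\!\Bigl(b + \sum_{i\ne j} w_i x_i\Bigr) = \cos(b)\prod_{i\ne j}\cos(w_i),
\]
which I would establish by a short induction on the number of coordinates, again using the angle-addition formula and the parity identities $\mathbb{E}_{x_i}[\cos(w_i x_i)] = \cos(w_i)$ and $\mathbb{E}_{x_i}[\sin(w_i x_i)] = 0$ at each step. (An equivalent one-line derivation is to note $\mathbb{E}[e^{\mathrm{i}(b+\sum w_i x_i)}] = e^{\mathrm{i} b}\prod_i \cos(w_i)$ and take real parts, but the inductive version is self-contained.) Combining both pieces gives exactly the claimed identity.

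There is no real obstacle here: the proof is essentially a symmetry/parity computation, and the only thing to be careful about is correctly tracking which sums include the index $j$ and which do not. The same mechanism (separate one coordinate, apply parity, use independence) is what presumably underlies Lemma~\ref{Simplicy_cosine} and Corollary~\ref{Simplicy_cosine_cor} referenced elsewhere in the appendix, so this proof can be presented uniformly with those.
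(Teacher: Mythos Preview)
Your proposal is correct and follows essentially the same approach as the paper: angle-addition plus parity of $\sin/\cos$ on $\{\pm 1\}$ plus coordinate independence, ending with the product formula $\mathbb{E}[\cos(b+\sum_{i\ne j} w_i x_i)] = \cos(b)\prod_{i\ne j}\cos(w_i)$. The only cosmetic difference is that the paper separates both $b$ and $w_j x_j$ simultaneously (producing four terms, three of which vanish by odd symmetry), whereas you separate only $w_j x_j$ and keep $b$ grouped with the remaining sum, which yields two terms and is slightly cleaner.
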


\begin{proof}
    \begingroup
\allowdisplaybreaks
    \begin{align*}
     &\mathbb{E}_{\vx \sim \{-1, +1\}^{\dims}} \text{  } \sin \left( \langle \vw, \vx \rangle + b\right) x_j \\&
     = \mathbb{E}_{\vx \sim \{-1, +1\}^{\dims}} \text{  } \sin \left(\sum_{i=1 \to \dims, i \ne j}w_i x_i +  w_j x_j + b\right) x_j \\&
    = \cos (b)  \left( \mathbb{E}_{ \vx \sim \{-1, +1\}^{\dims} } \sin  \left( \sum_{i=1 \to \dims, i \ne j}w_i x_i \right) \cdot \mathbb{E}_{x_j}\cos  \left( w_j x_j \right) x_j \right) \tag*{Equals  $0$ as an odd function} \\&+  \cos (b)  \left(\mathbb{E}_{ \vx \sim \{-1, +1\}^{\dims} } \cos  \left(\sum_{i=1 \to \dims, i \ne j}w_i x_i \right) \cdot \mathbb{E}_{x_j} \sin  \left( w_j x_j \right) x_j \right) \tag*{Equals  $0$ as an odd function} \\&
    +  \sin (b) \left( \mathbb{E}_{ \vx \sim \{-1, +1\}^{\dims} } \cos  \left( \sum_{i=1 \to \dims, i \ne j}w_i x_i \right) \cdot \mathbb{E}_{x_j}\cos  \left( w_j x_j \right) x_j \right) \\&- \sin (b)  \left(\mathbb{E}_{ \vx \sim \{-1, +1\}^{\dims} } \sin  \left(\sum_{i=1 \to \dims, i \ne j}w_i x_i \right) \cdot \mathbb{E}_{x_j} \sin  \left( w_j x_j \right) x_j \right) \tag*{Equals  $0$ as an odd function} \\&
    =  \cos (b)  \left(\mathbb{E}_{ \vx \sim \{-1, +1\}^{\dims} } \cos  \left(\sum_{i=1 \to \dims, i \ne j}w_i x_i \right) \cdot \mathbb{E}_{x_j} \sin  \left( w_j x_j \right) x_j \right) \\&
    = \cos (b)  \left( \prod_{i=1 \to \dims, i \ne j} \cos(w_i) \right) \sin ( w_j) 
    \end{align*}
    \endgroup
\end{proof}

\begin{lemma}\label{Simplicy_cosine}
    \begin{align*}
     &\mathbb{E}_{\vx \sim \{-1, +1\}^{\dims}} \text{  } \cos \left( \langle \vw, \vx \rangle + b\right) x_j
     = -\sin (b) \left(\prod_{i=1 \to \dims, i \ne j} \cos(w_i) \right) \sin(w_j).
    \end{align*}
\end{lemma}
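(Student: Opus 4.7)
} The plan is to mirror the derivation of \cref{Simplify_sine} almost verbatim, with the roles of $\sin$ and $\cos$ on the outer function swapped. First, I would use the angle-addition formula to split off the bias: $\cos(\langle\vw,\vx\rangle+b)=\cos(\langle\vw,\vx\rangle)\cos b-\sin(\langle\vw,\vx\rangle)\sin b$. Then I would isolate the $j$-th coordinate by writing $\langle\vw,\vx\rangle = S_{-j} + w_j x_j$ where $S_{-j}=\sum_{i\ne j}w_ix_i$, and apply angle-addition once more to expand $\cos(S_{-j}+w_jx_j)$ and $\sin(S_{-j}+w_jx_j)$ each into two terms. This yields four product terms, each multiplied by $x_j$.

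Next, using that the coordinates of $\vx$ are independent under $\vx\sim\{\pm1\}^\dims$, each expectation factorizes into $\mathbb{E}_{\vx_{-j}}[\cdot]$ times $\mathbb{E}_{x_j}[\cdot\,x_j]$. The elementary facts I would invoke are: $\mathbb{E}_{x_j}[\cos(w_jx_j)\,x_j]=0$ (even-in-$x_j$ times odd), $\mathbb{E}_{x_j}[\sin(w_jx_j)\,x_j]=\sin w_j$, together with $\mathbb{E}_{\vx_{-j}}[\cos(S_{-j})]=\prod_{i\ne j}\cos w_i$ and $\mathbb{E}_{\vx_{-j}}[\sin(S_{-j})]=0$ (the latter two by the same induction already implicit in the proof of \cref{Simplify_sine}). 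Three of the four terms vanish by these identities; the only surviving term is the one carrying both $\sin b$ and $\sin(w_jx_j)$, which produces $-\sin b\bigl(\prod_{i\ne j}\cos w_i\bigr)\sin w_j$, matching the claim.

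There is no real obstacle here: the argument is essentially a bookkeeping exercise in parity, identical in structure to \cref{Simplify_sine}. The only thing to be careful about is the sign coming from the $-\sin(\langle\vw,\vx\rangle)\sin b$ piece of the outer expansion, which is exactly what produces the minus sign in the stated identity and distinguishes this lemma from \cref{Simplify_sine}.
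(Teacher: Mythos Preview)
Your argument is correct: expanding by angle-addition, factorizing the expectation by independence, and killing three of the four terms by parity gives exactly the stated identity, and your sign tracking is right.

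It is worth noting, though, that the paper takes a shorter route. Rather than re-running the four-term expansion of \cref{Simplify_sine}, it simply uses the co-function identity $\cos\theta=\sin(\pi/2-\theta)$ to rewrite $\cos(\langle\vw,\vx\rangle+b)$ as $\sin\bigl(\langle -\vw,\vx\rangle+(\pi/2-b)\bigr)$ and then invokes \cref{Simplify_sine} directly with weights $-\vw$ and bias $\pi/2-b$. That yields $\cos(\pi/2-b)\bigl(\prod_{i\ne j}\cos(-w_i)\bigr)\sin(-w_j)=-\sin b\bigl(\prod_{i\ne j}\cos w_i\bigr)\sin w_j$ in one line. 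Your approach is more self-contained (it does not rely on \cref{Simplify_sine} as a black box) but duplicates its bookkeeping; the paper's approach is terser and makes the relationship between the two lemmas explicit.
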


\begin{proof}

    \begingroup
\allowdisplaybreaks
    \begin{align*}
     &\mathbb{E}_{\vx \sim \{-1, +1\}^{\dims}} \text{  } \cos \left( \langle \vw, \vx \rangle + b\right) x_j \\&
     = \mathbb{E}_{\vx \sim \{-1, +1\}^{\dims}} \text{  } \cos \left( \frac{\pi}{2} - \langle \vw, \vx \rangle - b\right) x_j \\&
     = \cos(\pi/2 - b) \left(\prod_{i=1 \to \dims, i \ne j} \cos(-w_i) \right) \sin (- w_j) \tag*{Using \cref{Simplify_sine}} \\&
     = -\sin (b) \left(\prod_{i=1 \to \dims, i \ne j} \cos(w_i) \right) \sin(w_j).
    \end{align*}
    \endgroup
\end{proof}

\begin{corollary}\label{Simplicy_cosine_cor}
    Consider a set $S \subseteq [\dims]$.
    \begin{align*}
        \mathbb{E}_{\vx \sim \{-1, +1\}^{\dims}} \text{  } \cos \left( \langle \vw, \vx \rangle + b\right) \prod_{j \in S} x_i = c_b \left( \prod_{i=1 \to \dims, i \notin S} \cos(w_i) \right) \left( \prod_{j \in S} \sin(w_j) \right),
    \end{align*}
    where 
    \[
    c_b = 
    \begin{cases}
    -\sin(b),& \text{if } |S| = 4t+1 \text{ for some } t \in \mathbb{N}\\
    -\cos(b),& \text{if } |S| = 4t+2 \text{ for some } t \in \mathbb{N}\\
    \sin(b),& \text{if } |S| = 4t+3 \text{ for some } t \in \mathbb{N}\\
    \cos(b),& \text{if } |S| = 4t \text{ for some } t \in \mathbb{N}.\\
    \end{cases}
    \]
\end{corollary}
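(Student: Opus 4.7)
The cleanest route is to lift everything to complex exponentials and exploit the independence of the coordinates of $\vx$. Write $\cos(\theta)=\operatorname{Re}(e^{i\theta})$ so that
\[
\mathbb{E}_{\vx}\Big[\cos(\langle\vw,\vx\rangle+b)\prod_{j\in S}x_j\Big] \;=\; \operatorname{Re}\Big(e^{ib}\,\mathbb{E}_{\vx}\Big[e^{i\langle\vw,\vx\rangle}\prod_{j\in S}x_j\Big]\Big).
\]
Since the coordinates $x_1,\dots,x_{\dims}$ are independent uniform $\pm 1$, the expectation factors as a product of one-dimensional expectations: one factor of $\mathbb{E}_{x_k}[e^{iw_k x_k}]$ for each $k\notin S$ and one factor of $\mathbb{E}_{x_k}[e^{iw_k x_k}\,x_k]$ for each $k\in S$.

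Next I would evaluate those two one-dimensional expectations directly. Averaging over $x_k\in\{-1,+1\}$ gives $\mathbb{E}[e^{iw_k x_k}] = \tfrac12(e^{iw_k}+e^{-iw_k}) = \cos(w_k)$ and $\mathbb{E}[e^{iw_k x_k}\,x_k] = \tfrac12(e^{iw_k}-e^{-iw_k}) = i\sin(w_k)$. Plugging these back yields
\[
\mathbb{E}_{\vx}\Big[\cos(\langle\vw,\vx\rangle+b)\prod_{j\in S}x_j\Big] \;=\; \operatorname{Re}\!\big(i^{|S|}\,e^{ib}\big)\;\Big(\prod_{i\notin S}\cos(w_i)\Big)\Big(\prod_{j\in S}\sin(w_j)\Big),
\]
which already matches the claimed structural form of the right-hand side.

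The final step is to identify $c_b=\operatorname{Re}(i^{|S|}e^{ib})$. A four-case split on $|S|\bmod 4$ gives $i^{|S|}\in\{1,i,-1,-i\}$ respectively, and combining with $e^{ib}=\cos b + i\sin b$ one reads off $c_b = \cos b,\,-\sin b,\,-\cos b,\,\sin b$ in the four residue classes, exactly the piecewise definition in the statement. I do not anticipate a real obstacle: the only thing to be a bit careful about is bookkeeping the signs in the $|S|\bmod 4$ case split and being explicit that the independence of the $x_k$'s is what lets the expectation factor across coordinates. As a sanity check, the cases $|S|=0$, $|S|=1$ (with $S=\{j\}$), and $|S|=2$ (with $S=\{1,j\}$) recover the expressions already used inside the proofs of Lemmas~\ref{Simplify_sine} and~\ref{Simplicy_cosine}, so this general formula is a strict extension. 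An alternative, essentially equivalent, route would be induction on $|S|$ using the angle-addition identity and peeling off one coordinate at a time, but the complex-exponential argument seems strictly shorter and avoids re-doing the trigonometric algebra already present in the two preceding lemmas.
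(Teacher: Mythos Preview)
Your proof is correct. The paper actually leaves this corollary without proof, stating it immediately after Lemmas~\ref{Simplify_sine} and~\ref{Simplicy_cosine}; the intended route is evidently an induction on $|S|$, peeling off one coordinate at a time with the angle-addition identity exactly as in those two lemmas. Your complex-exponential argument is a genuinely different and cleaner route: by writing $\cos\theta=\operatorname{Re}(e^{i\theta})$ and factoring over independent coordinates, you get all four residue classes at once from the single identity $c_b=\operatorname{Re}(i^{|S|}e^{ib})$, bypassing the repeated trigonometric bookkeeping that the inductive approach would require. The only thing one might add for completeness is the one-line remark that the product $\prod_{k\notin S}\cos(w_k)\prod_{k\in S}\sin(w_k)$ is real, which is what lets you pull it outside of $\operatorname{Re}(\cdot)$; this is obvious but worth making explicit since it is the hinge of the argument.
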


\section{Scaling}\label{sec:scaling}

\begin{figure}[!tbp]%
    \centering
    \begin{subfigure}{0.45\textwidth}
    \centering    \includegraphics[width=\textwidth]{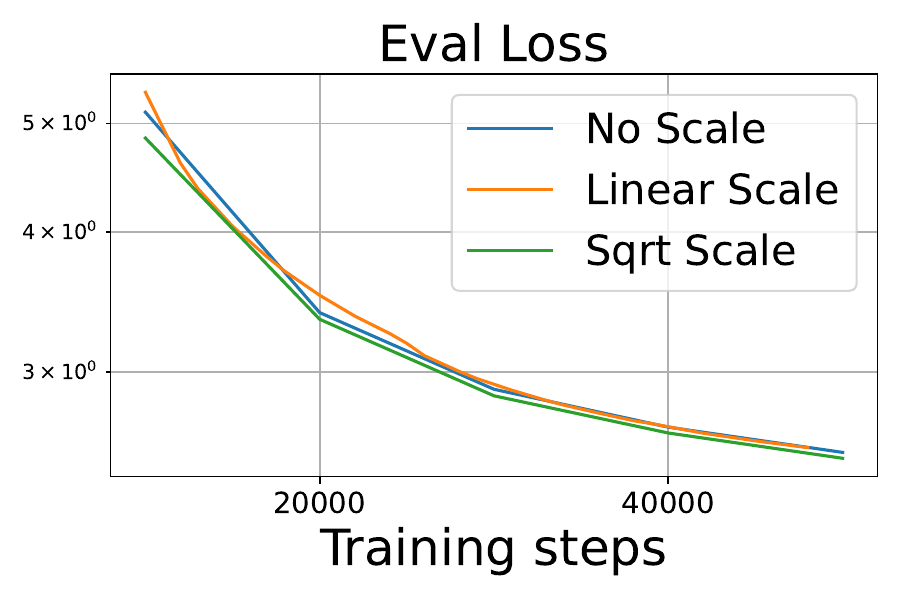}
    \label{fig:train_ablate}
    \end{subfigure}\hfill
    \begin{subfigure}{0.45\textwidth}
    \centering
    \includegraphics[width=\textwidth]{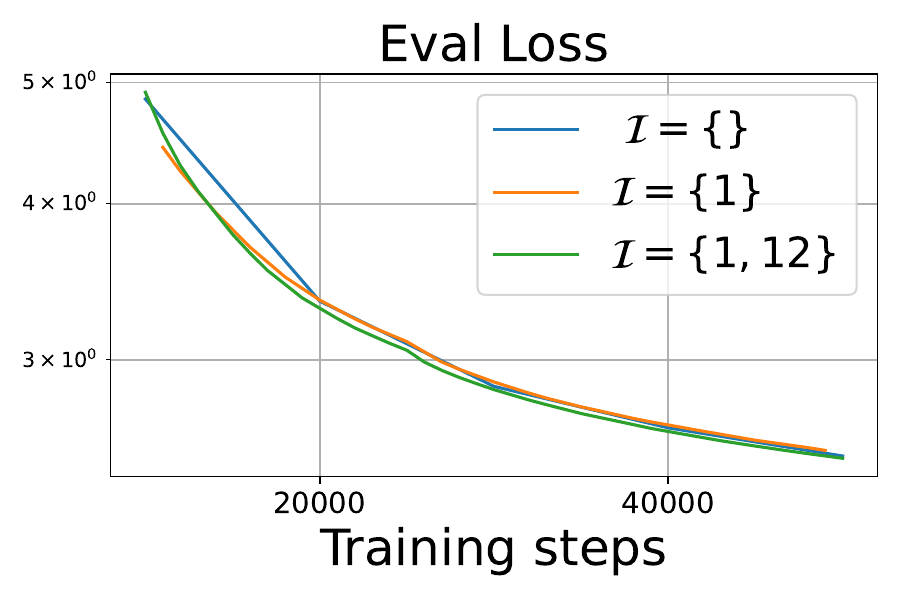}
    \label{fig:eval_ablate}
    \end{subfigure}\hfill

    \caption{\looseness-1Ablation study on choices for {\method} algorithm. We train BERT-base with {\method} for 100k steps with a 6-8-10-12 schedule (see \Cref{sec:experiments}). Left to right: (a) Square-root scaling ($\funcsqrt$) has better validation loss compared with linear scaling and no scaling at training, especially in the earlier stages. With \emph{no scale}, we scale layer's output by $1/p_i$ during inference  (following \cite{huang2016deep}). (b) Different candidates for \rebuttal{the} fixed set $\mathcal{I}$ are compared for {\method}. We find that training with first and last layers fixed  helps faster training. }
    \label{fig:bert100kruns}
\end{figure}

While working on subnetworks, it is important to appropriately rescale the network. In particular, bypassing layer(s) in {\method} can shift the input distribution for layers compared to the full model. To mitigate this shift, we scale the output of the layers to maintain the norms of the input to each layer throughout training. The idea of scaling layer outputs has also been explored in prior work \cite{huang2016deep,fan2019reducing}.%
We use a different square-root scaling mechanism that is motivated by analysis on a randomly initialized Transformer architecture. We show that at initialization, the norm of the intermediate outputs $\vy^{(\ell)}$ scale as $\sqrt{\ell}$.

\begin{restatable}[Square-root scaling of input norms]{theorem}{sqrtscale}\label{thm:sqrt_scale}    At initialization, given any input sequence $\vx_{1:\seqlen}$ and scalars $\alpha_{1:\layer}$, w.h.p. the intermediate sequences of $F$ (\cref{def:functiondef}) satisfy
\begin{align*}
    \norm{\vy^{(\ell)}_i}^2_2 =  \norm{\vx_i}^2_2 + \Theta ( \sum_{j=1}^{\ell} \alpha^2_{j} d), \quad \text{for all } 1 \le i \le \seqlen, 1 \le \ell \le \layer.
\end{align*}

\end{restatable}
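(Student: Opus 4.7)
The plan is induction on $\ell$. For the base case $\ell=0$, we have $\vy^{(0)}_i=\vx_i$, so the claim is vacuous. For the inductive step, expand
\[
\norm{\vy^{(\ell)}_i}_2^2 \;=\; \norm{\vy^{(\ell-1)}_i}_2^2 \;+\; 2\alpha_\ell\,\langle \vy^{(\ell-1)}_i,\, f_\ell(\mY^{(\ell-1)})_i\rangle \;+\; \alpha_\ell^2\,\norm{f_\ell(\mY^{(\ell-1)})_i}_2^2.
\]
By the inductive hypothesis, the first term contributes $\norm{\vx_i}_2^2+\Theta\bigl(\sum_{j<\ell}\alpha_j^2 d\bigr)$. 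The key lemmas to establish are: (i) the per-layer output satisfies $\norm{f_\ell(\mY^{(\ell-1)})_i}_2^2 = \Theta(d)$ with high probability, and (ii) the cross term $\langle \vy^{(\ell-1)}_i, f_\ell(\mY^{(\ell-1)})_i\rangle$ is of smaller order (say $o(d)$), so that the $\Theta(\alpha_\ell^2 d)$ contribution dominates it.

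For (i), I would use the standard Transformer sub-block structure: each $f_\ell$ is an attention or MLP block that starts with a LayerNorm applied to the incoming stream. LayerNorm rescales its input to have norm exactly $\sqrt{d}$ regardless of the growing norm of $\vy^{(\ell-1)}$, which makes the subsequent computation dimension-independent of $\ell$. Then the output projection is a Gaussian matrix with $1/\sqrt{d}$-scaled entries applied to a post-activation vector of norm $\Theta(\sqrt{d})$, and a direct Gaussian-norm concentration argument (Formula~3.7 of \cite{vershynin2020high}, as already used in the proof of \cref{lem:linearnorm}) yields $\norm{f_\ell(\mY^{(\ell-1)})_i}_2 = \Theta(\sqrt{d})$ with failure probability $\le e^{-\Omega(d)}$.

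For (ii), the idea is that $f_\ell$'s output projection matrix $\mW_\ell$ is independent of $\vy^{(\ell-1)}_i$ and has zero mean, so conditional on the intermediate activations the inner product is a mean-zero quadratic/linear form. A Hanson--Wright or Gaussian concentration bound gives $|\langle \vy^{(\ell-1)}_i, f_\ell(\mY^{(\ell-1)})_i\rangle| = O\bigl(\norm{\vy^{(\ell-1)}_i}_2\,\sqrt{\log(1/\delta)}\bigr)$, which by the inductive hypothesis is $O\bigl(\sqrt{\ell d\log(1/\delta)}\bigr)$, and thus dwarfed by the $\Theta(d)$ contribution of the square term. Combining these two bounds closes the induction.

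\textbf{Main obstacle.} The hard part is the cross-term control in a Transformer block rather than a purely linear block: the post-LN, post-attention, post-MLP output is a non-linear function of $\vy^{(\ell-1)}_i$ via both softmax and activation, so one cannot immediately invoke Gaussianity of the output. The cleanest route is to condition on the query/key computations and on the MLP's hidden activations, and use the fact that the final output projection weights remain fresh Gaussians independent of this conditioning. A union bound over $\ell\in[\layer]$, $i\in[\seqlen]$, and the finitely many sub-projections inside each block then produces the global high-probability statement, at the cost of requiring $d\gtrsim \layer\,\seqlen\,\log(1/\delta)$ in the quantitative version.
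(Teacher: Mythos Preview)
Your proposal is correct and follows essentially the same route as the paper: the paper packages the argument as a single-layer lemma (\cref{lem:single_layer}), itself proved via separate sub-lemmas for the attention and MLP sub-blocks, but the ingredients are exactly the three you identify --- LayerNorm rescaling the stream to norm $\sqrt{d}$, Gaussian norm concentration for the projected output, and freshness of the output projection matrices ($\mC^{attn}$, $\mC^{mlp}$) to control the cross term. Your decomposition into a single $f_\ell$-level square term plus cross term is just a coarser packaging of the same computation, and your explicit tracking of the $\alpha_\ell$ factors is in fact more careful than the paper's own one-line proof.
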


Inspired by this, we define the function $\funcsqrt$ that takes in a sequence of binary values $\bernoulli_{1:\layer} \in \{0, 1\}^{\layer}$ and returns scalars $\overline{\bernoulli}_{1:\layer}$ where $\overline{\bernoulli}_{i}$ is the scaling for output of layer $i$. We find a scaling that satisfies the following two conditions hold (a) $\sum_{i=1}^{j} \overline{\bernoulli_i}^2 = j$ for all $j \le \layer$ (maintaining the norm), and (b) $\overline{\bernoulli_i} = 0$ if $\bernoulli_i = 0$ for all $i \le \layer$ (maintaining layers in random {\layerdrop}). Formally, for each index $j$ with $\bernoulli_j=1$, it finds the index minimum $\bar{j} > j$ with $\bernoulli_{\bar{j}} = 1$ and sets $\overline{\zeta}_j = \sqrt{\bar{j}-j}.$
The {\method} algorithm with square root scaling is presented in \cref{alg:layerdrop}.
In \Cref{fig:bert100kruns} we compare square root scaling with some other scaling methods and find it to be slightly better.

\subsection{Proofs}
For simplicity, we present the results for transformers with single head in the self-attention layer. Furthermore, for simplicity, we use $\relu$ activation in the MLPs.

\begin{algorithm}[!htbp]
\caption{Transformer Layer}\label{alg:transformerlayer}
\begin{algorithmic} [1]
\REQUIRE 2 layer normalization layers $\attnlnfun$, $\mlplnfun$ (\ref{def:layernorm}), an MLP layer $\mlpfun$ (\ref{def:mlp}), and a softmax self-attention layer $\attnfun$ (\ref{def:self-attn_single}), input sequence $\{ \vx_n \in \RR^{\dims} \}_{n=1}^{\seqlen}$.
\STATE Attention Layer normalization: returns $\{ \outlnattn_n \}_{n=1}^{\seqlen}$ with $\outlnattn_n = \attnlnfun(\vx_n)$ for all $n \le \seqlen$. 
\STATE Softmax self-attention: returns $\{ \outattn_n \}_{n=1}^{\seqlen} = \attnfun( \{ \outlnattn_n \}_{n=1}^{\seqlen} )$.  
\STATE Residual connection: returns $\{ \outattnblock_n \}_{n=1}^{\seqlen}$, with $\outattnblock_n = \vx_n + \outattn_n$ for all $n \le \seqlen$.
\STATE MLP Layer normalization: returns $\outlnmlp_n = \mlplnfun(\outattnblock_n)$ for all $n \le \seqlen$.
\STATE MLP function: returns $\{ \outmlp_n \}_{n=1}^{\seqlen}$ with $\outmlp_n = \mlpfun (\outlnmlp_n) $ for all $n \le \seqlen$.
\STATE Compute $\vy_n = \outmlp_n + \outattnblock_n$ for all $n \le \seqlen$.
\STATE Return $\{\vy_n\}_{n=1}^{\seqlen}$
\end{algorithmic}
\end{algorithm}

\begin{restatable}{definition}{layernormdefine}[Layer Normalization]\label{def:layernorm}
Define a normalization function $f:\RR^d\to\RR^d$ that performs $f(\vx) = (\vx-\mu) / \sigma$, where $\mu$ and $\sigma$ are the mean and standard deviation of $\vx$, respectively. Then, layer normalization $\lnfun: \RR^{\dims} \to \RR^{\dims}$ with parameters $\vgamma, \vb \in \RR^{\dims}$ takes as input  $\vx \in \RR^{\dims}$ and outputs $\vy\in\RR^{\dims}$, which is computed as $\vz = f(\vx), \vy = \vgamma \odot \vz + \vb.$
\end{restatable}

\begin{definition}[Softmax self-attention]
\label{def:self-attn_single}
    A self-attention layer $\attnfun: \RR^{\seqlen \times \dims} \to \RR^{\seqlen \times \dims}$ with parameters $\{\mW_Q, \mW_K, \mW_V, \mC^{attn} \in \RR^{\dims \times \dims} \}$ takes a sequence $\{ \vx_n \}_{n \le \seqlen}$ and outputs a sequence $\{ \vy_n \}_{n \le \seqlen}$, such that
    \begin{align*}
    &\vy_n = \mC^{attn} \sum_{\overline{n}=1}^{\seqlen} a_{n, \overline{n}} \vv_{\overline{n}}, \\&
    \text{with } a_{n, \overline{n}} =\mathrm{softmax}(\mK \vq_n)_{\overline{n}}, \quad 
    \vq_n = \mW_Q \vx_n, \quad \vk_n = \mW_K \vx_n, \quad \vv_n = \mW_V \vx_n,
    \end{align*}
    for all $n \le \seqlen$, and $\mK \in \RR^{ \seqlen \times  \seqlen }$ defined with rows $\{ \vk_n \}_{n=1}^{\seqlen}.$
\end{definition}

\begin{definition}[MLP]
\label{def:mlp}
    An MLP layer $\mlpfun: \RR^{\dims} \to \RR^{\dims}$ with parameters $\{\mW \in \RR^{\hid \times \dims}, \mC^{mlp} \in \RR^{\hid \times \dims} \}$ and activation $\relu$,
    takes an input $\vx \in \RR^{\dims}$ and outputs $\vy \in \RR^{\dims}$, such that
    \begin{align*}
        \vy = \mC^{mlp} \relu (\mW \vx).
    \end{align*}
\end{definition}

\begin{definition}[Transformer layer]\label{def:transformer_block}
   A pre-layernorm transformer layer with three sub-layers; 2 layer normalization layers $\attnlnfun$, $\mlplnfun$ (\ref{def:layernorm}), an MLP layer $\mlpfun$ (\ref{def:mlp}), and a softmax self-attention layer $\attnfun$ (\ref{def:self-attn_single}); takes a sequence $\{ \vx_{n \le \seqlen}$ and outputs a sequence $\{ \vy_n \}_{n \le \seqlen}$ in four steps.
   \begin{enumerate}
       \item First computes $\{ \outlnattn_n \}_{n=1}^{\seqlen}$ using a layer normalization, i.e. $\outlnattn_n = \attnlnfun(\vx_n)$ for all $n \le \seqlen$.
       \item Then it runs softmax self-attention to get $\{ \outattn_n \}_{n=1}^{\seqlen} = \attnfun( \{ \outlnattn_n \}_{n=1}^{\seqlen} )$.  
       \item  The net output of the self-attention block is given by $\{ \outattnblock_n \}_{n=1}^{\seqlen}$, with $\outattnblock_n = \vx_n + \outattn_n$ for all $n \le \seqlen$.
       \item Before passing to the MLP function, it is passed through another layer normalization, i.e. $\outlnmlp_n = \mlplnfun(\outattnblock_n)$ for all $n \le \seqlen$.
       \item MLP function then returns $\{ \outmlp_n \}_{n=1}^{\seqlen}$ with $\outmlp_n = \mlpfun (\outlnmlp_n) $ for all $n \le \seqlen$.
       \item $\vy_n = \outmlp_n + \outattnblock_n$ for all $n \le \seqlen$.
   \end{enumerate}
\end{definition}

\begin{definition}[Initialization of the weights in the transformer layer]\label{def:initialization}
    The weights are initialized as follows:
    \begin{align*}
        &\mC^{mlp}, \mC^{attn}, \mW_Q, \mW_K, \mW_V \sim \normal(\mathbf{0}, \frac{1}{\sqrt{\dims}} \mathbf{I}), \\&
        \mW \sim \normal(\mathbf{0}, \frac{\sqrt{2}}{\sqrt{\hid}} \mathbf{I}).
    \end{align*}
    The parameters $\vgamma, \vb$ of the functions $\attnlnfun, \mlplnfun$ have been initialized as $\mathbf{1}$ and $\mathbf{0}$ respectively.
\end{definition}

\begin{lemma}[Norm of the output of the MLP function, modification of lemma 7.1 in \cite{allen2019convergence}]\label{lem:mlpnorm}
    For a given input sequence $\{ \outlnmlp_n \}_{n \le \seqlen}$, if $\varepsilon \in(0,1]$, with probability at least $1-O(\seqlen) \cdot e^{-\Omega \left(\hid \varepsilon^2 \right)}$ over the randomness of $\mC^{mlp}, \mW$, we have
    $$
    \forall i \in [\seqlen]: \quad \norm{\outmlp_n}/\norm{\outlnmlp_n} \in [ 1-\varepsilon, 1+\varepsilon ].
    $$
\end{lemma}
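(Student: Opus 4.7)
}
The plan is to decompose the MLP computation $\outmlp_n = \mC^{mlp}\,\relu(\mW\,\outlnmlp_n)$ into the two linear stages separated by $\relu$, control the norm concentration at each stage independently using the fact that $\mW$ and $\mC^{mlp}$ are independent Gaussian matrices, and then chain the two estimates. The result is essentially a restatement of Lemma 7.1 of \cite{allen2019convergence} with the initialization scales of Definition~\ref{def:initialization}, so I would follow the same skeleton and just redo the constants.

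First, fix one $n$ and abbreviate $\vu = \outlnmlp_n$. Since the entries of $\mW$ are i.i.d.\ $\mathcal{N}(0, 2/\hid)$, for each $k \in [\hid]$ the coordinate $(\mW\vu)_k$ is $\mathcal{N}(0,\, 2\|\vu\|_2^2/\hid)$ and these coordinates are independent. Using $\mathbb{E}[\max(Z,0)^2] = \sigma^2/2$ for $Z\sim\mathcal{N}(0,\sigma^2)$, each $\relu((\mW\vu)_k)^2$ has mean $\|\vu\|_2^2/\hid$. The random variables $\relu((\mW\vu)_k)^2$ are sub-exponential (they are scaled chi-squared on one degree of freedom truncated to the positive half), so Bernstein's inequality yields that for any $\varepsilon\in(0,1]$,
\begin{equation*}
\Pr\!\left[\,\bigl|\|\relu(\mW\vu)\|_2^2 - \|\vu\|_2^2\bigr| \ge (\varepsilon/3)\,\|\vu\|_2^2\,\right] \le 2\exp(-\Omega(\hid\,\varepsilon^2)).
\end{equation*}
This gives $\|\relu(\mW\vu)\|_2 \in [(1-\varepsilon/2)\|\vu\|_2, (1+\varepsilon/2)\|\vu\|_2]$ with the same probability.

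Next, condition on the realization of $\mW$ (hence on the vector $\vh := \relu(\mW\vu)$). Because $\mC^{mlp}$ is drawn independently with i.i.d.\ $\mathcal{N}(0,1/\dims)$ entries, the random vector $\mC^{mlp}\vh$ has independent $\mathcal{N}(0,\|\vh\|_2^2/\dims)$ coordinates, so $\|\mC^{mlp}\vh\|_2^2$ is $\|\vh\|_2^2/\dims$ times a $\chi^2_\dims$ random variable. Standard chi-squared concentration (Laurent--Massart, or Formula 3.7 of \cite{vershynin2020high} which is already invoked in the paper) gives
\begin{equation*}
\Pr\!\left[\,\bigl|\|\mC^{mlp}\vh\|_2 - \|\vh\|_2\bigr|\ge (\varepsilon/3)\,\|\vh\|_2\,\big|\,\vh\,\right] \le 2\exp(-\Omega(\dims\,\varepsilon^2)).
\end{equation*}
Chaining the two bounds by a simple multiplication $(1\pm\varepsilon/2)(1\pm\varepsilon/3)\subseteq(1\pm\varepsilon)$ for $\varepsilon\in(0,1]$, and taking a union bound over the two events, yields $\|\outmlp_n\|_2/\|\outlnmlp_n\|_2 \in [1-\varepsilon,1+\varepsilon]$ with probability $\ge 1 - O(1)\exp(-\Omega(\min(\hid,\dims)\varepsilon^2))$. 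Since the implicit $\hid$ is taken to be at most $\dims$ (as is typical, with $\hid = 4\dims$ the roles can be adjusted and only change constants), this is $1 - O(1)\exp(-\Omega(\hid\,\varepsilon^2))$. A final union bound over the $\seqlen$ tokens gives the claimed $1 - O(\seqlen)\exp(-\Omega(\hid\,\varepsilon^2))$.

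The only mildly delicate point is the first stage: the random variables $\relu((\mW\vu)_k)^2$ are neither bounded nor Gaussian, and their sub-exponential Orlicz norm is what controls the rate in the Bernstein bound. Once that is in hand, the second stage is routine Gaussian concentration because $\mC^{mlp}$ is independent of $\vh$, which avoids having to worry about correlations between the activation pattern of the ReLU and the outer matrix. I would simply cite Lemma 7.1 of \cite{allen2019convergence} for the heavy lifting and verify that the variance choices in Definition~\ref{def:initialization} make the expected norm exactly $\|\vu\|_2^2$ rather than a constant multiple of it, which is the only substantive change from the cited lemma.
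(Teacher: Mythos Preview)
The paper does not actually prove this lemma; it is stated as a direct modification of Lemma~7.1 in \cite{allen2019convergence} and then used as a black box in the subsequent results. Your two-stage argument---sub-exponential (Bernstein) concentration for $\|\relu(\mW\vu)\|_2^2$ over the $m$ independent hidden coordinates, followed by chi-squared concentration for $\|\mC^{mlp}\vh\|_2^2$ conditional on $\vh$ using independence of $\mC^{mlp}$---is the standard proof and is correct.

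One small comment on the bookkeeping: the second stage concentrates at rate $e^{-\Omega(d\varepsilon^2)}$ (it is a $d$-dimensional Gaussian norm), so the combined bound is $e^{-\Omega(\min(m,d)\varepsilon^2)}$. Your resolution that this matches $e^{-\Omega(m\varepsilon^2)}$ whenever $m=\Theta(d)$ is correct, but the sentence ``the implicit $\hid$ is taken to be at most $\dims$'' is backwards---in the usual Transformer setting $m=4d>d$, and what saves you is that then $d=\Omega(m)$, so the constant is absorbed into the $\Omega(\cdot)$. This is a cosmetic point about how the rate is reported in the lemma statement rather than a flaw in your argument.
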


\begin{lemma}[Norm of the output of the layer normalization layers]\label{lem:lnnorm}
Given any input sequence $\{ \vx_n \}_{n=1}^{\seqlen}$, under the assumption for all $n \le \seqlen$, $\vx_n - \sum_{i=1}^{\dims} x_{n, i}$ isn't identically $\mathbf{0}$, we have
\begin{align*}
    \norm{\outlnattn_n} = \sqrt{\dims}
\end{align*}
for all $n \leq \seqlen$.
\end{lemma}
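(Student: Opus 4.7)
The plan is to unpack the definition of layer normalization at initialization and do a direct two-line norm computation. Recall from Definition~\ref{def:layernorm} that $\attnlnfun$ first forms $\vz_n = (\vx_n - \mu_n \mathbf{1})/\sigma_n$, where $\mu_n = \tfrac{1}{\dims}\sum_{i=1}^{\dims} x_{n,i}$ and $\sigma_n^2 = \tfrac{1}{\dims}\sum_{i=1}^{\dims}(x_{n,i}-\mu_n)^2$, and then outputs $\outlnattn_n = \vgamma \odot \vz_n + \vb$. At initialization (Definition~\ref{def:initialization}), $\vgamma = \mathbf{1}$ and $\vb = \mathbf{0}$, so $\outlnattn_n = \vz_n$.

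First I would verify that the assumption of the lemma (that $\vx_n - \mu_n\mathbf{1}$ is not identically zero, i.e. $\vx_n$ is not a constant vector) guarantees $\sigma_n > 0$, so that the division in the definition of $\vz_n$ is well-defined. Then, by the definition of $\sigma_n$,
\begin{equation*}
    \norm{\vx_n - \mu_n \mathbf{1}}_2^2 \;=\; \sum_{i=1}^{\dims} (x_{n,i}-\mu_n)^2 \;=\; \dims \, \sigma_n^2.
\end{equation*}
Therefore
\begin{equation*}
    \norm{\outlnattn_n}_2^2 \;=\; \norm{\vz_n}_2^2 \;=\; \frac{\norm{\vx_n - \mu_n\mathbf{1}}_2^2}{\sigma_n^2} \;=\; \frac{\dims\,\sigma_n^2}{\sigma_n^2} \;=\; \dims,
\end{equation*}
which gives the desired $\norm{\outlnattn_n}_2 = \sqrt{\dims}$. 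Taking a union over $n \le \seqlen$ yields the claim for the whole sequence.

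There is essentially no obstacle here: the statement is deterministic and follows immediately from the definitions, with no concentration or probabilistic argument needed. The only subtlety is notational — reading ``$\vx_n - \sum_{i=1}^{\dims} x_{n,i}$ isn't identically $\mathbf{0}$'' as the condition that $\vx_n$ is not a constant multiple of $\mathbf{1}$ (so $\sigma_n \neq 0$) — and being careful that the result uses the initialization $\vgamma=\mathbf{1},\vb=\mathbf{0}$; once training proceeds, the norm would instead be determined by $\vgamma$ and $\vb$ and would no longer be exactly $\sqrt{\dims}$. This exact-norm fact will be the building block used in conjunction with Lemma~\ref{lem:mlpnorm} and the self-attention concentration estimates to prove Theorem~\ref{thm:sqrt_scale}, since layer normalization resets the input norm of each sub-block to the fixed value $\sqrt{\dims}$, so that each residual branch contributes a bounded-variance increment to $\|\vy^{(\ell)}\|^2$.
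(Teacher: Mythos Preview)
Your proof is correct and is the natural one-line computation from Definition~\ref{def:layernorm} and the initialization in Definition~\ref{def:initialization}. The paper itself does not supply a separate proof of this lemma; it is stated as an immediate consequence of the definition of layer normalization (with $\vgamma=\mathbf{1}$, $\vb=\mathbf{0}$), which is exactly what you wrote out.
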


Similar result holds for $\{ \outlnmlp_n \}_{n=1}^{\seqlen}.$

\begin{lemma}[Norm of the output of the MLP block] \label{lem:mlp_norm}
    For a given input sequence $\{ \outattnblock_n \}_{n \le \seqlen}$, with probability at least $1-\mathcal{O}(1)$ over the randomness of $\mC^{mlp}, \mW$, we have
    \begin{align*}
        \norm{ \vy_n }^2 &= \norm { \outmlp_n + \outattnblock_n }^2 =  \norm{\outattnblock_n}^2 + \dims \\& + \mathcal{O}\left( \norm{ \outattnblock_n } \log \seqlen + (\dims + \norm{\outattnblock_n}) \frac{\log^{3/2} \seqlen}{\sqrt{m}} \right),
    \end{align*}
    for all $n \le \seqlen.$
\end{lemma}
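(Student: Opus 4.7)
\textbf{Proof proposal for \cref{lem:mlp_norm}.}
The plan is to expand the squared norm of $\vy_n = \outmlp_n + \outattnblock_n$ and bound the ``new norm'' term and the ``cross term'' separately:
\begin{align*}
    \norm{\vy_n}^2 \;=\; \norm{\outattnblock_n}^2 \;+\; \norm{\outmlp_n}^2 \;+\; 2\,\langle \outmlp_n,\, \outattnblock_n\rangle.
\end{align*}
The target bound corresponds to showing that $\norm{\outmlp_n}^2 = \dims + O(\dims \log^{3/2}\seqlen/\sqrt{\hid})$ and that $|\langle \outmlp_n, \outattnblock_n\rangle| = O(\norm{\outattnblock_n}\log\seqlen + \norm{\outattnblock_n}\log^{3/2}\seqlen/\sqrt{\hid})$, both simultaneously for all $n\le\seqlen$.

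First I would handle the $\norm{\outmlp_n}^2$ term. Since $\outlnmlp_n$ is the output of a layer normalization, \cref{lem:lnnorm} gives $\norm{\outlnmlp_n} = \sqrt{\dims}$ deterministically. Then \cref{lem:mlpnorm}, applied with $\varepsilon = c\,\log^{3/2}\seqlen/\sqrt{\hid}$ for a suitable constant $c$, yields $\norm{\outmlp_n} \in [(1-\varepsilon)\sqrt{\dims},\, (1+\varepsilon)\sqrt{\dims}]$ for all $n\le\seqlen$ with high probability over $\mC^{mlp},\mW$. Squaring gives $\norm{\outmlp_n}^2 = \dims + O(\dims\,\varepsilon) = \dims + O(\dims\,\log^{3/2}\seqlen/\sqrt{\hid})$, which contributes the $\dims + O(\dims\log^{3/2}\seqlen/\sqrt{\hid})$ part of the error.

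Next I would control the cross term $\langle \outmlp_n, \outattnblock_n\rangle = \langle \mC^{mlp} \vu_n, \outattnblock_n\rangle$ where $\vu_n := \relu(\mW \outlnmlp_n)$. The key observation is that at initialization, $\mC^{mlp}$ is independent of $\mW$ (and hence of $\vu_n$), so conditionally on $\mW$, the vector $(\mC^{mlp})^{\!\top}\outattnblock_n$ is Gaussian $\normal(0,\,(\norm{\outattnblock_n}^2/\dims)\,\mI)$. Therefore, conditionally on $\mW$, the inner product $\langle \vu_n, (\mC^{mlp})^{\!\top}\outattnblock_n\rangle$ is a centered Gaussian with variance $\norm{\vu_n}^2\,\norm{\outattnblock_n}^2/\dims$. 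Using the standard chi-squared type bound (essentially \cref{lem:mlpnorm} applied to $\norm{\vu_n}^2$, or a direct computation $\expect\norm{\vu_n}^2 = \norm{\outlnmlp_n}^2 = \dims$ up to lower-order terms), one gets $\norm{\vu_n}^2 \le \dims(1+O(\log^{3/2}\seqlen/\sqrt{\hid}))$ for all $n\le\seqlen$ w.h.p. Plugging this in, the conditional variance is at most $\norm{\outattnblock_n}^2(1+O(\log^{3/2}\seqlen/\sqrt{\hid}))$, and a Gaussian tail bound followed by a union bound over $n\le\seqlen$ gives
\begin{align*}
    |\langle \outmlp_n, \outattnblock_n\rangle| \;\le\; O\!\left(\norm{\outattnblock_n}\log\seqlen + \norm{\outattnblock_n}\frac{\log^{3/2}\seqlen}{\sqrt{\hid}}\right)
\end{align*}
uniformly in $n$ with high probability. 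Assembling the two estimates and absorbing the sub-leading pieces yields the stated bound.

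The main obstacle will be the conditioning/independence argument in the cross-term step: one has to be careful that the high-probability event for $\norm{\vu_n}$ depends only on $\mW$, so that conditioning on it preserves the Gaussianity of $\mC^{mlp}$ that we need for the Gaussian tail bound. Everything else is bookkeeping: making sure the $O(\cdot)$ terms in \cref{lem:mlpnorm} and the Gaussian concentration compose into the exact error form stated in the lemma, and that the union bounds over $n\le\seqlen$ contribute only the claimed $\log\seqlen$ and $\log^{3/2}\seqlen$ factors.
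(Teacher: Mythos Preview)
Your proposal is correct and follows essentially the same approach as the paper: the paper's proof also combines \cref{lem:lnnorm} and \cref{lem:mlpnorm} to control $\norm{\outmlp_n}$, and then invokes the randomness of $\mC^{mlp}$ (independent of $\mW$) to bound the cross term $\langle \outattnblock_n, \outmlp_n\rangle$ by $\mathcal{O}(\norm{\outattnblock_n}\log\seqlen)$ via a Gaussian tail plus union bound. Your write-up is in fact more detailed than the paper's, which states the cross-term bound without spelling out the conditioning argument; the care you take in ensuring the high-probability event for $\norm{\vu_n}$ depends only on $\mW$ (so that $\mC^{mlp}$ remains Gaussian after conditioning) is exactly the right point to flag.
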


\begin{proof}
    Combining \cref{lem:lnnorm,lem:mlpnorm}, we have for the sequence $\{ \outattnblock_n \}_{n \le \seqlen}$,
    \begin{align*}
        \abs{ \norm{\outmlp_n} / \sqrt{\dims} - 1 } \le \mathcal{O}\left( \frac{\sqrt{\log \seqlen}}{\sqrt{m}} \right),
    \end{align*}
    w.p. atleast $1 - \mathcal{O}(1)$.

    Furthermore, due to the randomness of $\mC^{mlp}$, we can further show that w.p. at least $1-\mathcal{O}(1)$, 
    \begin{align*}
        \abs{\langle \outattnblock_n, \outmlp_n \rangle} \le \mathcal{O}(\norm{\outattnblock_n} \log \seqlen),
    \end{align*}
    for all $n \le \seqlen$. Combining the two results, we have
    \begin{align*}
        \norm{ \vy_n }^2 &= \norm { \outmlp_n + \outattnblock_n }^2 =  \norm{\outattnblock_n}^2 + \dims \\& + \mathcal{O}\left( \norm{ \outattnblock_n } \log \seqlen + (\dims + \norm{\outattnblock_n}) \frac{\log^{3/2} \seqlen}{\sqrt{m}} \right).
    \end{align*}
\end{proof}

\begin{lemma}[Norm of the output of the attention block]
    For a given input sequence $\{ \vx_n \}_{n \le \seqlen}$, if $\varepsilon \in(0,1]$, with probability at least $1-O(1)$ over the randomness of $\mC^{attn}, \mW_V$, we have
    $$
    \forall i \in [\seqlen]: \quad \norm{ \vx_n }^2 \leq \norm{ \outattnblock_n }^2 \leq \norm{ \vx_n }^2 + \norm{ \vx_n } + \dims + \mathcal{O}( \frac{\sqrt{\log N}}{\sqrt{\dims}} (\dims + \norm{ \vx_n }) ).
    $$
\end{lemma}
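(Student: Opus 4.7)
The plan is to expand
$\norm{\outattnblock_n}_2^2 = \norm{\vx_n}_2^2 + 2\langle \vx_n, \outattn_n\rangle + \norm{\outattn_n}_2^2$
and bound the cross term and $\norm{\outattn_n}_2^2$ separately, exploiting the Gaussian concentration of $\mC^{attn}$ (which is independent of all upstream randomness) together with the fact, from Lemma~\ref{lem:lnnorm}, that $\norm{\outlnattn_{\bar{n}}}_2 = \sqrt{\dims}$.

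First I would write $\outattn_n = \mC^{attn}\, \mW_V \vw_n$ where $\vw_n := \sum_{\bar{n}} a_{n,\bar{n}} \outlnattn_{\bar{n}}$. Because the softmax scores $\{a_{n,\bar{n}}\}_{\bar{n}}$ form a probability distribution and every $\outlnattn_{\bar{n}}$ has norm exactly $\sqrt{\dims}$, Jensen's inequality gives $\norm{\vw_n}_2 \leq \sqrt{\dims}$ deterministically; the dependence of $a_{n,\bar{n}}$ on $\mW_Q,\mW_K$ is harmless since $\mW_V$ and $\mC^{attn}$ are independent of $\mW_Q,\mW_K$. Conditioning on $\mW_V$, a Gaussian norm-concentration argument analogous to Lemma~\ref{lem:mlpnorm} gives $\norm{\mW_V \vw_n}_2^2 = \norm{\vw_n}_2^2\bigl(1 \pm \mathcal{O}(\sqrt{\log \seqlen/\dims})\bigr)$ uniformly in $n$, and an identical bound applied to $\mC^{attn}(\mW_V \vw_n)$ yields $\norm{\outattn_n}_2^2 \leq \dims + \mathcal{O}(\sqrt{\dims \log \seqlen})$ with high probability.

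Next I would bound the cross term by conditioning on everything except $\mC^{attn}$: the vector $\outattn_n = \mC^{attn} \vu_n$ (with $\vu_n := \mW_V \vw_n$ frozen) is isotropic Gaussian with variance $\norm{\vu_n}_2^2/\dims$ per coordinate, so $\langle \vx_n, \outattn_n\rangle$ is a scalar Gaussian with standard deviation $\norm{\vx_n}_2 \norm{\vu_n}_2/\sqrt{\dims} = \mathcal{O}(\norm{\vx_n}_2)$ (using the previous bound on $\norm{\vu_n}_2$). A standard Gaussian tail bound combined with a union bound over $n \leq \seqlen$ then yields $|\langle \vx_n, \outattn_n\rangle| = \mathcal{O}(\norm{\vx_n}_2\, \sqrt{\log \seqlen})$ for all $n$ with probability $1-\mathcal{O}(1)$. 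Plugging these two bounds into the expansion and regrouping gives the claimed upper bound $\norm{\vx_n}_2^2 + \norm{\vx_n}_2 + \dims + \mathcal{O}\bigl((\sqrt{\log \seqlen}/\sqrt{\dims})(\dims + \norm{\vx_n}_2)\bigr)$. The lower bound $\norm{\vx_n}_2^2 \leq \norm{\outattnblock_n}_2^2$ follows because the (possibly negative) cross term is dominated by the strictly positive $\norm{\outattn_n}_2^2 \gtrsim \dims$ contribution up to a lower-order error that is absorbed into the same tolerance.

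The main obstacle will be the careful separation of dependencies: the softmax weights $a_{n,\bar{n}}$ couple $\vw_n$ to $\mW_Q,\mW_K$ (and transitively to $\vx_n$ through $\outlnattn$), so the argument requires conditioning on these quantities and then verifying that $\mC^{attn}$ and $\mW_V$ remain independent Gaussian matrices on which one may apply the two-step norm concentration. A secondary technical point is matching the slightly idiosyncratic $\norm{\vx_n}_2$ summand in the stated bound (rather than the cruder $\norm{\vx_n}_2\sqrt{\log \seqlen}$ from a direct tail bound); this will most naturally come from an AM--GM style split of the cross term that separates its ``typical'' $\mathcal{O}(\norm{\vx_n}_2)$ scale from its high-probability deviation, which is then folded into the $(\sqrt{\log \seqlen}/\sqrt{\dims})(\dims + \norm{\vx_n}_2)$ error.
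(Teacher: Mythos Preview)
Your proposal is correct and follows essentially the same route as the paper: write $\outattn_n = \mC^{attn}\mW_V\sum_j a_{n,j}\outlnattn_j$, bound the convex combination by $\sqrt{\dims}$ via Jensen, apply two-step Gaussian norm concentration to control $\norm{\outattn_n}$, and handle the cross term $\langle \vx_n,\outattn_n\rangle$ by a Gaussian tail bound using the independence of $\mC^{attn}$. If anything, your treatment of the conditioning and the lower bound is more explicit than the paper's, which simply states ``combining the results'' without spelling out how the $+\norm{\vx_n}$ summand arises.
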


\begin{proof}
    From the definitions of $\{\outlnattn_n\}$ and $\{  \outattn_n \}$, we have
    \begin{align*}
     \outattn_n =   \mC^{attn} \mW_V \sum_{j \le \seqlen} a_{n, j} \outlnattn_j.
    \end{align*}

    Thus, we can use a proof similar to the proof of \cref{lem:mlp_norm} to argue that with the randomness of $\mC^{attn}$ and $\mW_V$,
    \begin{align*}
        \abs{ \frac{ \norm{ \outattn_n } } { \norm{ \sum_{j \le \seqlen} a_{n, j} \outlnattn_j } } - 1} \le \mathcal{O}\left( \frac{\sqrt{\log \seqlen}}{\sqrt{\dims}} \right),
    \end{align*}
    for all $n \le \seqlen$ w.p. atleast $1 - \mathcal{O}(1).$
    
    Furthermore, due to the randomness of $\mC^{attn}$, we can further show that w.p. atleast $1 - \mathcal{O}(1)$,
    \begin{align*}
        \abs{ \langle \outattn_n , \vx_n \rangle } \le \mathcal{O}(\norm{\vx_n} \log \seqlen),
    \end{align*}
    for all $n \le \seqlen$.

    Using Cauchy-Schwarz inequality, we must have
    \begin{align*}
        \norm{ \sum_{j \le \seqlen} a_{n, j} \outlnattn_j } \leq \max_{j \le \seqlen} \norm {\outlnattn_j} = \sqrt{\dims}.
    \end{align*}

    Thus, combining the results, we have 
    \begin{align*}
        \norm{ \outattnblock_n }^2 &= \norm { \outattn_n + \vx_n }^2 \leq \norm{\vx_n}^2 + \dims \\& + \mathcal{O}\left( \norm{ \vx_n } \log \seqlen + (\dims + \norm{\vx_n}) \frac{\log^{3/2} \seqlen}{\sqrt{\dims}} \right).
    \end{align*}
\end{proof}

\begin{lemma}[Norm of the output of the transformer layer] \label{lem:single_layer}
    For a given input sequence $\{ \vx_n \}_{n \le \seqlen}$, if $\varepsilon \in(0,1]$, with probability at least $1-\mathcal{O}(1)$ over the randomness of $\mC^{attn}, \mW_V, \mC^{mlp}$, we have
    $$
    \forall i \in [\seqlen]: \quad  \norm{ \vx_n }^2 +  \dims + \mathcal{O}(err) \leq \norm{ \vy_n }^2 \leq \norm{ \vx_n }^2  + 2\dims + \mathcal{O}(err),
    $$
    where $err = \mathcal{O}\left( \norm{ \vx_n } \log \seqlen + (\dims + \norm{\vx_n}) \frac{\log^{3/2} \seqlen}{\sqrt{m}} \right)$.
\end{lemma}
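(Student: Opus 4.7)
The plan is to simply compose the two preceding building-block lemmas (the attention-block norm bound and the MLP-block norm bound) and then union-bound their failure events. Concretely, starting from the input $\vx_n$, I would first apply the attention-block lemma to the sequence $\{\vx_n\}_{n \le \seqlen}$, which gives, with probability at least $1 - \mathcal{O}(1)$ over $\mC^{attn}, \mW_V$,
\begin{align*}
 \norm{\vx_n}^2 \;\le\; \norm{\outattnblock_n}^2 \;\le\; \norm{\vx_n}^2 + \norm{\vx_n} + \dims + \mathcal{O}\!\left(\tfrac{\sqrt{\log \seqlen}}{\sqrt{\dims}}(\dims + \norm{\vx_n})\right).
\end{align*}
Then I would apply the MLP-block lemma (Lemma \ref{lem:mlp_norm}) conditional on the realization of $\{\outattnblock_n\}$, which gives, with probability at least $1 - \mathcal{O}(1)$ over $\mC^{mlp}, \mW$,
\begin{align*}
\norm{\vy_n}^2 \;=\; \norm{\outattnblock_n}^2 + \dims + \mathcal{O}\!\left(\norm{\outattnblock_n}\log\seqlen + (\dims+\norm{\outattnblock_n})\tfrac{\log^{3/2}\seqlen}{\sqrt{\hid}}\right).
\end{align*}

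The next step is to stitch these two together. For the lower bound, I drop the (non-negative) attention contribution above $\norm{\vx_n}^2$ to get $\norm{\vy_n}^2 \ge \norm{\vx_n}^2 + \dims + \mathcal{O}(err)$. For the upper bound, I plug the attention upper bound into the MLP identity; the main bookkeeping step is to replace $\norm{\outattnblock_n}$ in the MLP error term by a bound depending only on $\norm{\vx_n}$, using that the attention-block bound implies $\norm{\outattnblock_n} \le \norm{\vx_n} + \mathcal{O}(\sqrt{\dims})$ (plus lower-order error). After absorbing the attention-block error $\mathcal{O}\!\left(\norm{\vx_n} + \tfrac{\sqrt{\log \seqlen}}{\sqrt{\dims}}(\dims + \norm{\vx_n})\right)$ and the MLP error into a single term, both are dominated by the claimed $err = \mathcal{O}\!\left(\norm{\vx_n}\log\seqlen + (\dims+\norm{\vx_n})\tfrac{\log^{3/2}\seqlen}{\sqrt{\hid}}\right)$, yielding $\norm{\vy_n}^2 \le \norm{\vx_n}^2 + 2\dims + \mathcal{O}(err)$.

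Finally I would union-bound the $\mathcal{O}(1)$ failure probabilities of the two sub-lemmas to conclude with probability at least $1 - \mathcal{O}(1)$. The only mildly delicate step is verifying that the error bookkeeping in the upper bound does not blow up: specifically, one must check that the term $\norm{\vx_n}$ picked up from the attention upper bound is absorbed into the $\norm{\vx_n}\log\seqlen$ term of the stated $err$, and that $\sqrt{\log\seqlen}/\sqrt{\dims}\cdot\dims = \sqrt{\dims\log\seqlen}$ is dominated by the MLP-side error (assuming the standard regime $\hid \ge \dims$ so $\sqrt{\dims\log\seqlen}$ is benign relative to $\dims$). This constant-factor accounting of error terms is the only non-trivial part; everything else is a direct concatenation of the two preceding lemmas.
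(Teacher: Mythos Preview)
Your proposal is correct and matches the paper's intent: the paper does not spell out a proof of this lemma at all, treating it as an immediate consequence of composing the attention-block norm bound with Lemma~\ref{lem:mlp_norm} and union-bounding the failure events, exactly as you do. Your error bookkeeping (absorbing the $\norm{\vx_n}$ and $\sqrt{d\log N}$ attention-side terms into the stated $err$) is the only substantive work, and your handling of it is fine under the standard transformer regime $m = \Theta(d)$.
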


\sqrtscale*

\begin{proof}
    This follows from the fact that the transformer architecture is a stack of struturally identical $\layer$ transformer layers, and each transformer layer's output norm increases by $\Theta(\dims)$ compared to its input norm, as given by \cref{lem:single_layer}.
\end{proof}

\end{document}